\def\eqref#1{(\ref{#1})}
\def\floor#1{\lfloor #1 \rfloor}
\def\1{\bm{1}}
\def\va{{\bm{a}}}
\def\vb{{\bm{b}}}
\def\vx{{\bm{x}}}
\def\vy{{\bm{y}}}
\def\vz{{\bm{z}}}
\def\mA{{\bm{A}}}
\def\mG{{\bm{G}}}
\def\mI{{\bm{I}}}
\def\mW{{\bm{W}}}
\def\mX{{\bm{X}}}
\def\mZ{{\bm{Z}}}
\DeclareMathAlphabet{\mathsfit}{\encodingdefault}{\sfdefault}{m}{sl}
\SetMathAlphabet{\mathsfit}{bold}{\encodingdefault}{\sfdefault}{bx}{n}
\newtheorem{theorem}{Theorem}
\newtheorem{proposition}{Proposition}
\newtheorem{lemma}{Lemma}
\newtheorem{remark}{Remark}
\newtheorem{assumption}{Assumption}
\newcommand{\proposed}{\textsc{Teleportation}}
\newcommand{\yuki}[1]{{\color{black}#1}}
\title{Scalable Decentralized Learning with Teleportation}
\author{Yuki Takezawa\thanks{This work was done while YT visited the CISPA Helmholtz Center for Information Security.} \\
Kyoto University, OIST \\
\And
Sebastian U. Stich \\
CISPA Helmholtz Center for Information Security
}
\newcommand{\algorithmfootnote}[2][\footnotesize]{%
  \let\old@algocf@finish\@algocf@finish
  \def\@algocf@finish{\old@algocf@finish
    \leavevmode\rlap{\begin{minipage}{\linewidth}
    #1#2
    \end{minipage}}%
  }%
}
\begin{document}

\maketitle


\begin{abstract}
Decentralized SGD can run with low communication costs, but its sparse communication characteristics deteriorate the convergence rate, especially when the number of nodes is large.
In decentralized learning settings, communication is assumed to occur on only a given topology, while in many practical cases, the topology merely represents a preferred communication pattern, and connecting to arbitrary nodes is still possible.
Previous studies have tried to alleviate the convergence rate degradation in these cases by designing topologies with large spectral gaps.
However, the degradation is still significant when the number of nodes is substantial.
In this work, we propose \proposed{}.
\proposed{} activates only a subset of nodes, and the active nodes fetch the parameters from previous active nodes.
Then, the active nodes update their parameters by SGD and perform gossip averaging on a relatively small topology comprising only the active nodes.
We show that by activating only a proper number of nodes, \proposed{} can completely alleviate the convergence rate degradation.
Furthermore, we propose an efficient hyperparameter-tuning method to search for the appropriate number of nodes to be activated.
Experimentally, we showed that \proposed{} can train neural networks more stably and achieve higher accuracy than Decentralized SGD.
\end{abstract}

\section{Introduction}
\label{sec:introduction}

Distributed learning has emerged as an important paradigm for privacy preservation and large-scale machine learning.
In centralized approaches, such as federated learning \citep{mcmahan2017communication,kariouz2019advance} and All-Reduce, nodes update their parameters by using their local dataset and then compute the average parameters of these nodes.
Since computing the average of many nodes incurs huge communication costs, communication is the major bottleneck in the training time.
In reducing communication costs, decentralized learning has attracted significant attention
\citep{lian2017can,lian2018asynchronous,assran2019stochastic,koloskova2020unified}.
In decentralized learning, a node exchanges parameters with its few neighboring nodes and then computes their weighted average to approximate the average of all nodes.
This procedure is called gossip averaging.
Since each node only needs to communicate with a few neighboring nodes, decentralized learning is more communication efficient than centralized counterparts \citep{lian2017can,assran2019stochastic,ying2021exponential}.


While decentralized learning can run with low communication costs, 
the degradation of the convergence rate due to its sparse communication characteristics is a trade-off 
\citep{koloskova2020unified}.
Especially, when the number of nodes is large, the parameters held by each node are likely to be far away during the training, and gossip averaging deteriorates the convergence rate.
This makes it challenging for decentralized learning to scale to a large number of nodes.

In decentralized learning literature, it is commonly assumed that communication can only occur on a given topology. 
However, in many practical cases, the topology merely represents a preferred communication pattern, and connecting to arbitrary nodes is still possible, e.g., in a data center or the case where nodes are connected on the Internet.
Since we can choose which topology to use in these cases, many prior studies designed topologies to reconcile the communication efficiency and convergence rate, attempting to alleviate the degradation of the convergence rate caused by a large number of nodes \citep{wang2019matcha,ying2021exponential,ding2023decentralized,takezawa2023beyond}.
Specifically, the communication costs are determined by the maximum degree of the topology, and the convergence rate is determined by its spectral gap \citep{wang2019matcha,koloskova2020unified}.
Thus, the prior studies designed topologies with large spectral gaps and small maximum degrees and proposed performing gossip averaging on these topologies.
However, the approximate average estimated by gossip averaging deviates from the exact average of all nodes as the number of nodes increases, and the convergence rate remains to degrade when the number of nodes is substantial.

In this study, we ask the following question: \textit{Can we develop a decentralized learning method whose convergence rate does not degrade as the number of nodes increases?}
Our work yielded an affirmative answer with the proposal of \proposed{}.
\proposed{} can completely eliminate the degradation of the convergence rate caused by increasing the number of nodes, and the convergence rate is consistently improved as the number of nodes increases.
Specifically, \proposed{} activates only a subset of nodes and initializes their parameters with the parameters of previous active nodes.
Then, the active node updates its parameters by SGD and performs gossip averaging on a relatively small topology comprising only the active nodes.
By activating only a proper number of nodes, gossip averaging can make the parameters of each node reach the consensus fast, and the parameters of each node can be prevented from being far away even when the total number of nodes is large.
We show that by activating an appropriate number of nodes, the degradation of the convergence rate can be completely alleviated. 
Furthermore, we propose an efficient hyperparameter-tuning method to search for the proper number of active nodes.
For an arbitrary number of nodes, the proposed hyperparameter-tuning method can find the appropriate number of nodes to be activated within $2T$ iterations in total, where $T$ is the number of iterations to run \proposed{} once.
Experimentally, we demonstrated that \proposed{} can converge faster than Decentralized SGD and train neural networks more stably.

\paragraph{Note:}
In this work, we consider a setting where any two nodes can exchange parameters as required, similar to previous studies on topology design for decentralized learning \citep{marfoq2020throughput,ying2021exponential,song2022communication,takezawa2023beyond,ding2023decentralized}.
A data center is an example, and \citet{assran2019stochastic} demonstrated that decentralized methods can train neural networks faster than All-Reduce.
In addition to a data center, when nodes are connected to the Internet, any two nodes can communicate by specifying the IP address.
\proposed{} is not applicable in the presence of a pair of nodes that cannot communicate, as in wireless sensor networks.

\paragraph{Notation:}
A graph $\mathcal{G}$ is represented by $(V, E)$ where $V$ is a set of nodes and $E$ is a set of edges. For simplicity, we also write $(\{1, \cdots, n\}, E)$ to represent a graph $\mathcal{G}$ where $n$ is the number of nodes.
$\mI_d$ denotes the $d$-dimentional identify matrix, $\mathbf{0}$ denotes a vector with all zeros, and $\mathbf{1}$ denotes a vector with all ones.


\section{Preliminary}
\paragraph{Problem Setting:}
We consider the following problem where the loss functions are distributed among $n$ nodes:
\begin{align}
\label{eq:problem}
    \min_{\vx \in \mathbb{R}^d} \biggl[ f (\vx) \coloneqq \frac{1}{n} \sum_{i=1}^n f_i (\vx)\biggr], \;\; f_i (\vx) \coloneqq \mathbb{E}_{\xi_i \sim \mathcal{D}_i} \left[ F_i (\vx ; \xi_i) \right],
\end{align}
where $\vx$ is a model parameter, $\mathcal{D}_i$ is a training dataset that the $i$-th node has, $\xi_i$ is a date sample following $\mathcal{D}_i$, and the local loss function $f_i : \mathbb{R}^d \rightarrow \mathbb{R}$ is defined as the expectation of $F_i (\cdot; \xi_i)$ over data sample $\xi_i$.
Following previous works \citep{koloskova2020decentralized,yuan2021decentlam,lin2021quasi}, we assume that the loss functions satisfy the following assumptions.

\begin{assumption}
\label{assumption:lower_bound}
There exists $f^\star > - \infty$ that satisfies $f (\vx) \geq f^\star$ for any $\vx \in \mathbb{R}^d$.
\end{assumption}

\begin{assumption}
\label{assumption:smooth}
There exists $L \geq 0$ that satisfies for any $\vx, \vy \in \mathbb{R}^d$ and $i \in \{1, 2, \cdots, n\}$,
\begin{align}
    \| \nabla f_i (\vx) - \nabla f_i (\vy) \| \leq L \| \vx - \vy \|.
\end{align}
\end{assumption}

\begin{assumption}
\label{assumption:stochastic_noise}
There exists $\sigma \geq 0$ that satisfies for any $\vx \in \mathbb{R}^d$ and $i \in \{1, 2, \cdots, n\}$,
\begin{align}
    \mathbb{E} \| \nabla F_i (\vx ; \xi_i) - \nabla f_i (\vx) \|^2 \leq \sigma^2.
\end{align}
\end{assumption}

\begin{assumption}
\label{assumption:heterogeneity}
There exists $\zeta \geq 0$ that satisfies for any $\vx \in \mathbb{R}^d$,
\begin{align}
    \frac{1}{n} \sum_{i=1}^n \| \nabla f_i (\vx) - \nabla f (\vx) \|^2 \leq \zeta^2.
\end{align}
\end{assumption}

\paragraph{Decentralized SGD:}
\label{sec:decentralized_sgd}
One of the most fundamental decentralized learning methods to solve Eq.~\eqref{eq:problem} is Decentralized SGD \citep{lian2017can}.
Let $\vx_i \in \mathbb{R}^d$ denote the $i$-th node parameter.
Once the topology comprising $n$ nodes $\mathcal{G}_n = (\{1, \cdots, n\}, E)$ is specified, 
Decentralized SGD updates $\vx_i$ as follows:
\begin{align*}
    \vx_i^{(t+1)} = \sum_{j=1}^n W_{ij} \left( \vx_j^{(t)} - \eta \nabla F_j (\vx_j^{(t)} ; \xi_j^{(t)}) \right),
\end{align*}
where $\eta > 0$ is the step size and $W_{ij} \in [0, 1]$ is the weight of edge $(i,j) \in E$ that satisfies $\sum_{i=1}^n W_{ij} = \sum_{j=1}^n W_{ij} = 1$.
The following assumption is commonly used to represent how well the topology $\mathcal{G}_n$ is connected \citep{yuan2021decentlam,koloskova2020unified,koloskova2021an,aketi2023global}.
\begin{assumption}
\label{assumption:spectral_gap_with_n_nodes}
Let $\lambda_i$ is the $i$-th largest eigenvalue of $\mW \in [0,1]^{n \times n}$. 
$\mW$ is a doubly stochastic matrix, and $p_n \coloneqq 1 - \max \{| \lambda_2|, | \lambda_n | \}^2$ satisfies $p_n \in (0,1]$. That is, it holds that
\begin{align*}
    \left\| \mX \mW - \bar{\mX} \right\|^2_F \leq (1 - p_n) \left\| \mX - \bar{\mX} \right\|^2_F,
\end{align*}
for any $\mX \in \mathbb{R}^{d \times n}$ where $\bar{\mX} \coloneqq \tfrac{1}{n} \mX \mathbf{1}\mathbf{1}^\top$.
\end{assumption}

We write $p_n$ to emphasize that $p_n$ depends on $n$.
A small $p_n$ indicates that the topology is poorly connected, and a large $p_n$ indicates that the topology is well connected.
Generally, $p_n$ approaches zero as the number of node $n$ increases.
For instance, when a ring is used as the underlying topology, $p_n$ is $ \Omega(n^{-2})$,
which rapidly decreases as $n$ increases \citep{nedic2018network}.
Under these assumptions, Decentralized SGD converges with the following rate.


\begin{proposition}[\citet{koloskova2020unified}]
\label{theorem:decentralized_sgd}
Suppose that Assumptions \ref{assumption:lower_bound}, \ref{assumption:smooth}, \ref{assumption:stochastic_noise}, \ref{assumption:heterogeneity}, and \ref{assumption:spectral_gap_with_n_nodes} hold.
Let $\{ \{ \vx_i^{(t)} \}_{i=1}^n \}_{t=0}^T$ denote the parameters generated by Decentralized SGD. 
Then, there exists the step size $\eta$ that satisfies:
\begin{align}
\label{eq:rate_of_decentralized_sgd}
    \frac{1}{T+1} \sum_{t=0}^T \mathbb{E} \| \nabla f (\bar{\vx}^{(t)})\|^2
    \leq \mathcal{O} \left( \sqrt{\frac{L r_0 \sigma^2}{n T}} + \left( \frac{L^2 r_0^2 (p_n \sigma^2 + \zeta^2) (1-p_n)}{T^2 p_n^2} \right)^\frac{1}{3} + \frac{L r_0}{T p_n} \right),
\end{align}
where $r_0 \coloneqq f(\bar{\vx}^{(0)}) - f^\star$ and $\bar{\vx}^{(t)} \coloneqq \tfrac{1}{n} \sum_{i=1}^n \vx_i^{(t)}$.
\end{proposition}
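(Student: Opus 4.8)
The plan is to establish the convergence rate in Proposition~\ref{theorem:decentralized_sgd} by following the now-standard analysis of gossip-based SGD as unified in \citet{koloskova2020unified}. First I would introduce the averaged iterate $\bar{\vx}^{(t)} = \tfrac1n \sum_i \vx_i^{(t)}$ and write down its one-step recursion: since $\mW$ is doubly stochastic, averaging commutes with the gossip step, so $\bar{\vx}^{(t+1)} = \bar{\vx}^{(t)} - \eta \cdot \tfrac1n \sum_i \nabla F_i(\vx_i^{(t)};\xi_i^{(t)})$. Using $L$-smoothness (Assumption~\ref{assumption:smooth}) on $f$ along the sequence $\bar{\vx}^{(t)}$, I would obtain the descent inequality
\begin{align*}
\mathbb{E} f(\bar{\vx}^{(t+1)}) \le \mathbb{E} f(\bar{\vx}^{(t)}) - \tfrac{\eta}{2}\mathbb{E}\|\nabla f(\bar{\vx}^{(t)})\|^2 - \tfrac{\eta}{2}\bigl(1 - \eta L\bigr)\mathbb{E}\bigl\|\tfrac1n\textstyle\sum_i \nabla f_i(\vx_i^{(t)})\bigr\|^2 + \tfrac{\eta L}{2}\cdot\tfrac1n\textstyle\sum_i \mathbb{E}\|\vx_i^{(t)}-\bar{\vx}^{(t)}\|^2 + \tfrac{\eta^2 L \sigma^2}{2n},
\end{align*}
where the stochastic-noise term $\sigma^2/n$ comes from Assumption~\ref{assumption:stochastic_noise} and the independence of the $\xi_i^{(t)}$ across nodes, and the term involving $\|\vx_i^{(t)}-\bar{\vx}^{(t)}\|^2$ is the consensus error arising from evaluating gradients at the local iterates rather than at $\bar{\vx}^{(t)}$.

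The second main ingredient is a bound on the consensus error $\Xi_t \coloneqq \tfrac1n\sum_i \mathbb{E}\|\vx_i^{(t)}-\bar{\vx}^{(t)}\|^2$. Writing the iterates in matrix form $\mX^{(t)} = [\vx_1^{(t)},\dots,\vx_n^{(t)}]$, the update is $\mX^{(t+1)} = (\mX^{(t)} - \eta\, \mathbf{G}^{(t)})\mW$ with $\mathbf{G}^{(t)}$ the matrix of stochastic gradients. Applying the spectral-gap contraction from Assumption~\ref{assumption:spectral_gap_with_n_nodes} together with Young's inequality to split the ``old error'' and ``newly injected gradient'' contributions, one gets a recursion of the shape $\Xi_{t+1} \le (1-\tfrac{p_n}{2})\Xi_t + \tfrac{2\eta^2(1-p_n)}{p_n}\cdot(\text{gradient magnitude terms}) + 3\eta^2(1-p_n)\sigma^2$; bounding the gradient magnitude via $\|\nabla f_i(\vx_i)\|^2 \lesssim \|\nabla f_i(\vx_i)-\nabla f_i(\bar{\vx})\|^2 + \|\nabla f_i(\bar{\vx})-\nabla f(\bar{\vx})\|^2 + \|\nabla f(\bar{\vx})\|^2$ and using Assumptions~\ref{assumption:smooth} and~\ref{assumption:heterogeneity} closes this into a linear recursion in $\Xi_t$ coupled to $\sum_t \mathbb{E}\|\nabla f(\bar{\vx}^{(t)})\|^2$. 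Summing geometrically over $t$ (valid for $\eta$ small enough that the $L^2\eta^2/p_n^2$ coefficient is, say, below $1/2$) yields $\tfrac1{T+1}\sum_t \Xi_t \lesssim \tfrac{\eta^2(1-p_n)}{p_n^2}\bigl(p_n\sigma^2 + \zeta^2\bigr) + \tfrac{\eta^2(1-p_n)}{p_n}\cdot\tfrac1{T+1}\sum_t \mathbb{E}\|\nabla f(\bar{\vx}^{(t)})\|^2$.

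Third, I would plug the consensus bound back into the telescoped descent inequality. The cross term $\tfrac{\eta L}{2}\cdot\tfrac{\eta^2(1-p_n)}{p_n}\sum_t\mathbb{E}\|\nabla f(\bar{\vx}^{(t)})\|^2$ is absorbed into the $-\tfrac{\eta}{2}\sum_t\mathbb{E}\|\nabla f(\bar{\vx}^{(t)})\|^2$ term once $\eta \lesssim \sqrt{p_n}/(L\sqrt{1-p_n})$, leaving
\begin{align*}
\tfrac{1}{T+1}\sum_{t=0}^T \mathbb{E}\|\nabla f(\bar{\vx}^{(t)})\|^2 \lesssim \tfrac{r_0}{\eta(T+1)} + \tfrac{\eta L\sigma^2}{n} + \tfrac{\eta^2 L^2(1-p_n)(p_n\sigma^2+\zeta^2)}{p_n^2},
\end{align*}
which is a three-term bound in $\eta$ of the form $A/\eta + B\eta + C\eta^2$ over the admissible range $\eta \le \eta_{\max}$ with $\eta_{\max} \asymp p_n/L$ (the binding smoothness/contraction constraint). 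The final step is the standard step-size tuning lemma (e.g.\ Lemma~15 of \citet{koloskova2020unified}): optimizing $A/\eta + B\eta + C\eta^2$ subject to $\eta\le\eta_{\max}$ gives $\mathcal{O}\bigl(\sqrt{AB} + (AC)^{1/3}\cdot A^{1/3}\,\text{[sic: }(A^2C)^{1/3}] + A/\eta_{\max}\bigr)$, i.e.\ the $\sqrt{Lr_0\sigma^2/(nT)}$, $(L^2r_0^2(p_n\sigma^2+\zeta^2)(1-p_n)/(T^2p_n^2))^{1/3}$, and $Lr_0/(Tp_n)$ terms, respectively, after substituting $A\asymp r_0/T$, $B\asymp L\sigma^2/n$, $C\asymp L^2(1-p_n)(p_n\sigma^2+\zeta^2)/p_n^2$.

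The main obstacle I anticipate is getting the consensus-error recursion to close cleanly — specifically, choosing the Young's-inequality split parameter so that the contraction factor stays strictly below $1$, correctly tracking the $(1-p_n)$ versus $p_n$ dependence (rather than the looser $1/p_n$ one often sees), and verifying that the cross-coupling between $\Xi_t$ and the gradient-norm sum can be absorbed under a single explicit upper bound on $\eta$ that is consistent across all three uses. Everything downstream is the mechanical descent-lemma telescoping plus the known step-size optimization lemma, which I would cite rather than reprove; indeed, since the statement is attributed to \citet{koloskova2020unified}, the cleanest route is to verify that the present assumptions (in particular Assumption~\ref{assumption:spectral_gap_with_n_nodes}, which is exactly their ``$p$-spectral gap'' condition with $p = p_n$) match the hypotheses of their Theorem~2 and simply invoke it.
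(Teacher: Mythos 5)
Your proposal is correct and follows essentially the same route as the source: the paper itself offers no proof of this proposition, stating it as a quoted result of \citet{koloskova2020unified}, whose Theorem 2 is proved exactly by the descent-lemma/consensus-recursion/step-size-tuning argument you outline (and your bookkeeping of the $(1-p_n)(p_n\sigma^2+\zeta^2)/p_n^2$ term and the constrained tuning $A/\eta + B\eta + C\eta^2$, $\eta \le \eta_{\max} \asymp p_n/L$ reproduces the stated rate). Your closing observation — that under Assumption \ref{assumption:spectral_gap_with_n_nodes} the cleanest course is simply to verify the hypotheses and invoke their theorem — is precisely what the paper does.
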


As the number of nodes $n$ increases, the first term $\mathcal{O}(\sqrt{\tfrac{L r_0 \sigma^2}{n T}})$ is improved, while the second and third terms degrade since $p_n$ reaches zero.
Thus, as the number of nodes $n$ increases substantially, the second and third terms dominate the convergence rate, and the convergence rate deteriorates.
In many practical cases, $\mathcal{G}_n$ merely represents a preferred communication pattern, and any two nodes can communicate as needed, as introduced in Sec.~\ref{sec:introduction}.
Since we can choose which topology to use in these cases, many recent studies have attempted to alleviate the convergence rate degradation by proposing topologies with large $p_n$ \citep{ying2021exponential,ding2023decentralized,takezawa2023beyond}.
However, their $p_n$ still approach zero as $n$ increases, and the convergence rate degrades when $n$ is substantially large.
Table \ref{table:convergence_rate_of_various_topologies} in Sec.~\ref{sec:convergence_rate_of_decentralizedSGD} lists the convergence rates of Decentralized SGD with various topologies.

\section{Proposed Method}

In this section, we propose \proposed{}, which can completely eliminate the degradation of the convergence rate caused by increasing the number of nodes.
In the remainder of this section, we describe \proposed{} in Sec.~\ref{sec:method} and analyze its convergence rate in Sec.~\ref{sec:convergence_analysis_of_proposed_method}.
Then, we propose an efficient hyperparameter search method in Sec.~\ref{sec:parameter_free} and analyze its convergence rate in Sec.~\ref{sec:convergence_analysis_of_parameter_free}.

\setlength{\textfloatsep}{5pt}
\begin{algorithm}[t!]
\begin{minipage}{\linewidth}
\renewcommand\algorithmicindent{1.2em}
\caption{Simple version of \proposed{}}
\label{algorithm:simple_version_proposed_method}
\begin{algorithmic}[1]
\State \textbf{Input:} the number of nodes $n$, set of nodes $V_n$, number of active nodes $k$, total number of iteration $T$, step size $\eta$, and topology comprising $k$ active nodes $\mathcal{G}_k = (\{1, \cdots, k\}, E)$.
\For{$t \in \{ 0, 1, \cdots, T \}$}
    \State sample next active $k$ nodes $V^{(t)}_{\text{active}}$ from $V_n$ without replacement, and assign $\{1,2,\cdots, k\}$ to variables $\{ \texttt{token\_id}_i^{(t)} \mid v_i \in V^{(t)}_\text{active} \}$ randomly without overlap.
    \For{$i \in \{1, 2, \cdots, n\} $ \textbf{in parallel}}
        \If{$v_i \in V^{(t)}_{\text{active}}$}\footnote{The update rule of the parameters of the inactive nodes is not described because the parameters held by the inactive nodes are discarded and initialized with the parameters of the other active nodes when they are activated in line 8. The parameters of inactive nodes do not affect the behavior of active node parameters.}
            \If{$t>0$}
            \State Let $v_j \in V^{(t-1)}_\text{active}$ denote the node such that $\texttt{token\_id}_j^{(t-1)}=\texttt{token\_id}_i^{(t)}$.
            \State receive $\vx_j^{(t)}$ from $v_j \in V^{(t-1)}_\text{active}$, and $\vx_i^{(t)} \leftarrow \vx_j^{(t)}$.
            \EndIf
            \State $\vy_i^{(t)} \leftarrow \vx_i^{(t)} - \eta \nabla F_i (\vx_i^{(t)}; \xi_i^{(t)})$.
            \State receive $\vy_j^{(t)}$ from $v_j \in \{ v_j \in V^{(t)}_\text{active} \mid (\texttt{token\_id}_i^{(t)}, \texttt{token\_id}_j^{(t)}) \in E \}$.
            \State $\vx_i^{(t+1)} \leftarrow \sum_{v_j \in V_\text{active}^{(t)}} W_{\texttt{token\_id}_i^{(t)}, \texttt{token\_id}_j^{(t)}} \vy_j^{(t)}$.
        \EndIf
    \EndFor
\EndFor
\end{algorithmic}
\end{minipage}
\end{algorithm}

\subsection{\proposed{}}
\label{sec:method}
$p_n$ represents the speed at which the gossip averaging makes the parameter of each node $\{ \vx_i \}_{i=1}^n$ reach the average $\tfrac{1}{n} \sum_{i=1}^n \vx_i$.
As the number of nodes $n$ increases, averaging all node parameters becomes difficult with gossip averaging, and $p_n$ reaches zero.
Thus, as $n$ increases, the parameters of each node $\{ \vx_i \}_{i=1}^n$ comes to be apart during the training, and the convergence rate deteriorates, as discussed in Sec.~\ref{sec:decentralized_sgd}.

To prevent the parameters in each node $\{ \vx_i \}_{i=1}^n$ from being away when the number of nodes $n$ is large,
we propose updating the parameters of only a small subset of nodes and performing gossip averaging on a small topology.
We refer to the proposed method as \textbf{\proposed}.\footnote{We named our proposed method \proposed{} to reflect the manner in which the active nodes copy their parameters to the next active nodes.}
Let $k \in \{ 1, \cdots, n\}$ denote the number of active nodes and $\mathcal{G}_k = (\{1, \cdots, k\}, E)$ be the topology comprising $k$ active nodes.
Essentially, \proposed{} consists of the following three steps:
\begin{itemize}
    \setlength{\leftskip}{-0.7cm}
    \item[(1)] We sample $k$ active nodes $V^{(t)}_\text{active}$ from $V_n$ without replacement and assign $\{1, 2, \cdots, k\}$ to variables $\{ \texttt{token\_id}_i^{(t)} \mid v_i \in V^{(t)}_\text{active} \}$ without overlap.\footnote{The first step can be performed in a decentralized manner by sharing the same seed value to sample active nodes before starting the training.} 
    \item[(2)] The active node $v_i \in V^{(t)}_\text{active}$ fetches the parameters from the previous active node $v_j \in V^{(t-1)}_\text{active}$ whose $\texttt{token\_id}_j^{(t-1)}$ stores the same value as that of $\texttt{token\_id}_i^{(t)}$. Then, the active node $v_i$ initializes its parameters with the fetched parameter and updates them by SGD.
    \item[(3)] The active node $v_i \in V^{(t)}_\text{active}$ exchanges its parameters with its neighbors $\{ v_j \in V^{(t)}_\text{active} \mid ( \texttt{token\_id}_i^{(t)}, \texttt{token\_id}_j^{(t)}) \in E\}$ and computes the weighted average with them. 
\end{itemize}
We show the pseudo-code and illustration in Alg.~\ref{algorithm:simple_version_proposed_method} and Fig.~\ref{fig:illustration}.
In the above implementation, the first step does not start until the third step is completed.
We can solve this issue so that gossip averaging in the third step is performed on the next active node $v_j$, and the first and third steps can be executed simultaneously.
We show the optimized version in Sec.~\ref{sec:communication_overlap}.

In \proposed{}, gossip averaging is performed on a relatively small topology $\mathcal{G}_k$, which can make the parameters of $k$ active nodes reach the average fast.
Therefore, using the proper number of active nodes $k$, we can mitigate the degradation of the convergence rate by increasing the number of nodes $n$.
In the next section, we analyze the convergence rate and show that our proposed method successfully circumvents the negative effect of increasing the number of nodes $n$.

\paragraph{Comparison with Client Sampling:}
\proposed{} randomly samples a subset of nodes and activates them, but \proposed{} is different from the client sampling \citep{liu2022decentralized}.
\proposed{} can prevent the parameters of nodes from being away by using the small $k$ because gossip averaging is performed on the small topology $\mathcal{G}_k$.
In contrast to \proposed{}, client sampling incites the parameters of nodes to drift away because gossip averaging is performed on the topology comprising all nodes, and nodes can exchange parameters only when two neighboring nodes are sampled simultaneously, which makes it more difficult for gossip averaging to make the parameters reach the consensus.
Thus, client sampling cannot alleviate the degradation of the convergence rate caused by a large number of nodes $n$ and degrades the convergence rate as well as the vanilla Decentralized SGD (see Sec.~\ref{section:client_sampling} for more detailed discussion).

\begin{figure}[!t]
    \centering
    \vskip - 0.2 in
    \includegraphics[width=0.95\linewidth]{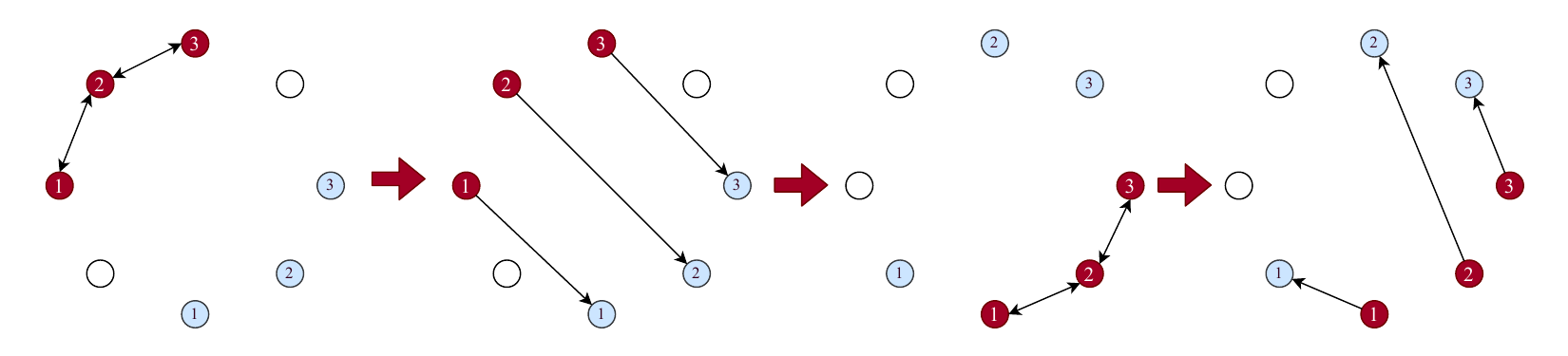}
    \vskip - 0.1 in
    \caption{Illustration of Alg.~\ref{algorithm:simple_version_proposed_method} with $n=8$ and $k=3$. We use a line as the topology consisting of active nodes $\mathcal{G}_k = (\{1,2,3\}, \{(1,1), (1,2), (2,2), (2,3), (3,3)\})$. The black nodes represent active nodes, and the number written on the node is $\texttt{token\_id}^{(t)}_i$. The blue nodes represent the next active nodes, and the number on the node is $\texttt{token\_id}^{(t+1)}_i$. The first and third graphs from the left represent the communication in line 12, and the other graphs represent the communication in line 8.}
    \label{fig:illustration}
    \vskip 0.1 in
\end{figure}

\subsection{Convergence Analysis of \proposed{}}
\label{sec:convergence_analysis_of_proposed_method}
We assume that the topology comprising $k$ active nodes $\mathcal{G}_k$ satisfies the following assumption.
\begin{assumption}
\label{assumption:spectral_gap}
Let $\lambda_i$ is the $i$-th largest eigenvalue of $\mW \in [0,1]^{k \times k}$. 
$\mW$ is a doubly stochastic matrix, and $p_k \coloneqq 1 - \max \{| \lambda_2|, | \lambda_k | \}^2$ satisfies $p_k \in (0,1]$. That is, it holds that
\begin{align}
    \left\| \mX \mW - \bar{\mX} \right\|^2_F \leq (1 - p_k) \left\| \mX - \bar{\mX} \right\|^2_F,
\end{align}
for any $\mX \in \mathbb{R}^{d \times k}$ where $\bar{\mX} \coloneqq \tfrac{1}{k} \mX \mathbf{1}\mathbf{1}^\top$.
\end{assumption}

Note that $\mW$ is a $k \times k$ matrix, and $p_k$ depends on the number of active nodes $k$, but is independent of the total number of nodes $n$.
Under Assumptions \ref{assumption:lower_bound}, \ref{assumption:smooth}, \ref{assumption:stochastic_noise}, \ref{assumption:heterogeneity}, and \ref{assumption:spectral_gap}, 
Theorem \ref{theorem:convergence_rate} provides the convergence rate of \proposed{} for any number of active nodes $k$.

\begin{theorem}
\label{theorem:convergence_rate}
Suppose that Assumptions \ref{assumption:lower_bound}, \ref{assumption:smooth}, \ref{assumption:stochastic_noise}, \ref{assumption:heterogeneity}, and \ref{assumption:spectral_gap} hold.
Let $\{ \{ \vx_i^{(t)} \}_{i=1}^n \}_{t=0}^T$ denote the parameters generated by Alg.~\ref{algorithm:simple_version_proposed_method}, and
suppose that $\{ \vx_i^{(0)} \}_{i=1}^n$ is initialized with the same parameter $\bar{\vx}^{(0)}$.
Then, for any number of active nodes $k \in \{1, 2, \cdots, n\}$, there exists the step size $\eta$ such that $\frac{1}{T+1} \sum_{t=0}^T \mathbb{E} \| \nabla f (\bar{\vx}_\text{active}^{(t)}) \|^2$ is bounded from above by
\begin{align}
\label{eq:rate_of_teleportation_for_arbitrary_k}
    \mathcal{O} \left( \sqrt{\frac{L r_0 (\sigma^2 + (1 - \frac{k-1}{n-1})\zeta^2)}{k T}} + \left( \frac{L^2 r_0^2 (\sigma^2 + \zeta^2) (1 - p_k)}{T^2 p_k}\right)^\frac{1}{3} + \frac{L r_0}{T p_k} \right),
\end{align}
where $\bar{\vx}^{(t)}_\text{active} \coloneqq \tfrac{1}{k} \sum_{v_i \in V^{(t)}_\text{active}} \vx_i^{(t)}$ and $r_0 \coloneqq f (\bar{\vx}^{(0)}) - f^\star$.
\end{theorem}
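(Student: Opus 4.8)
The plan is to reduce the analysis of Algorithm~\ref{algorithm:simple_version_proposed_method} to the standard Decentralized SGD convergence result (Proposition~\ref{theorem:decentralized_sgd}) applied on the $k$-node topology $\mathcal{G}_k$, and then to carefully track how the randomness in the choice of active nodes affects the gradient-noise and heterogeneity terms. The key observation is that the sequence $\{\vx_i^{(t)} : v_i \in V^{(t)}_\text{active}\}$, reindexed by $\texttt{token\_id}$, evolves exactly like the $k$ iterates of Decentralized SGD on $\mathcal{G}_k$, \emph{except} that at each iteration the local gradient is computed at a node $v_i$ drawn uniformly at random from the $n$ nodes (without replacement within a round). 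So I would first define $\tilde{\vx}_\ell^{(t)}$ for $\ell\in\{1,\dots,k\}$ to be the parameter carried by token $\ell$, observe that $\bar{\vx}_\text{active}^{(t)} = \tfrac1k\sum_{\ell=1}^k \tilde{\vx}_\ell^{(t)}$, and write the update in the matrix form $\tilde{\mX}^{(t+1)} = (\tilde{\mX}^{(t)} - \eta\,\tilde{\mG}^{(t)})\mW$, where the $\ell$-th column of $\tilde{\mG}^{(t)}$ is $\nabla F_{i(\ell,t)}(\tilde{\vx}_\ell^{(t)};\xi)$ and $i(\ell,t)$ is the (random) node assigned token $\ell$ at round $t$.

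Next I would identify the effective noise and heterogeneity that this randomized-node sampling induces, conditioning on $\tilde{\mX}^{(t)}$. Taking expectation over both the node sampling and the stochastic gradient, $\mathbb{E}[\tilde{\mG}^{(t)}_{:,\ell} \mid \tilde{\mX}^{(t)}] = \nabla f(\tilde{\vx}_\ell^{(t)})$ --- this is the crucial point: because the active node is uniform over all $n$ nodes, the expected gradient is the \emph{global} gradient $\nabla f$, so there is effectively no heterogeneity bias at the per-token level, unlike in vanilla Decentralized SGD where column $i$ always sees $\nabla f_i$. The variance $\mathbb{E}\|\tilde{\mG}^{(t)}_{:,\ell} - \nabla f(\tilde{\vx}_\ell^{(t)})\|^2$ then decomposes (law of total variance) into the stochastic term $\sigma^2$ plus the sampling variance $\tfrac1n\sum_i\|\nabla f_i - \nabla f\|^2 \le \zeta^2$; and the sum $\tfrac1k\mathbb{E}\|\tilde{\mG}^{(t)} - \overline{\nabla f}\|_F^2$ picks up a factor $(1-\tfrac{k-1}{n-1})$ on the $\zeta^2$ part from sampling $k$ nodes without replacement (a finite-population-correction computation for the variance of the average of $k$ of $n$ fixed vectors). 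This is the step I expect to require the most care: getting the without-replacement constant $(1-\tfrac{k-1}{n-1})$ right, and checking that the cross terms between columns vanish or are controlled, since the $i(\ell,t)$ across tokens $\ell$ are negatively correlated within a round.

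With these effective constants in hand --- ``stochastic noise'' $\sigma^2 + (1-\tfrac{k-1}{n-1})\zeta^2$ in the consensus-of-averages term, ``heterogeneity'' $0$ (or absorbed into $\sigma^2+\zeta^2$ for the consensus-distance term, which only needs a crude bound $\mathbb{E}\|\tilde{\mG}^{(t)}_{:,\ell}\|^2$-type quantity), and spectral gap $p_k$ for the $k\times k$ matrix $\mW$ --- I would invoke the descent lemma on $f(\bar{\vx}_\text{active}^{(t)})$ together with the standard consensus recursion controlled by $p_k$, exactly mirroring the proof of Proposition~\ref{theorem:decentralized_sgd} but with $n\mapsto k$ in the $1/(nT)$ term and the modified noise. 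One subtlety to flag: the ``token fetch'' in line~8 means $\tilde{\vx}_\ell^{(t)}$ at round $t$ is literally the post-gossip iterate from round $t-1$, so the recursion is genuinely seamless and no extra error is incurred by the teleportation step itself --- this should be stated explicitly. Finally, optimizing over $\eta$ in the resulting bound (the usual three-way trade-off, as in~\citet{koloskova2020unified}) yields the claimed rate~\eqref{eq:rate_of_teleportation_for_arbitrary_k}; the initialization assumption $\vx_i^{(0)} = \bar{\vx}^{(0)}$ guarantees the consensus distance starts at zero so $r_0 = f(\bar{\vx}^{(0)}) - f^\star$ is the only initial-condition constant that appears. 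The main obstacle, to reiterate, is the clean handling of the without-replacement sampling variance and the inter-token correlations; everything downstream is a transcription of the known Decentralized SGD analysis with $k$ in place of $n$.
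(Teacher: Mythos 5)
Your proposal follows essentially the same route as the paper's proof: the paper likewise reindexes the active-node parameters by token to get the seamless recursion $\mZ^{(t+1)} = (\mZ^{(t)} - \eta\,\mG^{(t)})\mW^\top$, exploits the uniform-without-replacement sampling so the expected local gradient is the global gradient with an extra variance term carrying the finite-population factor $(1-\tfrac{k-1}{n-1})\zeta^2/k$ in the descent lemma, bounds the consensus recursion with the crude $(\sigma^2+\zeta^2)$ constant and spectral gap $p_k$, and finishes with the standard three-case step-size choice. The steps you flag as delicate (the without-replacement correction and the clean token hand-off) are exactly the points the paper handles, so your plan is correct and matches the paper's argument.
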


By tuning the number of active nodes $k$, we obtain the following theorem.
To determine the proper number of active nodes $k$, the explicit form of $p_k$ is necessary.
Here, we show the rates when we use a ring and exponential graph \citep{ying2021exponential} as $\mathcal{G}_k$.

\begin{theorem}
\label{theorem:convergence_rate_with_tuned_k}
Suppose that Assumptions \ref{assumption:lower_bound}, \ref{assumption:smooth}, \ref{assumption:stochastic_noise}, \ref{assumption:heterogeneity}, and \ref{assumption:spectral_gap} hold.
Let $\{ \{ \vx_i^{(t)} \}_{i=1}^n \}_{t=0}^T$ be the parameters generated by Alg.~\ref{algorithm:simple_version_proposed_method}, and
suppose that $\{ \vx_i^{(0)} \}_{i=1}^n$ is initialized with the same parameter $\bar{\vx}^{(0)}$.

\textbf{Ring:} Suppose that the active nodes are connected by a ring, i.e., $p_k = \Omega(k^{-2})$. Then, if we set the number of active nodes $k$ as follows:
\begin{align*}
    k = \max \left\{ 1, \min \left\{ \Bigg\lceil \left( \frac{T (\sigma^2 + \zeta^2)}{L r_0} \right)^\frac{1}{7} \Bigg\rceil, n \right\} \right\},
\end{align*}
there exists $\eta$ such that $\tfrac{1}{T+1} \sum_{t=0}^T \mathbb{E} \| \nabla f (\bar{\vx}^{(t)}_\text{active}) \|^2$ is bounded from above by
\begin{align*}
    \mathcal{O} \left( \sqrt{\frac{L r_0 \sigma^2}{n T}} + \left( \frac{L r_0 (\sigma^2 + \zeta^2)^{\frac{3}{4}}}{T }\right)^\frac{4}{7} 
    + \left( \frac{L r_0 (\sigma^2 + \zeta^2)^\frac{2}{5}}{T} \right)^\frac{5}{7} + \frac{L r_0}{T} \right),
\end{align*}
where $\bar{\vx}^{(t)}_\text{active} \coloneqq \tfrac{1}{k} \sum_{v_i \in V^{(t)}_\text{active}} \vx_i^{(t)}$ and $r_0 \coloneqq f (\bar{\vx}^{(0)}) - f^\star$.

\textbf{Exp.~Graph:} Suppose that the active nodes are connected by an exponential graph, i.e., $p_k = \Omega(\log_2^{-1} (k))$. Then, if we set the number of active nodes $k$ as follows:
\begin{align*}
    k = \max \left\{ 1, \min \left\{ \Bigg\lceil \left( \frac{T (\sigma^2 + \zeta^2)}{L r_0} \right)^\frac{1}{3} \Bigg\rceil,\Bigg\lceil \frac{T (\sigma^2 + \zeta^2)}{L r_0} \Bigg\rceil, n \right\} \right\},
\end{align*}
there exists $\eta$ such that $\tfrac{1}{T+1} \sum_{t=0}^T \mathbb{E} \| \nabla f (\bar{\vx}^{(t)}_\text{active}) \|^2$ is bounded from above by
\begin{align*}
    \mathcal{O} \left( \sqrt{\frac{L r_0 \sigma^2}{n T}} 
    + \left( \frac{L^2 r_0^2 (\sigma^2 + \zeta^2)}{T^2} \log_2 \left( \frac{T (\sigma^2 + \zeta^2)}{L r_0}\right)^\frac{1}{3} \right)^\frac{1}{3} 
    + \frac{L r_0}{T} \log_2 \left( \frac{T (\sigma^2 + \zeta^2)}{L r_0}\right) 
    + \frac{L r_0}{T} \right).
\end{align*}
\end{theorem}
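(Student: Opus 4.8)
The plan is to derive Theorem~\ref{theorem:convergence_rate_with_tuned_k} as a corollary of Theorem~\ref{theorem:convergence_rate} by substituting the stated spectral-gap expressions for $p_k$ and the prescribed choices of $k$, then bounding each of the three terms in \eqref{eq:rate_of_teleportation_for_arbitrary_k} separately. So first I would record the three contributions from \eqref{eq:rate_of_teleportation_for_arbitrary_k} as (i) the ``statistical'' term $\sqrt{L r_0(\sigma^2 + (1-\tfrac{k-1}{n-1})\zeta^2)/(kT)}$, (ii) the ``consensus'' term $\bigl(L^2 r_0^2(\sigma^2+\zeta^2)(1-p_k)/(T^2 p_k)\bigr)^{1/3}$, and (iii) the ``optimization'' term $L r_0/(T p_k)$. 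Since $1 - \tfrac{k-1}{n-1} \le 1$ and $1 - p_k \le 1$, term (i) is at most $\sqrt{L r_0(\sigma^2+\zeta^2)/(kT)}$ and term (ii) is at most $\bigl(L^2 r_0^2(\sigma^2+\zeta^2)/(T^2 p_k)\bigr)^{1/3}$, so the whole bound reduces to a function of $k$ and $p_k(k)$ that I then optimize over $k$.

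For the ring case I would plug in $p_k = \Omega(k^{-2})$, giving term (ii) $\lesssim \bigl(L^2 r_0^2(\sigma^2+\zeta^2)k^2/T^2\bigr)^{1/3}$ and term (iii) $\lesssim L r_0 k^2/T$. Ignoring the statistical term's $\sqrt{1/k}$ decay would leave (ii) and (iii) increasing in $k$, so I want $k$ as small as possible — but the statistical term forces $k$ up. The natural tradeoff to examine is between term (i), which scales like $k^{-1/2}$, and term (ii), which scales like $k^{1/3}$: equating $\sqrt{L r_0(\sigma^2+\zeta^2)/(kT)}$ with $\bigl(L^2r_0^2(\sigma^2+\zeta^2)k^2/T^2\bigr)^{1/3}$ and solving for $k$ yields an exponent of $1/7$, which is exactly the prescribed $k \asymp (T(\sigma^2+\zeta^2)/(Lr_0))^{1/7}$ (rounded up and clipped to $[1,n]$). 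I would then substitute this $k$ back into each of the three terms to read off the $(\cdots)^{4/7}$ contribution from (i) and (ii), the $(\cdots)^{5/7}$ contribution from (iii), and separately handle the boundary cases: when the formula clips to $k=n$, term (i) becomes $\sqrt{Lr_0\sigma^2/(nT)}$ (here I would note $(1-\tfrac{n-1}{n-1})\zeta^2 = 0$, which kills the $\zeta^2$ in the statistical term — this is why only $\sigma^2$ appears there), and when it clips to $k=1$, $p_1 = 1$ and the bound collapses to the $Lr_0/T$ term. Taking the max over all regimes gives the stated four-term bound.

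For the exponential-graph case the structure is the same but with $p_k = \Omega(1/\log_2 k)$, so $1/p_k = O(\log_2 k)$ is only logarithmic in $k$. Now term (iii) is $\lesssim (Lr_0/T)\log_2 k$ and term (ii) is $\lesssim \bigl(L^2r_0^2(\sigma^2+\zeta^2)\log_2 k/T^2\bigr)^{1/3}$, both essentially flat in $k$, so to shrink term (i) I want $k$ as large as possible, namely $k \asymp n$. But then $\log_2 k \asymp \log_2 n$ would reintroduce $n$-dependence in terms (ii)--(iii). The fix is to pick $k$ just large enough that term (i) is no longer dominant: the two thresholds $\lceil (T(\sigma^2+\zeta^2)/(Lr_0))^{1/3}\rceil$ and $\lceil T(\sigma^2+\zeta^2)/(Lr_0)\rceil$ in the statement are exactly the values at which term (i) drops below term (ii) and term (iii) respectively, so taking $k$ to be the min of these (and of $n$) makes term (i) $\lesssim \sqrt{Lr_0\sigma^2/(nT)}$ plus lower-order while keeping $\log_2 k = O(\log_2(T(\sigma^2+\zeta^2)/(Lr_0)))$. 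Substituting this $k$ into (ii) and (iii) produces the $\log_2(\cdots)^{1/3}$ inside the cube root and the $\log_2(\cdots)$ factor on the $Lr_0/T$ term, matching the claim.

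The main obstacle I anticipate is the careful bookkeeping of the clipping: the choice of $k$ is a nested $\max\{1,\min\{\cdots,n\}\}$, and one must verify that in \emph{every} regime — $k$ attaining the unconstrained optimum, $k = n$, or $k = 1$ — the resulting bound is dominated by the claimed expression, so that the final rate is the max over all cases. This requires checking, for instance, that when $k=n$ the non-statistical terms (which then depend on $p_n$, not a free parameter) are absorbed by $\sqrt{Lr_0\sigma^2/(nT)}$ or the other listed terms, and that the ceilings introduce only constant-factor losses hidden in the $\mathcal{O}(\cdot)$. Everything else — substituting $p_k$, simplifying exponents, dropping $1-\tfrac{k-1}{n-1} \le 1$ — is routine algebra that the $\mathcal{O}$-notation generously accommodates.
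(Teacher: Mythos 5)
Your proposal takes essentially the same route as the paper's own proof: it specializes the arbitrary-$k$ bound of Theorem~\ref{theorem:convergence_rate} (Lemma~\ref{lemma:general_graph}) with $p_k=\Omega(k^{-2})$ or $p_k=\Omega(\log_2^{-1}k)$, balances the statistical and consensus terms to recover exactly the stated choices of $k$ (the $1/7$ exponent for the ring, the two thresholds $\lceil A/B^2\rceil$ and $\lceil A/C^2\rceil$ for the exponential graph), and treats the clipped regimes $k=n$ (where the $\zeta^2$ part of the statistical term vanishes) and the degenerate $k=1$ case just as in Lemmas~\ref{lemma:convergence_rate_with_ring} and \ref{lemma:convergence_rate_with_exp}. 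The case bookkeeping you flag as the main obstacle is precisely what those lemmas carry out, so the plan is correct.
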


All proofs are presented in Sec.~\ref{sec:proof_of_convergence_rate}.
The first term in Eq.~(\ref{eq:rate_of_teleportation_for_arbitrary_k}) is worse than the first term in Eq.~(\ref{eq:rate_of_decentralized_sgd}), but Theorem \ref{theorem:convergence_rate_with_tuned_k} indicates that the first term becomes the same as that of Decentralized SGD by carefully tuning $k$.
For both cases, only the first term depends on the number of nodes $n$, which is improved as $n$ increases.
Therefore, \proposed{} can completely eliminate the negative effect by increasing the number of nodes $n$, and its convergence rate is consistently improved as $n$ increases.
As discussed in Sec.~\ref{sec:decentralized_sgd}, several previous studies have attempted to avoid convergence rate degradation caused by large $n$ by designing topologies with large spectral gaps \citep{ying2021exponential,song2022communication,takezawa2023beyond}.
However, no prior topology, except for a complete graph, can completely remove this degradation.
By comparing with the complete graph, \proposed{} is more communication efficient.
For instance, if a ring is used as $\mathcal{G}_k$, each node only needs to communicate with three nodes per iteration.
Therefore, \proposed{} is the first method that does not suffer from large $n$ without sacrificing the communication efficiency.

\subsection{Efficient Hyperparameter Search for Number of Active Nodes}
\label{sec:parameter_free}

\setlength{\textfloatsep}{10pt}
\begin{algorithm}[b]
\begin{minipage}{\linewidth}
\caption{Efficient hyperparameter search for \proposed{}.}
\label{algorithm:proposed_method2}
\begin{algorithmic}[1]
\State \textbf{Input:} the total number of iteration $2 T$, and the number of nodes $n$.
\State run Alg.~\ref{algorithm:simple_version_proposed_method} for $T$ iterations with number of active nodes $n$, and let $\{ \{ \vx_{n, i}^{(t)} \}_{i=1}^n \}_{t=0}^{T}$ denote the generated parameters.
\For{$k \in \{1, 2, 2^2, 2^3, \cdots, 2^{\floor{\log_2 (n+1)}-1} \}$ \textbf{in parallel}}
\State run Alg.~\ref{algorithm:simple_version_proposed_method} for $T$ iterations with number of active nodes $k$.
\EndFor
\State let $\{ \{ \vx_{k, i}^{(t)} \}_{v_i \in V^{(t)}_\text{active}} \}_{t=0}^{T}$ denote the parameters generated by Alg.~\ref{algorithm:simple_version_proposed_method} with $k$ active nodes.
\State \textbf{return} the \textit{best}\footnote[2]{Theoretically, we select the parameters with the smallest gradient norm as in Theorem \ref{theorem:convergence_rate_with_tuned_k}. In practice, we can select $k$ that achieves the best accuracy on the validation datasets as in the vanilla grid search.} parameters among $k \in \{ 1, 2, 2^2, \cdots, 2^{\floor{\log_2 (n+1)} -1 }, n \}$.
\end{algorithmic}
\end{minipage}
\end{algorithm}

\proposed{} can eliminate the convergence rate degradation caused by a large number of nodes $n$,
while the number of active nodes $k \in \{1, \cdots, n\}$ needs to be tuned carefully.
If $k$ is tuned by grid search, it requires $n T$ iterations in total. 
This hyperparameter-tuning becomes expensive as $n$ increases.
To alleviate this issue, we further extend \proposed{} by proposing an efficient hyperparameter-tuning method for the number of active nodes $k$.
The key idea for efficient hyperparameter-tuning is based on the following lemma.
\begin{lemma}
\label{lemma:grid_search}
Let $k^\star \in \{ 1,2,3, \cdots, n\}$ be the optimal number of active nodes.
If $k^\star < n$, there exists $k \in \{ 1, 2, 4, 8, \cdots, 2^{\floor{\log_2(n+1)} - 1}\}$ that satisfies $\tfrac{k^\star}{4} < k \leq k^\star$.
Furthermore, it holds that $\sum_{i=0}^{{\floor{\log_2(n+1)} - 1}} 2^i \leq n$.
\end{lemma}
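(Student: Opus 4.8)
The plan is to prove the two claims of Lemma~\ref{lemma:grid_search} separately, both by elementary arguments about powers of two.

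\textbf{First claim.} Suppose $k^\star < n$, so in particular $k^\star \leq n-1$, hence $k^\star + 1 \leq n$ and $\log_2(k^\star+1) \leq \log_2(n+1)$. I would set $m \coloneqq \lfloor \log_2 k^\star \rfloor$, so that $2^m \leq k^\star < 2^{m+1}$. The candidate is $k = 2^m$, which by construction satisfies $k \leq k^\star$ and $k > k^\star/2 > k^\star/4$; in fact the weaker bound $k^\star/4 < k$ gives slack that I will exploit to handle the index-range bookkeeping. It remains to check $2^m$ is actually in the set $\{2^0, 2^1, \ldots, 2^{\lfloor \log_2(n+1)\rfloor - 1}\}$, i.e.\ that $m \leq \lfloor \log_2(n+1)\rfloor - 1$. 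From $2^m \leq k^\star \leq n-1 < n+1$ we get $m < \log_2(n+1)$, so $m \leq \lceil \log_2(n+1)\rceil - 1$; to get the floor instead I would argue that if $m = \lfloor \log_2(n+1)\rfloor$ were to fail the bound, then $2^m \leq k^\star \leq n - 1$ forces $2^{\lfloor\log_2(n+1)\rfloor} \leq n-1$, and combined with $\lfloor\log_2(n+1)\rfloor \leq \log_2(n+1)$ one checks this is contradictory except possibly in small edge cases which can be verified directly. Here the factor-of-$4$ margin is the safety net: even if $2^m$ landed just outside the range, $2^{m-1}$ or $2^{m-2}$ still satisfies $k^\star/4 < k \leq k^\star$ and sits comfortably inside, so the claim holds regardless.

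\textbf{Second claim.} This is a geometric series: $\sum_{i=0}^{\lfloor \log_2(n+1)\rfloor - 1} 2^i = 2^{\lfloor \log_2(n+1)\rfloor} - 1$. Writing $M \coloneqq \lfloor \log_2(n+1)\rfloor$, we have $2^M \leq n+1$ by definition of the floor, hence $2^M - 1 \leq n$, which is exactly the desired inequality. I would present this in two lines.

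\textbf{Main obstacle.} The only genuinely fiddly point is the off-by-one in the first claim — confirming that the dyadic candidate obtained from $k^\star$ really lies within the truncated range $\{2^0,\ldots,2^{M-1}\}$ rather than at index $M$. The clean way around it is precisely the $k^\star/4$ slack in the statement (as opposed to, say, $k^\star/2$): it lets me drop down one or two powers of two if needed, so I never have to fight the boundary case tightly. I would organize the proof so that this slack is invoked explicitly, making the index bound immediate; everything else is routine.
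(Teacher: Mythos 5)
Your overall strategy --- take $k=2^{m}$ with $m=\lfloor\log_2 k^\star\rfloor$ and, if that power of two falls outside the truncated range, drop down and let the factor-$4$ slack absorb the loss --- is essentially the paper's argument (the paper phrases it as a case split: if $k^\star<2^{\lfloor\log_2(n+1)\rfloor}$ the dyadic just below $k^\star$ is already in the set and even satisfies $k\le k^\star<2k$; otherwise one takes the largest element $2^{\lfloor\log_2(n+1)\rfloor-1}$ and uses $k^\star<n<2^{\lfloor\log_2(n+1)\rfloor+1}$). Your proof of the second claim via the geometric series $\sum_{i=0}^{\lfloor\log_2(n+1)\rfloor-1}2^i=2^{\lfloor\log_2(n+1)\rfloor}-1\le (n+1)-1=n$ is correct.

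The problem is in how you justify that the chosen $k$ lies in the range. First, your claim that $m=\lfloor\log_2(n+1)\rfloor$ together with $2^m\le k^\star\le n-1$ is ``contradictory except possibly in small edge cases'' is false: take $n=10$, $k^\star=9$; then $\lfloor\log_2 11\rfloor=3$, the admissible set is $\{1,2,4\}$, and $2^m=8$ is outside it while no contradiction arises. This situation occurs for every $n\ge 2^{\lfloor\log_2(n+1)\rfloor}+1$ (i.e.\ most $n$); it is exactly the second, non-degenerate case of the paper's proof, not an anomaly to be dismissed. Second, your safety net ``$2^{m-1}$ or $2^{m-2}$ still satisfies $k^\star/4<k\le k^\star$'' is not right as stated: since $2^m\le k^\star$, one has $2^{m-2}\le k^\star/4$, so a two-step drop would violate the strict lower bound, and if such a drop were ever needed the lemma's conclusion would fail. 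What your write-up is missing is the short argument that at most one drop is ever required: from $k^\star<n$ one gets $2^m\le k^\star\le n-1<n+1<2^{\lfloor\log_2(n+1)\rfloor+1}$, hence $m\le\lfloor\log_2(n+1)\rfloor$, so either $2^m$ is already in the set (and $k^\star/2<2^m\le k^\star$), or $m=\lfloor\log_2(n+1)\rfloor$ and $k=2^{m-1}=2^{\lfloor\log_2(n+1)\rfloor-1}$ is in the set with $k^\star/4<2^{m-1}\le k^\star$, because $2^m\le k^\star<2^{m+1}$. With that observation inserted in place of the incorrect ``contradiction'' claim and the $2^{m-2}$ fallback, your proof becomes correct and coincides with the paper's.
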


Lemma \ref{lemma:grid_search} shows that for any optimal number of active nodes $k^\star \leq n-1$, a similar number of active nodes is contained in $\{ 1, 2, 2^2, \cdots, 2^{\floor{\log_2(n+1)} - 1} \}$.
This suggests that we only need to search from $\{ 1, 2, 2^2, \cdots, 2^{\floor{\log_2(n+1)} - 1}, n \}$ to obtain the appropriate $k$ and we do not need run Alg.~\ref{algorithm:simple_version_proposed_method} for all $k \in \{1, 2, 3, \cdots, n\}$.
Moreover, Lemma \ref{lemma:grid_search} indicates that Alg.~\ref{algorithm:simple_version_proposed_method} can be run with various $k \in \{ 1, 2, 2^2, \cdots, 2^{\floor{\log_2(n+1)} - 1} \}$ in parallel because the total number of active nodes is less than or equal to $n$.
Based on these observations, we propose an efficient hyperparameter-tuning method. The pseudo-code is shown in Alg.~\ref{algorithm:proposed_method2}.

\subsection{Convergence Analysis of \proposed{} with Efficient Hyperparameter Search}
\label{sec:convergence_analysis_of_parameter_free}

Under the same assumptions as in Theorem \ref{theorem:convergence_rate_with_tuned_k}, Theorem \ref{theorem:parameter_free} provides the convergence rate of Alg.~\ref{algorithm:proposed_method2}.
The proof is deferred to Sec.~\ref{sec:proof_of_parameter_free}.

\begin{theorem}
\label{theorem:parameter_free}
Suppose that Assumptions \ref{assumption:lower_bound}, \ref{assumption:smooth}, \ref{assumption:stochastic_noise}, \ref{assumption:heterogeneity}, and \ref{assumption:spectral_gap} hold.
Let $\{ \{ \vx_{k,i}^{(t)} \}_{v_i \in V^{(t)}_\text{active}} \}_{t=0}^T$ denote the parameters of active nodes generated by Alg.~\ref{algorithm:simple_version_proposed_method} when the number of active nodes is set to $k$, and we define $\mathcal{K} \coloneqq \{1, 2, 2^2, 2^3, \cdots, 2^{\floor{\log_2 (n+1)}-1}, n \}$. 
Then, suppose that the parameters are initialized with the same parameter $\bar{\vx}^{(0)}$.

\textbf{Ring:} If the active nodes are connected by a ring, i.e., $p_k = \Omega(k^{-2})$, there exists $\eta$ such that $\min_{k \in \mathcal{K}} \left( \tfrac{1}{T+1} \sum_{t=0}^{T} \mathbb{E} \| \nabla f (\bar{\vx}_{\text{active}, k}^{(t)}) \|^2 \right)$ is bounded from above by
\begin{align*}
    \mathcal{O} \left( \sqrt{\frac{L r_0 \sigma^2}{n T}} + \left( \frac{L r_0 (\sigma^2 + \zeta^2)^{\frac{3}{4}}}{T }\right)^\frac{4}{7} 
    + \left( \frac{L r_0 (\sigma^2 + \zeta^2)^\frac{2}{5}}{T} \right)^\frac{5}{7} + \frac{L r_0}{T} \right),
\end{align*}
where $r_0 \coloneqq f (\bar{\vx}^{(0)}) - f^\star$ and $\bar{\vx}_{\text{active}, k}^{(t)} \coloneqq \frac{1}{k} \sum_{v_i \in V^{(t)}_\text{active}} \vx_{k, i}^{(t)}$.

\textbf{Exp.~Graph:} If the active nodes are connected by an exponential graph, i.e., $p_k = \Omega(\log_2^{-1} k)$, there exists $\eta$ such that $\min_{k \in \mathcal{K}} \left( \tfrac{1}{T+1} \sum_{t=0}^{T} \mathbb{E} \| \nabla f (\bar{\vx}_{\text{active}, k}^{(t)}) \|^2 \right)$ is bounded from above by
\begin{align*}
    \mathcal{O} \left( \sqrt{\frac{L r_0 \sigma^2}{n T}} + \left( \frac{L^2 r_0^2 (\sigma^2 + \zeta^2)}{T^2} \log_2 \left( \frac{T (\sigma^2 + \zeta^2)}{L r_0}\right)^\frac{1}{3} \right)^\frac{1}{3} + \frac{L r_0}{T} \log_2 \left( \frac{T (\sigma^2 + \zeta^2)}{L r_0}\right) +\frac{L r_0}{T} \right).
\end{align*}
\end{theorem}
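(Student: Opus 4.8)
The plan is to reduce Theorem~\ref{theorem:parameter_free} to Theorem~\ref{theorem:convergence_rate} together with Lemma~\ref{lemma:grid_search}. First I would observe that Alg.~\ref{algorithm:proposed_method2} runs Alg.~\ref{algorithm:simple_version_proposed_method} for $T$ iterations with every $k \in \mathcal{K}$, so by Theorem~\ref{theorem:convergence_rate} the quantity $\frac{1}{T+1}\sum_{t=0}^T \mathbb{E}\|\nabla f(\bar{\vx}_{\text{active},k}^{(t)})\|^2$ is bounded by the right-hand side of Eq.~\eqref{eq:rate_of_teleportation_for_arbitrary_k} \emph{for each such $k$}, with the appropriate step size $\eta$ depending on $k$. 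Hence $\min_{k\in\mathcal{K}}$ of these quantities is bounded by $\min_{k\in\mathcal{K}}$ of the bounds in Eq.~\eqref{eq:rate_of_teleportation_for_arbitrary_k}. So the whole theorem comes down to showing that $\min_{k\in\mathcal{K}}$ of the $k$-dependent bound is of the same order as the bound obtained in Theorem~\ref{theorem:convergence_rate_with_tuned_k} for the \emph{optimal} $k^\star$.

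The key step is therefore this: let $k^\star$ be the minimizer (over $\{1,\dots,n\}$) of the bound in Eq.~\eqref{eq:rate_of_teleportation_for_arbitrary_k} — this is exactly the $k$ used in Theorem~\ref{theorem:convergence_rate_with_tuned_k}. If $k^\star = n$, then $n \in \mathcal{K}$ and we are done directly. If $k^\star < n$, Lemma~\ref{lemma:grid_search} supplies a $k \in \{1,2,4,\dots,2^{\floor{\log_2(n+1)}-1}\} \subseteq \mathcal{K}$ with $k^\star/4 < k \le k^\star$. It remains to check that plugging this $k$ into Eq.~\eqref{eq:rate_of_teleportation_for_arbitrary_k} only inflates each term by a constant factor relative to plugging in $k^\star$. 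This requires term-by-term monotonicity/robustness arguments: the first term scales like $1/\sqrt{k}$ (times a factor $\sigma^2 + (1-\tfrac{k-1}{n-1})\zeta^2$ that is decreasing in $k$ but bounded between $\sigma^2$ and $\sigma^2+\zeta^2$), so decreasing $k$ by at most a factor $4$ increases it by at most a factor $2$ (plus the harmless $\zeta^2$ slack); for the second and third terms one must verify that $(1-p_k)/p_k$ and $1/p_k$ change by only a constant factor when $k$ shrinks by a factor $4$ — for the ring $p_k = \Omega(k^{-2})$ this is $p_{k}/p_{k^\star} = \Theta((k^\star/k)^2) = \Theta(1)$, and for the exponential graph $p_k = \Omega(\log_2^{-1} k)$ it is $\Theta(\log k^\star / \log k) = \Theta(1)$ (using $k \ge k^\star/4$ so $\log k \ge \log k^\star - 2$). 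Combining, the bound at our dyadic $k$ is within a constant of the bound at $k^\star$, which by Theorem~\ref{theorem:convergence_rate_with_tuned_k} equals the claimed rate.

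Finally I would handle the two graph families separately only to substitute the explicit $p_k$ and the explicit $k^\star$ from Theorem~\ref{theorem:convergence_rate_with_tuned_k} (the ring uses $k^\star = \max\{1,\min\{\lceil(T(\sigma^2+\zeta^2)/(Lr_0))^{1/7}\rceil, n\}\}$ and the exponential graph its two-regime choice), noting that these $k^\star$ were derived precisely to minimize Eq.~\eqref{eq:rate_of_teleportation_for_arbitrary_k}, so the constant-factor robustness just argued transfers the final rates verbatim. One must also note the step size $\eta$ in Alg.~\ref{algorithm:proposed_method2} can be chosen per-$k$ since the runs are independent.

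\textbf{Main obstacle.} The delicate point is the term-by-term robustness check — in particular confirming that a factor-$4$ change in $k$ leaves $1/p_k$ unchanged up to constants for \emph{both} $p_k = \Omega(k^{-2})$ and $p_k = \Omega(\log_2^{-1} k)$, and that the ceilings and the $\max\{1,\cdot\}, \min\{\cdot,n\}$ truncations in the definition of $k^\star$ do not break the $k^\star/4 < k \le k^\star$ bracketing guaranteed by Lemma~\ref{lemma:grid_search} (e.g.\ when the truncation forces $k^\star = 1$, one just takes $k = 1 \in \mathcal{K}$ directly; when it forces $k^\star = n$, use $n \in \mathcal{K}$). Once those edge cases are dispatched, the rest is bookkeeping.
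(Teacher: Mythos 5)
Your proposal is correct and follows essentially the same route as the paper: the paper's proof simply combines Lemma~\ref{lemma:grid_search} with the tuned-$k$ rate lemmas (Lemmas~\ref{lemma:convergence_rate_with_ring} and~\ref{lemma:convergence_rate_with_exp}), i.e.\ the dyadic grid contains a $k$ within a factor $4$ of the tuned $k^\star$, and the bound in Eq.~\eqref{eq:rate_of_teleportation_for_arbitrary_k} is robust to such a change up to constants. Your term-by-term robustness check (first term inflates by at most a factor $2$, and for $k \le k^\star$ the $1/p_k$-dependent terms only shrink for both the ring and the exponential graph) is exactly the bookkeeping the paper leaves implicit in its one-line "combining the lemmas" step.
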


Theorem \ref{theorem:parameter_free} shows that Alg.~\ref{algorithm:proposed_method2} can achieve exactly the same convergence rate as the one shown in Theorem \ref{theorem:convergence_rate_with_tuned_k} (i.e., the convergence rate with optimal $k$).
Algorithm \ref{algorithm:proposed_method2} requires only $2T$ iterations to find the proper number of active nodes,
whereas grid search requires $nT$ iterations in total.
Thus, Alg.~\ref{algorithm:proposed_method2} can find the appropriate number of active nodes more efficiently than the vanilla grid search.

\section{Related Work}

\paragraph{Decentralized SGD and its Variants:}
The literature on decentralized learning can be traced back to \citet{tsitsiklis1984problems}, and Decentralized SGD \citep{lian2017can} is currently the most widely used for reasons of its simplicity.
Recently, many researchers have improved Decentralized SGD in various aspects.
\citet{pu2018distributed}, \citet{yuan2019exact}, \citet{tang2018d2}, \citet{yuan2021decentlam}, \citet{vogels2021relaysum}, \citet{takezawa2023momentum}, \citet{aketi2023global}, and \citet{di2024double} proposed decentralized learning methods that are robust to data heterogeneity.
\citet{tang2018communication}, \citet{koloskova2019decentralized}, \citet{vogels2020practical}, \citet{kovalev2021linearly}, and \citet{zhao2022beer} proposed communication compression methods.
\citet{liu2022decentralized} analyzed Decentralized SGD with client sampling.

\paragraph{Token Algorithm:}
Token algorithms \citep{bjorn2007simple,dorfman2022adapting,even2023stochastic,hendrikx2023principled} are a variant of decentralized learning methods different from consensus-based decentralized learning methods, such as Decentralized SGD.
In contrast to the consensus-based decentralized learning methods, a parameter called token randomly walks on the topology and is updated on each node.
Since token algorithms have only one parameter, they do not suffer from the issue of parameters held by each node drifting away.
However, their convergence rates cannot achieve linear speedup as that in Decentralized SGD \citep{even2023stochastic}.
In \proposed{}, $k$ parameters randomly move from previous active nodes to the next active nodes and are updated on the active nodes.
Then, by carefully selecting $k$, \proposed{} can alleviate the issue of parameter drifting without sacrificing linear speedup property.

\paragraph{Client Sampling:}
Client sampling is widely studied in federated learning to reduce the communication costs between the central server and nodes \citep{mcmahan2017communication,fraboni2021clustered,wu2023anchor}.
\citet{mcmahan2017communication} proposed sampling a subset of nodes randomly.
\citet{cho2020client} proposed selecting nodes according to the loss values.
\citet{chen2022optimal} and \citet{wang2023delta} proposed sampling nodes according to their gradient norms.
In decentralized learning,
\citet{liu2022decentralized} studied client sampling and analyzed the convergence rate.
However, as discussed in Sec.~\ref{sec:method}, the client sampling cannot alleviate the degradation of the convergence rate caused by a large number of nodes.

\section{Experiment}
%

\subsection{Synthetic Experiment}
\begin{figure}[b!]
\centering
\vskip - 0.2 in
\includegraphics[width=0.85\linewidth]{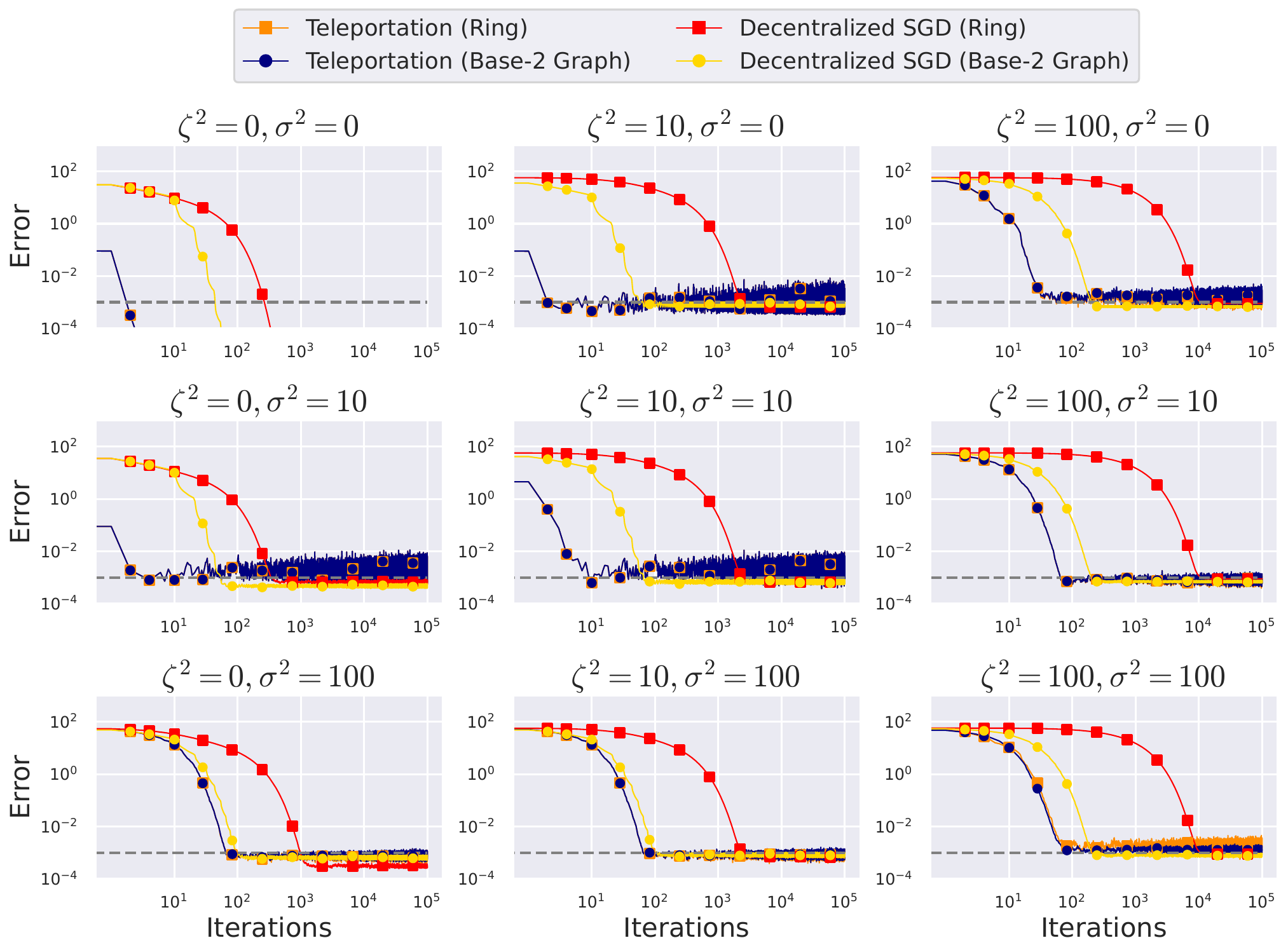}
\vskip - 0.1 in
\caption{Convergence of the error to the target accuracy $0.001$ for different stochastic noise $\sigma^2$ and heterogeneity $\zeta^2$. We plotted $\frac{1}{k} \sum_{v_i \in V^{(t)}_\text{active}} \| \vx_i^{(t)} - \vx^\star \|^2$ and  $\frac{1}{n} \sum_{i=1}^n \| \vx_i^{(t)} - \vx^\star \|^2$ as the error for \proposed{} and Decentralized SGD, respectively. \proposed{} consistently reached the target accuracy faster than Decentralized SGD.}
\label{fig:synthetic_experiments}
\end{figure}

\paragraph{Setting:}
We followed the experimental setting in \citet{koloskova2020unified} and set the number of nodes $n$ to $100$ and loss function as $f_i (\vx) \coloneqq \frac{1}{2} \| \mA_i (\vx - \vb_i ) \|^2$ with $\mA_i \coloneqq \tfrac{i}{\sqrt{n}} \mI_d$ and $d=50$.
$\vb_i$ was drawn from $\mathcal{N} (\mathbf{0}, \tfrac{\zeta^2}{i^2} \mI_d)$ for each node.
We defined the stochastic gradient as $\nabla F_i (\vx; \xi) \coloneqq \nabla f_i (\vx) + \epsilon$ where $\epsilon$ was drawn from $\mathcal{N}(\mathbf{0}, \tfrac{\sigma^2}{d} \mI_d)$ at each iteration.
We used a ring and Base-2 Graph \citep{takezawa2023beyond} as the topology.
The ring is one of the most commonly used topologies for decentralized learning. 
The Base-2 Graph is the state-of-the-art topology for Decentralized SGD, and \citet{takezawa2023beyond} demonstrated that the Base-2 Graph can be superior to the various topologies, e.g., an exponential graph, 1-peer exponential graph \citep{ying2021exponential}, and EquiTopo \citep{song2022communication}.
Note that the Base-2 Graph assumes that any two nodes can communicate as in \proposed{}.
For each setting, we tuned the step size to reach the target accuracy as few iterations as possible. See Sec.~\ref{sec:hyperparameter_setting} for a more detailed setting.

\begin{wraptable}{r}{7.5cm}
\caption{The number of active nodes $k$ selected by Alg.~\ref{algorithm:proposed_method2} in Fig.~\ref{fig:synthetic_experiments}. The left value is the number of active nodes when the topology is a ring, and the right value is the number when the topology is the Base-2 Graph.}
\label{table:number_of_active_nodes}
\vskip - 0.1 in
\centering
\begin{tabular}{lccc}
\toprule
       & $\zeta^2=0$ & $\zeta^2=10$ & $\zeta^2=100$ \\
\midrule
$\sigma^2=0$   & $1$ / $1$ & $1$ / $1$ & $4$ / $4$   \\
$\sigma^2=10$  & $1$ / $1$ & $1$ / $1$ & $8$ / $8$  \\
$\sigma^2=100$ & $8$ / $8$ & $8$ / $8$ & $4$ / $16$  \\
\bottomrule
\end{tabular}
\vskip - 0.2 in
\end{wraptable} 
\paragraph{Results:} We depict the results in 
Fig.~\ref{fig:synthetic_experiments}.
For all cases, \proposed{} required fewer iterations to reach the target accuracy than Decentralized SGD.
By comparing Decentralized SGD with a ring, \proposed{} converged up to three orders of magnitude faster than Decentralized SGD.
Even in the case when the state-of-the-art topology, Base-2 Graph, is used, \proposed{} converged up to $10$ times faster than Decentralized SGD.
Table \ref{table:number_of_active_nodes} lists the number of active nodes $k$ selected by Alg.~\ref{algorithm:proposed_method2}.
Although the total number of nodes $n$ is $100$, a small number of nodes $k$ was selected.
Therefore, activating only a few nodes can prevent the parameters from being far away and lead to a faster convergence rate than that of Decentralized SGD.

\subsection{Neural Networks}
\label{sec:neural_networks}

\begin{figure}[b!]
\vskip - 0.1 in
\begin{subfigure}{1.0\linewidth}
    \centering
    \includegraphics[height=4.1cm]{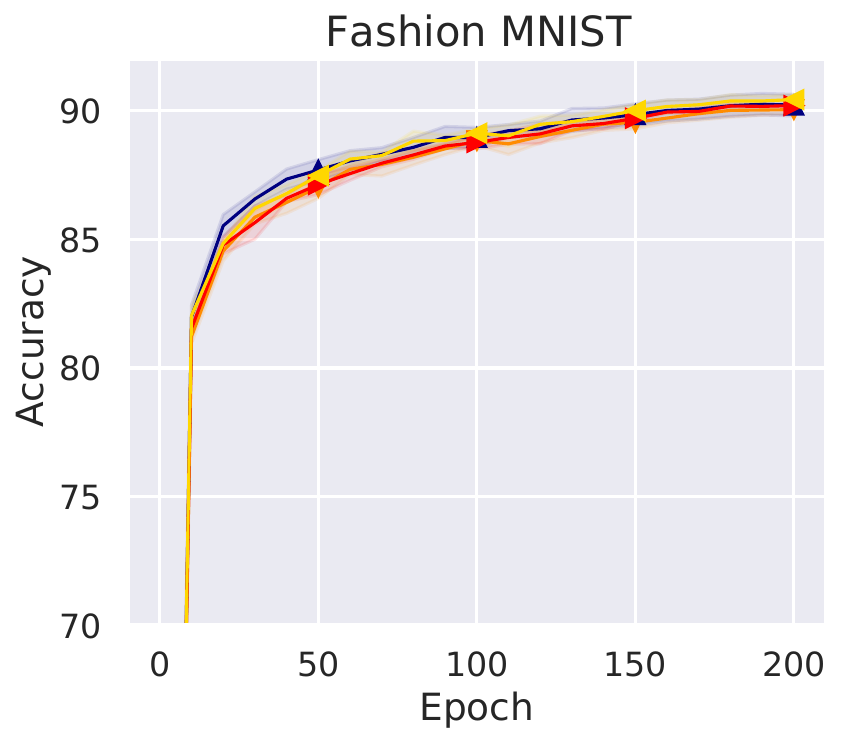}
    \includegraphics[height=4.1cm]{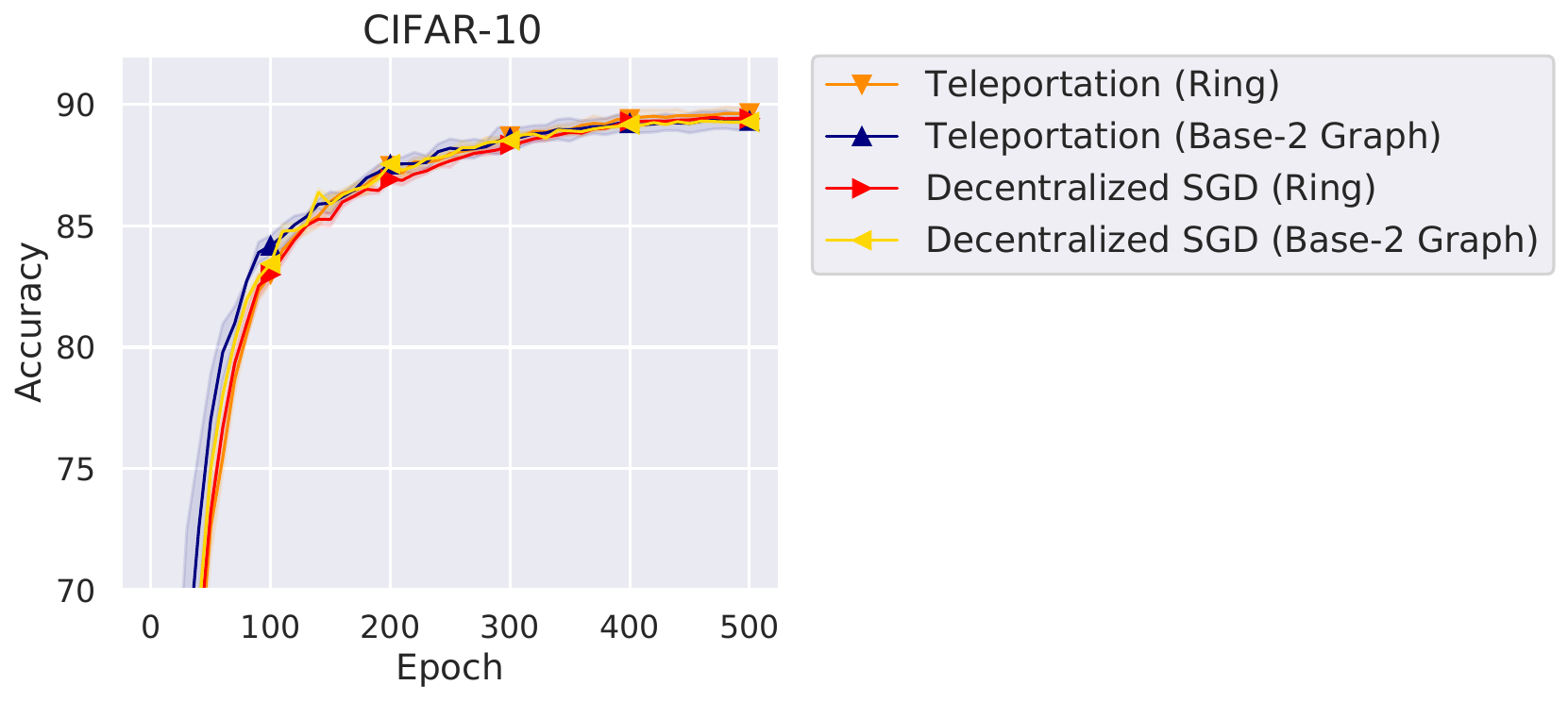}
    \vskip - 0.1 in
    \caption{$\alpha=10.0$ (homogeneous case)}
\end{subfigure}
\begin{subfigure}{1.0\linewidth}
    \centering
    \includegraphics[height=4.1cm]{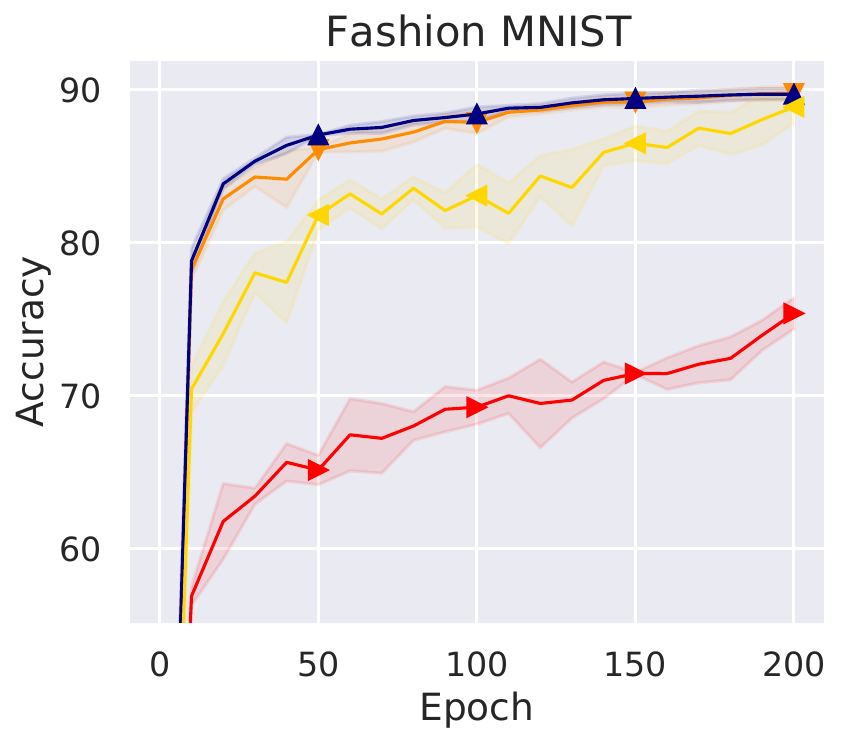}
    \includegraphics[height=4.1cm]{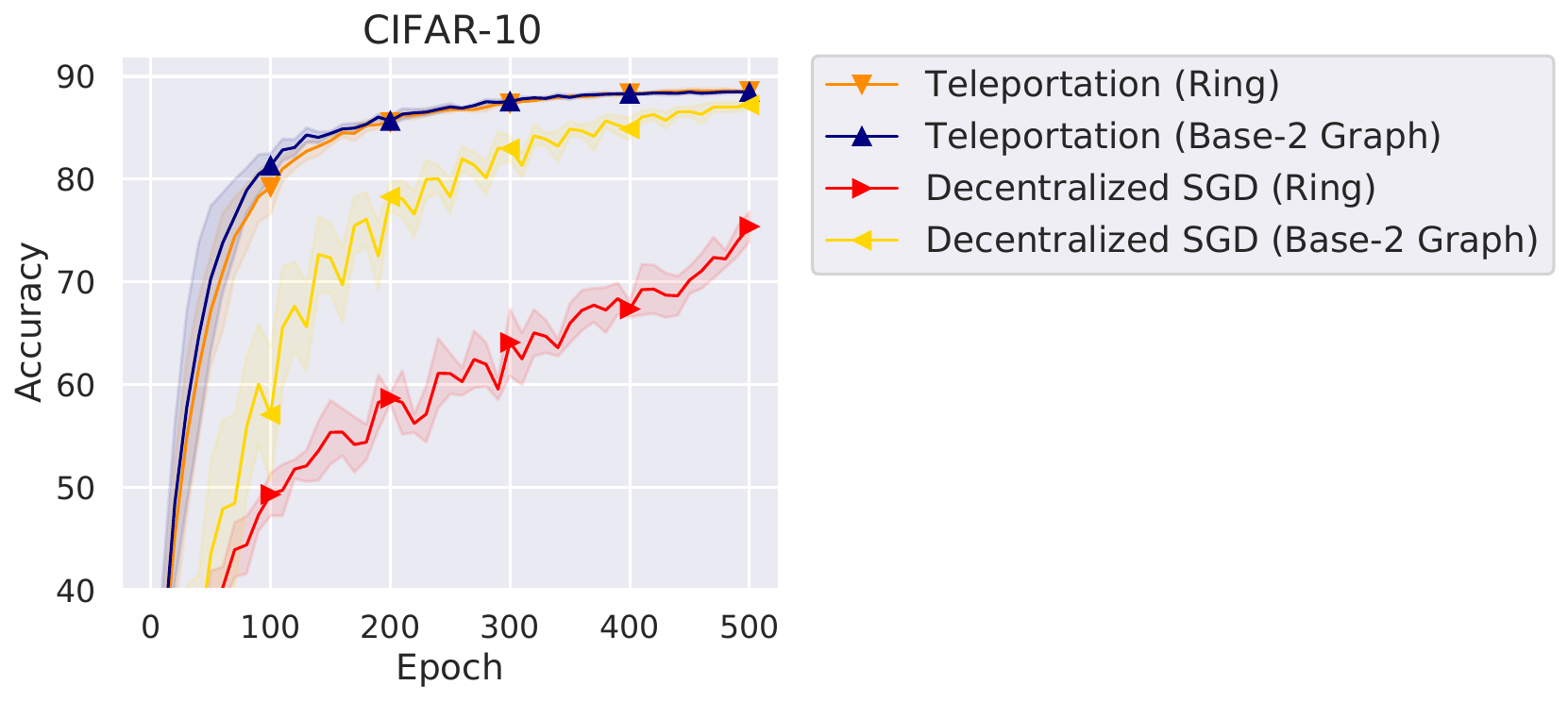}
    \vskip - 0.1 in
    \caption{$\alpha=0.1$ (heterogeneous case)}
\end{subfigure}
\vskip - 0.1 in
\caption{Test accuracy of \proposed{} and Decentralized SGD for different heterogeneity. All methods achieved competitive accuracy in the homogeneous case, while \proposed{} outperformed Decentralized SGD in the heterogeneous case.}
\label{fig:neural_networks}
\vskip - 0.1 in
\end{figure}

\paragraph{Setting:}
We used Fashion MNIST \citep{xiao2017fashion} and CIFAR-10 \citep{krizhevsky09learningmultiple} as datasets and used LeNet \citep{lecun1998gradientbased} and VGG \citep{simonyanZ2014very} as neural networks, respectively.
To use the momentum in \proposed{}, the momentum is copied from the previous active nodes to the next active nodes, as well as the parameters. 
We set the number of nodes $n$ to $25$ and distributed the data to nodes using Dirichlet distribution with parameter $\alpha$ \citep{hsu2019measuring}.
As $\alpha$ approaches zero, each node comes to have a different dataset.
We repeated all experiments with three different seed values, reporting the averages.
See Sec.~\ref{sec:hyperparameter_setting} for a more detailed setting.

\paragraph{Results:}
We depict the results in Fig.~\ref{fig:neural_networks}.
When $\alpha = 0.1$, \proposed{} outperformed Decentralized SGD and trained the neural networks more stably.
When $\alpha = 10.0$, all comparison methods achieved a competitive accuracy.
By comparing the results with $\alpha=0.1$ and $10.0$, the accuracy curves of Decentralized SGD became unstable in the heterogeneous setting,
whereas the accuracy curves of \proposed{} were stable in both cases.
This is because Decentralized SGD suffers from client drift, while \proposed{} can suppress it by initializing the parameters of active nodes by using the parameters of other nodes.

\subsection{Comparison under Heterogeneous Networks}
\label{sec:heterogeneous_networks}

\begin{figure}[t!]
\vskip - 0.2 in
\centering
\includegraphics[height=4.1cm]{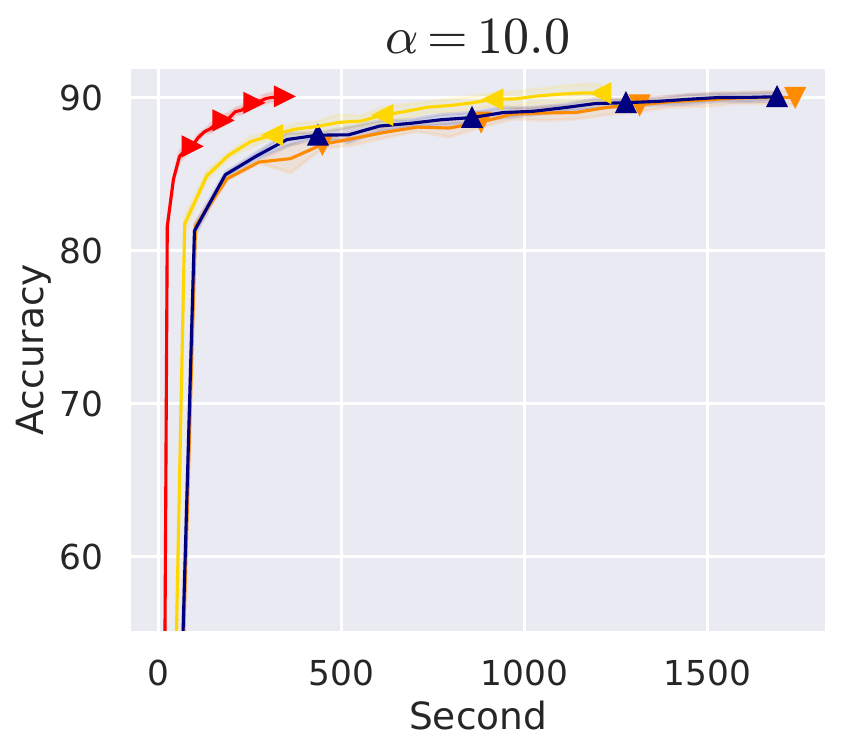}
\includegraphics[height=4.1cm]{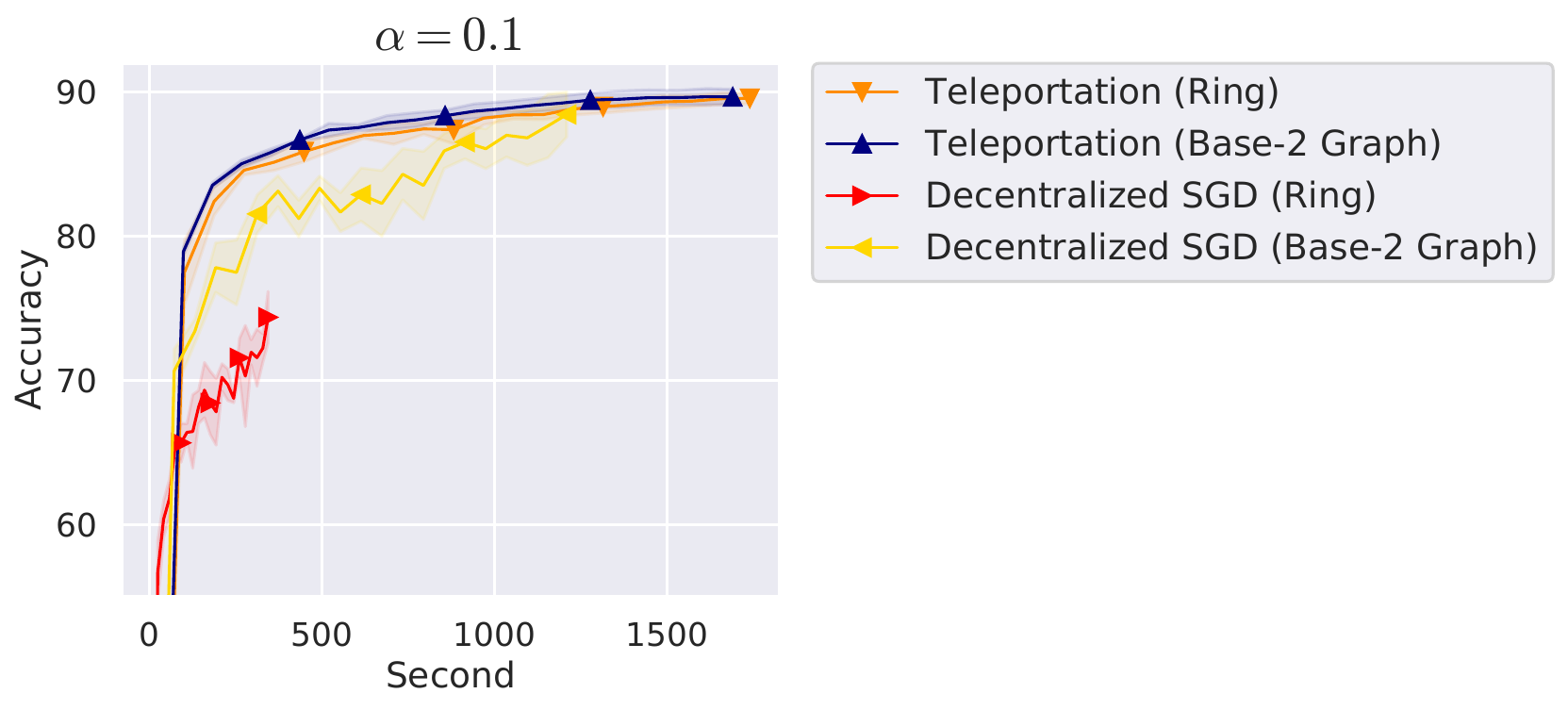}
\vskip - 0.1 in
\caption{Test accuracy of \proposed{} and Decentralized SGD under the heterogeneous networks with $\tau=5$. Decentralized SGD with the ring reached a high accuracy faster than the other methods in the homogeneous case, while \proposed{} reached a high accuracy faster in the heterogeneous case. Note that the number of epochs was set the same for all methods.}
\label{fig:heterogeneous_networks}
\vskip - 0.1 in
\end{figure}

Next, we examine the effectiveness of \proposed{} in terms of wallclock time.
In \proposed{}, active nodes send the parameters to the next active nodes, which are randomly sampled from the entire set of nodes.
Thus, the communication may happen between distant nodes, e.g., nodes located in distant regions.
In this section, we evaluate \proposed{} under the heterogeneous networks where communication costs differ between pairs of nodes.

\paragraph{Setting: }
We used Fashion MNIST and LeNet as with Sec.~\ref{sec:neural_networks}.
To simulate a heterogeneous network, we added $\tau \times \text{mod} (| i - j |, 24)\, \mathrm{\mu s}$ delay when nodes $i$ and $j$ communicate, where $\text{mod} (a, b)$ is the remainder of dividing $a$ by $b$.
We constructed a ring by connecting node $i$ to node $(1 + \text{mod}(i, 25))$.
This ring is the optimal topology in terms of the communication delay, and the communication in Decentralized SGD with the ring was delayed by $\tau\, \mathrm{\mu s}$, whereas the communication in \proposed{} was delayed by at most $24 \tau\, \mathrm{\mu s}$.
The communication in Decentralized SGD with the Base-2 Graph was also delayed by at most $24 \tau\, \mathrm{\mu s}$ since the Base-2 Graph assumes that any two nodes can communicate, and communication occurs between various nodes.

\paragraph{Results:}
We depict the results with $\tau=5$ in Fig.~\ref{fig:heterogeneous_networks}.
In the Appendix, we also show the results with $\tau=0$ in Fig.~\ref{fig:heterogeneous_networks_no_delay}.
Decentralized SGD with the ring finished the training faster than Decentralized SGD with the Base-2 Graph and \proposed{}.
However, when $\alpha=0.1$, the accuracy of Decentralized SGD with the ring increased very slowly, and \proposed{} reached a high accuracy faster than Decentralized SGD.
When $\alpha=10.0$, Decentralized SGD with the ring reached a high accuracy faster than the other methods.
Therefore, Decentralized SGD with a sparse topology is the preferred method when the data distributions are homogeneous, 
while when the data distributions are heterogeneous, \proposed{} can be a preferred method even if the network is heterogeneous
since the methods that can prevent the parameter drifting are necessary to achieve high accuracy.

\section{Conclusion and Limitation}
In this paper, we propose \proposed{}, which activates a subset of nodes and performs gossip averaging on a relatively small topology comprising only active nodes.
We showed that \proposed{} converges to the stationary point without suffering from a large number of nodes by activating a proper number of nodes.
Furthermore, we proposed an efficient hyperparameter tuning method to search for this appropriate number of nodes.
We experimentally investigated the effectiveness of \proposed{}, demonstrating that \proposed{} can converge faster than Decentralized SGD and train neural networks more stably when the data distributions are heterogeneous.

\paragraph{Limitation:}
\proposed{} is not applicable in the case that there exists a pair of nodes that cannot communicate.
Extending \proposed{} to such a setting is one of the most promising future directions.
Furthermore, since active nodes are randomly sampled from the entire set of nodes, communication may happen between distant nodes in the heterogeneous networks.
In Sec.~\ref{sec:heterogeneous_networks}, we demonstrated that when the data distributions are heterogeneous, \proposed{} can be a preferred method even if the network is heterogeneous, 
while it would be a promising future direction to ease this condition to prevent nodes from communicating with distant nodes.

\section*{Acknowledgments}
This work was supported by JSPS KAKENHI Grant Number 23KJ1336.
We thank Anastasia Koloskova for her helpful comments on the practical implementation of \proposed{} described in Sec.~\ref{sec:communication_overlap}.

\bibliography{iclr2024_conference}
\bibliographystyle{iclr2024_conference}

\newpage
\appendix

\section{\proposed{} with Communication Overlap}
In the implementation shown in Alg.~\ref{algorithm:simple_version_proposed_method}, line 7 in Alg.~\ref{algorithm:simple_version_proposed_method} does not start until line 12 is completed.
We can modify Alg.~\ref{algorithm:simple_version_proposed_method} so that the exchanges of parameters in lines 7 and 12 are performed simultaneously.
We show the pseudo-code and illustration in Alg.~\ref{algorithm:proposed_method} and Fig.~\ref{fig:illustration2}.
The communication in lines 7 and 12 in Alg.~\ref{algorithm:simple_version_proposed_method} corresponds to that in lines 9 and 13 in Alg.~\ref{algorithm:proposed_method}.

\label{sec:communication_overlap}
\begin{algorithm}[h]
\begin{minipage}{\linewidth}
\renewcommand\algorithmicindent{1.2em}
\caption{\proposed{}}
\label{algorithm:proposed_method}
\begin{algorithmic}[1]
\State \textbf{Input:} the number of nodes $n$, set of nodes $V_n$, number of active nodes $k$, total number of iteration $T$, step size $\eta$, and topology comprising $k$ active nodes $\mathcal{G}_k = (\{1, \cdots, k\}, E)$.
\State sample active $k$ nodes $V^{(0)}_{\text{active}}$ from $V_n$ without replacement. 
\State assign $\{1,2,\cdots, k\}$ to variables $\{ \texttt{token\_id}_i^{(0)} \mid v_i \in V^{(0)}_\text{active} \}$ randomly without overlap.
\For{$t \in \{ 0, 1, \cdots, T \}$}
    \State sample next active $k$ nodes $V^{(t+1)}_{\text{active}}$ from $V_n$ without replacement, and assign $\{1,2,\cdots, k\}$ to variables $\{ \texttt{token\_id}_i^{(t+1)} \mid v_i \in V^{(t+1)}_\text{active} \}$ randomly without overlap.
    \For{$i \in \{1, 2, \cdots, n\} $ \textbf{in parallel}}
        \If{$v_i \in V^{(t)}_{\text{active}}$}\footnote[3]{The update rule of the parameters of the inactive nodes is not described because the parameters held by the inactive nodes are discarded and initialized with the parameters of the other active nodes when they are activated in line 13. The parameters of inactive nodes do not affect the behavior of active node parameters.}
            \State $\vy_i^{(t)} = \vx_i^{(t)} - \eta \nabla F_i (\vx_i^{(t)}; \xi_i^{(t)})$.
            \State send $\vy_i^{(t)}$ to $v_j \in \{ v_j \in V^{(t+1)}_\text{active} \mid (\texttt{token\_id}_i^{(t)}, \texttt{token\_id}_j^{(t+1)}) \in E \}$.
        \EndIf
        \If{$v_i \in V^{(t+1)}_{\text{active}}$}
            \State receive $\vy_j^{(t)}$ from $v_j \in \{ v_j \in V^{(t)}_\text{active} \mid (\texttt{token\_id}_i^{(t+1)}, \texttt{token\_id}_j^{(t)}) \in E \}$.
            \State $\vx_i^{(t+1)} = \sum_{v_j \in V_\text{active}^{(t)}} W_{\texttt{token\_id}_i^{(t+1)}, \texttt{token\_id}_j^{(t)}} \vy_j^{(t)}$.
        \EndIf
    \EndFor
\EndFor
\end{algorithmic}
\end{minipage}
\end{algorithm}

\begin{figure}[h]
    \centering
    \vskip - 0.2 in
    \includegraphics[width=1.0\linewidth]{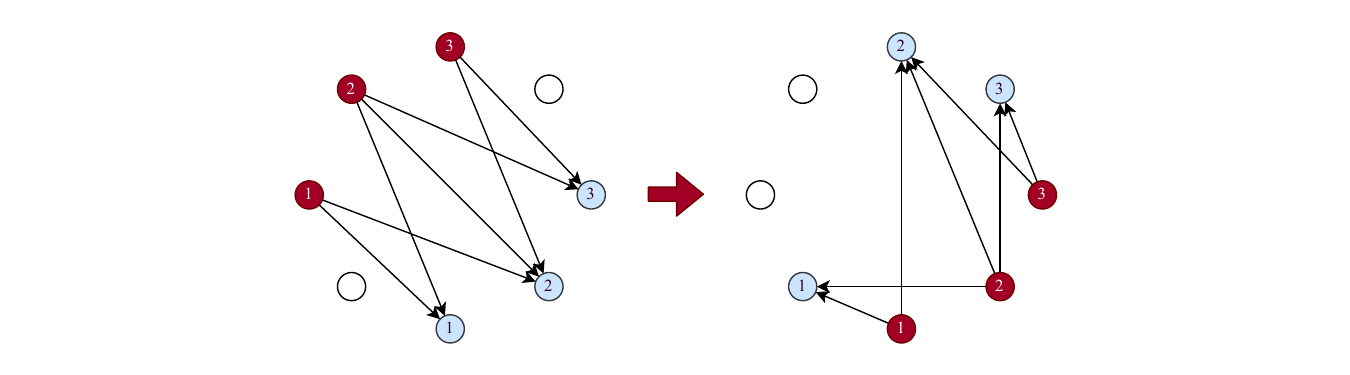}
    \vskip - 0.15 in
    \caption{Illustration of Alg.~\ref{algorithm:proposed_method} with $n=8$ and $k=3$. We use a line as the topology consisting of active nodes $\mathcal{G}_k = (\{1,2,3\}, \{(1,1), (1,2), (2,2), (2,3), (3,3)\})$. The black nodes represent active nodes, and the number written on the node is $\texttt{token\_id}^{(t)}_i$. The blue nodes represent the next active nodes, and the number on the node is $\texttt{token\_id}^{(t+1)}_i$.}
    \label{fig:illustration2}
    \vskip - 0.1 in
\end{figure}

\newpage

\section{Proof of Theorems \ref{theorem:convergence_rate} and \ref{theorem:convergence_rate_with_tuned_k}}
\label{sec:proof_of_convergence_rate}

\subsection{Useful Inequality}

\begin{lemma}
For any $\va, \vb \in \mathbb{R}^d$, it holds that
\begin{align}
    \| \va + \vb \|^2 \leq (1 + \gamma) \| \va \|^2 + (1 + \frac{1}{\gamma}) \| \vb \|^2,
\end{align}
for any $\gamma > 0$.
\end{lemma}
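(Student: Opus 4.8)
The plan is the textbook route: expand the squared norm and control the cross term by a weighted Young's inequality. First I would write, using bilinearity of the inner product,
\begin{align*}
    \| \va + \vb \|^2 = \| \va \|^2 + 2 \langle \va, \vb \rangle + \| \vb \|^2 .
\end{align*}
Next I would bound the inner-product term. For any fixed $\gamma > 0$, expanding the nonnegative quantity $\bigl\| \sqrt{\gamma}\, \va - \tfrac{1}{\sqrt{\gamma}}\, \vb \bigr\|^2 \geq 0$ gives $\gamma \| \va \|^2 - 2 \langle \va, \vb \rangle + \tfrac{1}{\gamma} \| \vb \|^2 \geq 0$, i.e.\ $2 \langle \va, \vb \rangle \leq \gamma \| \va \|^2 + \tfrac{1}{\gamma} \| \vb \|^2$. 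Here the hypothesis $\gamma > 0$ is exactly what makes $\sqrt{\gamma}$ and $1/\sqrt{\gamma}$ well defined, so this is the only place the assumption is used.

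Then I would substitute this bound into the expansion and collect the coefficients of $\| \va \|^2$ and $\| \vb \|^2$ to obtain
\begin{align*}
    \| \va + \vb \|^2 \leq \| \va \|^2 + \gamma \| \va \|^2 + \tfrac{1}{\gamma} \| \vb \|^2 + \| \vb \|^2 = (1 + \gamma) \| \va \|^2 + \bigl(1 + \tfrac{1}{\gamma}\bigr) \| \vb \|^2,
\end{align*}
which is the claimed inequality. There is no real obstacle here: the argument is two lines of algebra, and the only point requiring a moment's care is the sign bookkeeping in the expansion of $\bigl\| \sqrt{\gamma}\, \va - \tfrac{1}{\sqrt{\gamma}}\, \vb \bigr\|^2$ so that the cross term comes out with the correct orientation.
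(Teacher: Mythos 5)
Your proof is correct: the expansion of $\| \va + \vb \|^2$ plus the weighted Young bound $2 \langle \va, \vb \rangle \leq \gamma \| \va \|^2 + \tfrac{1}{\gamma} \| \vb \|^2$ (obtained from $\bigl\| \sqrt{\gamma}\, \va - \tfrac{1}{\sqrt{\gamma}}\, \vb \bigr\|^2 \geq 0$) is exactly the standard argument, and the paper itself states this lemma as a well-known auxiliary inequality without giving any proof. Nothing to add.
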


\begin{lemma}
For any $\va, \vb \in \mathbb{R}^d$, it holds that
\begin{align}
    2 \langle \va, \vb \rangle \leq \| \va \|^2 + \| \vb \|^2.
\end{align}
\end{lemma}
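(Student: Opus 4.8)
The plan is to deduce the claim from the single fact that a squared Euclidean norm is nonnegative, namely that $\| \va - \vb \|^2 \geq 0$ for all $\va, \vb \in \mathbb{R}^d$. This is the most economical route because it uses only the inner-product structure and requires no auxiliary inequalities.

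First I would expand $\| \va - \vb \|^2$ using bilinearity and symmetry of the standard inner product:
\begin{align*}
    \| \va - \vb \|^2 = \langle \va - \vb, \va - \vb \rangle = \| \va \|^2 - 2 \langle \va, \vb \rangle + \| \vb \|^2.
\end{align*}
Since the left-hand side is a squared norm it is nonnegative, so I would conclude $\| \va \|^2 - 2 \langle \va, \vb \rangle + \| \vb \|^2 \geq 0$. Rearranging this single inequality to isolate the cross term gives $2 \langle \va, \vb \rangle \leq \| \va \|^2 + \| \vb \|^2$, which is exactly the stated bound, valid for every pair $\va, \vb \in \mathbb{R}^d$.

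As an alternative I would note the claim also follows from the Cauchy--Schwarz inequality $\langle \va, \vb \rangle \leq \| \va \| \, \| \vb \|$ combined with the scalar AM--GM inequality $\| \va \| \, \| \vb \| \leq \tfrac{1}{2}(\| \va \|^2 + \| \vb \|^2)$; chaining the two reproduces the same bound. I prefer the first route since it is self-contained and avoids invoking Cauchy--Schwarz, and it makes transparent that equality holds precisely when $\va = \vb$.

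There is no substantive obstacle here: the statement is the special case $\gamma = 1$ of the preceding Young-type lemma (with the product term written as an inner product rather than bounded in absolute value), and the entire argument is a one-line expansion of a nonnegative square. The only minor care needed is to keep the bilinear expansion correct in sign so that the $-2\langle \va, \vb \rangle$ term lands on the correct side when rearranged.
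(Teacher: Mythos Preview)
Your argument is correct: expanding $\|\va-\vb\|^2 \geq 0$ and rearranging is the standard one-line proof, and your alternative via Cauchy--Schwarz plus AM--GM is also valid. The paper states this lemma as a standard inequality without proof, so there is no approach to compare against; your write-up is fine.
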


\subsection{Notation}
In this section, we introduce the notation used in the proof.
\proposed{} has $n$ parameters, $\{ \vx_i^{(t)} \}_{i=1}^n$, while the parameters of inactive nodes are not used and do not affect the behavior of the parameters of active nodes.
To simplify the notation, we introduce the variables $\{ \vz_m^{(t)} \}_{m=1}^k$ that correspond to the active nodes parameters.

From line 5 in Alg.~\ref{algorithm:proposed_method}, $\texttt{text\_id}^{(t)}_i$ has a different value for each active node $v_i \in V^{(t)}_\text{active}$.
That is, there is one-to-one correspondence between $\{ i \mid v_i \in V^{(t)}_\text{active} \}$ and $\{ 1, 2, \cdots, k \}$.
Let $g^{(t)} : \{i \mid v_i \in V^{(t)}_\text{active}\} \rightarrow \{1, 2, \cdots, k\}$ be the function that takes node index $i$ and returns $\texttt{token\_id}_i^{(t)}$.
Since $g^{(t)}$ is a bijection function, there exists an inverse function $(g^{(t)})^{-1} : \{1, 2, \cdots, k\} \rightarrow \{i \mid v_i \in V^{(t)}_\text{active}\}$.
Using this inverse function, we define variable $\texttt{node\_id}_m^{(t)}$ as follows:
\begin{align*}
    \texttt{node\_id}_m^{(t)} \coloneqq (g^{(t)})^{-1} (m), 
\end{align*}
for any $m \in \{1, 2, \cdots, k\}$. $\texttt{node\_id}_m^{(t)}$ stores the node index $i$ whose $\texttt{token\_id}_i^{(t)}$ stores $m$.
Using this notation, we define $\vz_m^{(t)} \in \mathbb{R}^d$ as follows:
\begin{align*}
    \vz_m^{(t)} = \vx^{(t)}_{\texttt{node\_id}^{(t)}_m},
\end{align*}
for all $m \in \{1, 2, \cdots, k\}$ and $t$. From the definition of $\vz_m^{(t)}$, we can get its update rule as follows:
\begin{align*}
    \vz_m^{(t+1)} 
    &= \vx^{(t)}_{\texttt{node\_id}^{(t+1)}_m} \\
    &= \sum_{v_j \in V^{(t)}_\text{active}} W_{\texttt{token\_id}^{(t+1)}_{\texttt{node\_id}^{(t+1)}_m}, \texttt{token\_id}_j^{(t)}} \left( \vx_j^{(t)} - \eta \nabla F_j (\vx_j^{(t)} ; \xi_j^{(t)}) \right)  \\
    &= \sum_{v_j \in V^{(t)}_\text{active}} W_{m, \texttt{token\_id}_j^{(t)}} \left( \vx_j^{(t)} - \eta \nabla F_j (\vx_j^{(t)} ; \xi_j^{(t)}) \right) \\
    &= \sum_{l=1}^k W_{m, l} \left( \vx^{(t)}_{\texttt{node\_id}^{(t)}_l} - \eta \nabla F_{\texttt{node\_id}^{(t)}_l} (\vx_{\texttt{node\_id}^{(t)}_l}^{(t)} ; \xi_{\texttt{node\_id}^{(t)}_l}^{(t)}) \right) \\
    &= \sum_{l=1}^k W_{m, l} \left( \vz^{(t)}_l - \eta \nabla F_{\texttt{node\_id}^{(t)}_l} (\vz^{(t)}_l ; \xi_{\texttt{node\_id}^{(t)}_l}^{(t)}) \right).
\end{align*}
In the next section, we analyze the convergence behavior of $\{ \vz_m^{(t)} \}_{m=1}^k$.
Note that sets $\{ \vz_m^{(t)} \}_{m=1}^k$ and $\{ \vx_i^{(t)} \mid v_i \in V^{(t)}_\text{active} \}$ are equivalent, and their averages are equivalent, i.e.,
\begin{align}
\label{eq:relationship_between_average_x_and_z}
    \frac{1}{k} \sum_{m=1}^k \vz_m^{(t)} 
    = \frac{1}{k} \sum_{m=1}^k \vx_{\texttt{node\_id}^{(t)}_m}
    = \frac{1}{k} \sum_{i=1}^n \vx_i^{(t)} \mathbf{1}_{v_i \in V_{\text{active}}^{(t)}}
    = \frac{1}{k} \sum_{v_i \in V^{(t)}_\text{active}} \vx_i^{(t)},
\end{align}
where $\mathbf{1}_{v_i \in V_{\text{active}}^{(t)}}$ is an indicator function.
Furthermore, let $\mZ \in \mathbb{R}^{d \times k}$, $\mG \in \mathbb{R}^{d \times k}$, and $\nabla f(\mZ) \in \mathbb{R}^{d \times k}$ as follows:
\begin{align*}
    \mZ^{(t)} &\coloneqq \left( \vz_1^{(t)}, \vz_2^{(t)}, \cdots, \vz_k^{(t)} \right),\\
    \mG^{(t)} &\coloneqq \left( \nabla F_{\texttt{node\_id}^{(t)}_1} (\vz_1^{(t)}; \xi_{\texttt{node\_id}^{(t)}_1}^{(t)}), \cdots, \nabla F_{\texttt{node\_id}^{(t)}_k} (\vz_k^{(t)}; \xi_{\texttt{node\_id}^{(t)}_k}^{(t)}) \right), \\
    \nabla f (\mZ^{(t)}) &\coloneqq \left( \nabla f (\vz_1^{(t)}), \nabla f (\vz_2^{(t)}), \cdots, \nabla f (\vz_k^{(t)}) \right).
\end{align*}
By using the above notation, Alg.~\ref{algorithm:proposed_method} can be rewritten as follows:
\begin{align*}
    \mZ^{(t+1)} = \left( \mZ^{(t)} - \eta \mG^{(t)} \right) \mW^\top. 
\end{align*}

\subsection{Main Proof}
\begin{lemma}
\label{lemma:descent_lemma}
Suppose that Assumptions \ref{assumption:smooth}, \ref{assumption:stochastic_noise}, \ref{assumption:heterogeneity}, and \ref{assumption:spectral_gap} hold.
If the step size satisfies $\eta \leq \tfrac{1}{4 L}$, then it holds that
\begin{align*}
    \mathbb{E} f (\bar{\vz}^{(t+1)}) 
    \leq \mathbb{E} f (\bar{\vz}^{(t)}) - \frac{\eta}{4} \mathbb{E} \left\| \nabla f (\bar{\vz}^{(t)}) \right\|^2 + L^2 \eta \Xi^{(t)} + \frac{L \sigma^2}{2 k} \eta^2 + \frac{L \zeta^2}{2 k} \left( 1 - \frac{k-1}{n-1} \right) \eta^2,
\end{align*}
where $\bar{\vz}^{(t)} \coloneqq \tfrac{1}{k} \sum_{i=1}^k \vz_i^{(t)}$ and $\Xi^{(t)} \coloneqq \frac{1}{k} \sum_{i=1}^k \mathbb{E} \| \vz_i^{(t)} - \bar{\vz}^{(t)} \|^2$.
\end{lemma}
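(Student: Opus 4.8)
The plan is to follow the standard ``descent lemma'' derivation for decentralized SGD, but carefully track the effect of sampling only $k$ of the $n$ nodes without replacement at each step. First I would start from $L$-smoothness (Assumption \ref{assumption:smooth}) applied to $\bar\vz^{(t+1)}$ versus $\bar\vz^{(t)}$, using the fact that averaging rows of $\mZ^{(t)}\mW^\top$ leaves the mean unchanged (since $\mW$ is doubly stochastic), so that $\bar\vz^{(t+1)} = \bar\vz^{(t)} - \tfrac{\eta}{k}\sum_{l=1}^k \nabla F_{\texttt{node\_id}^{(t)}_l}(\vz_l^{(t)};\xi^{(t)}_{\cdot})$. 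This gives
\begin{align*}
    \E f(\bar\vz^{(t+1)}) \leq \E f(\bar\vz^{(t)}) - \eta \E\langle \nabla f(\bar\vz^{(t)}), \tfrac{1}{k}\textstyle\sum_l \nabla f_{\texttt{node\_id}^{(t)}_l}(\vz_l^{(t)})\rangle + \tfrac{L\eta^2}{2}\E\|\tfrac{1}{k}\textstyle\sum_l \nabla F_{\texttt{node\_id}^{(t)}_l}(\vz_l^{(t)};\xi)\|^2.
\end{align*}

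Next I would handle the two resulting terms. For the inner-product term, I would condition on the set $V^{(t)}_\text{active}$ and the current iterates, take expectation over the random assignment / sampling, and use the key identity that a uniformly random $k$-subset of $\{1,\dots,n\}$ without replacement has $\E[\tfrac1k\sum_{i\in S}\nabla f_i(\vx)] = \nabla f(\vx)$ for any fixed $\vx$; then I would use $2\langle\va,\vb\rangle \le \|\va\|^2+\|\vb\|^2$ and consensus-error bookkeeping (adding and subtracting $\nabla f_i(\bar\vz^{(t)})$, bounding $\|\nabla f_i(\vz_i^{(t)}) - \nabla f_i(\bar\vz^{(t)})\|^2 \le L^2\|\vz_i^{(t)}-\bar\vz^{(t)}\|^2$ by smoothness) to extract the $-\tfrac{\eta}{4}\|\nabla f(\bar\vz^{(t)})\|^2$ term and an $L^2\eta\,\Xi^{(t)}$ term. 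For the squared-norm term, I would split into the variance of the stochastic noise and the (deterministic-gradient) part: the noise part contributes $\tfrac{L\sigma^2}{2k}\eta^2$ because the $k$ stochastic gradients are independent across active nodes and Assumption \ref{assumption:stochastic_noise} bounds each one's variance by $\sigma^2$, giving a $1/k$ factor after averaging; the deterministic part is absorbed into the already-extracted terms once $\eta \le 1/(4L)$.

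The crux — and the step I expect to be the main obstacle — is obtaining the factor $\bigl(1 - \tfrac{k-1}{n-1}\bigr)$ on the heterogeneity term $\tfrac{L\zeta^2}{2k}\eta^2$. This is precisely the variance-reduction effect of sampling without replacement: writing $\bar g_S \coloneqq \tfrac1k\sum_{i\in S}\nabla f_i(\bar\vz^{(t)})$ for a uniformly random $k$-subset $S$, one has $\E\|\bar g_S - \nabla f(\bar\vz^{(t)})\|^2 = \tfrac{1}{k}\cdot\tfrac{n-k}{n-1}\cdot\tfrac1n\sum_{i=1}^n\|\nabla f_i(\bar\vz^{(t)}) - \nabla f(\bar\vz^{(t)})\|^2 \le \tfrac{\zeta^2}{k}\bigl(1-\tfrac{k-1}{n-1}\bigr)$ by the standard finite-population-correction formula for the variance of a sample mean (here $\tfrac{n-k}{n-1} = 1 - \tfrac{k-1}{n-1}$), combined with Assumption \ref{assumption:heterogeneity}. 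I would need to be careful that this bound is applied at the deterministic point $\bar\vz^{(t)}$ (so the randomness of $S$ is independent of the gradients being averaged), and that the cross terms between the heterogeneity-induced deviation and the noise vanish in expectation. Once these pieces are assembled, collecting the noise term $\tfrac{L\sigma^2}{2k}\eta^2$, the finite-population heterogeneity term $\tfrac{L\zeta^2}{2k}(1-\tfrac{k-1}{n-1})\eta^2$, the consensus term $L^2\eta\,\Xi^{(t)}$, and the descent term $-\tfrac\eta4\|\nabla f(\bar\vz^{(t)})\|^2$, and using $\eta\le 1/(4L)$ to dominate leftover quadratic-in-$\eta$ contributions, yields exactly the claimed inequality.
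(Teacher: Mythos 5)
Your proposal follows essentially the same route as the paper's proof: the average is preserved because $\mW$ is doubly stochastic, smoothness gives the descent inequality, the uniformly random token assignment gives unbiasedness of the sampled gradients, the stochastic-noise part contributes the $\sigma^2/k$ term, the without-replacement (finite-population) correction yields the factor $1-\tfrac{k-1}{n-1}$ on the heterogeneity term, and the consensus terms together with $\eta\le\tfrac{1}{4L}$ give the stated bound. The only minor deviation is that the paper applies the without-replacement variance bound directly to $\tfrac{1}{k}\sum_{j}\bigl(\nabla f_{\texttt{node\_id}_j^{(t)}}(\vz_j^{(t)})-\nabla f(\vz_j^{(t)})\bigr)$ at the individual iterates, whereas you apply it at the common point $\bar{\vz}^{(t)}$ and control the $\vz_j^{(t)}$-versus-$\bar{\vz}^{(t)}$ discrepancy by smoothness, an equivalent (arguably more careful) bookkeeping choice that affects only absolute constants.
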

\begin{proof}
By calculating the average of the update rule of $\vz_i^{(t)}$, we obtain
\begin{align*}
    \bar{\vz}^{(t+1)} 
    &= \frac{1}{k} \sum_{m=1}^k \sum_{l=1}^k W_{m, l} \left( \vz^{(t)}_l - \eta \nabla F_{\texttt{node\_id}^{(t)}_l} (\vz^{(t)}_l ; \xi_{\texttt{node\_id}^{(t)}_l}^{(t)}) \right) \\
    &= \bar{\vz}^{(t)} - \frac{\eta}{k} \sum_{l=1}^k \nabla F_{\texttt{node\_id}^{(t)}_l} (\vz^{(t)}_l ; \xi_{\texttt{node\_id}^{(t)}_l}^{(t)}),
\end{align*}
where we use $\sum_{m=1}^k W_{m,l} = 1$.
Using Assumption \ref{assumption:smooth}, we get
\begin{align*}
    &\mathbb{E}_{t+1} f (\bar{\vz}^{(t+1)}) \\ 
    &= \mathbb{E}_{t+1} f \left(  \bar{\vz}^{(t)} - \frac{\eta}{k} \sum_{j=1}^k  \nabla F_{\texttt{node\_id}^{(t)}_j} (\vz^{(t)}_j ; \xi_{\texttt{node\_id}^{(t)}_j}^{(t)}) \right) \\
    &\leq f ( \bar{\vz}^{(t)} ) 
    - \eta \left\langle \nabla f (\bar{\vz}^{(t)}), \frac{1}{k} \sum_{j=1}^k \mathbb{E}_{t+1} \nabla F_{\texttt{node\_id}^{(t)}_j} (\vz^{(t)}_j ; \xi_{\texttt{node\_id}^{(t)}_j}^{(t)}) \right\rangle \\
    &\qquad + \frac{L \eta^2}{2} \mathbb{E}_{t+1} \left\| \frac{1}{k} \sum_{j=1}^k \nabla F_{\texttt{node\_id}^{(t)}_j} (\vz^{(t)}_j ; \xi_{\texttt{node\_id}^{(t)}_j}^{(t)}) \right\|^2 \\
    &= f ( \bar{\vz}^{(t)} ) 
    - \eta \underbrace{\left\langle \nabla f (\bar{\vz}^{(t)}), \frac{1}{k} \sum_{j=1}^k \nabla f (\vz_j^{(t)}) \right\rangle}_{T_1} 
    + \frac{L \eta^2}{2} \underbrace{\mathbb{E}_{t+1} \left\| \frac{1}{k} \sum_{j=1}^k \nabla F_{\texttt{node\_id}^{(t)}_j}^{(t)} (\vz_j^{(t)} ; \xi_{\texttt{node\_id}^{(t)}_j}^{(t)}) \right\|^2}_{T_2},
\end{align*}
where we use the fact that active nodes are randomly selected, i.e., $\texttt{node\_id}_j^{(t)}$ is randomly assigned to $\{1, 2, \cdots, n\}$, in the last equality.
$T_1$ and $T_2$ are bounded as follows:
\begin{align*}
    - T_1 
    &= - \left\| \nabla f (\bar{\vz}^{(t)}) \right\|^2 + \left\langle \nabla f (\bar{\vz}^{(t)}), \frac{1}{k} \sum_{j=1}^k \nabla f (\vz_j^{(t)}) - \nabla f (\bar{\vz}^{(t)}) \right\rangle \\
    &\leq - \frac{1}{2} \left\| \nabla f (\bar{\vz}^{(t)}) \right\|^2 + \frac{1}{2} \left\| \frac{1}{k} \sum_{j=1}^k \nabla f (\vz_j^{(t)}) - \nabla f (\bar{\vz}^{(t)}) \right\|^2  \\
    &\leq - \frac{1}{2} \left\| \nabla f (\bar{\vz}^{(t)}) \right\|^2 + \frac{1}{2 k} \sum_{j=1}^k \left\| \nabla f (\vz_j^{(t)}) - \nabla f (\bar{\vz}^{(t)}) \right\|^2 \\
    &\leq - \frac{1}{2} \left\| \nabla f (\bar{\vz}^{(t)}) \right\|^2 + \frac{L^2}{2 k} \sum_{j=1}^k \left\| \vz_j^{(t)} - \bar{\vz}^{(t)} \right\|^2.
\end{align*}
\begin{align*}
    T_2 
    &= \mathbb{E}_{t+1} \left\| \frac{1}{k} \sum_{j=1}^k \nabla f_{\texttt{node\_id}^{(t)}_j}^{(t)} (\vz_j^{(t)})\right\|^2  + \mathbb{E}_{t+1} \left\| \frac{1}{k} \sum_{j=1}^k \nabla F_{\texttt{node\_id}^{(t)}_j} (\vz_j^{(t)} ; \xi_{\texttt{node\_id}^{(t)}_j}^{(t)}) - \nabla f_{\texttt{node\_id}^{(t)}_j} (\vz_j^{(t)})\right\|^2 \\
    &\leq \mathbb{E}_{t+1} \left\| \frac{1}{k} \sum_{j=1}^k \nabla f_{\texttt{node\_id}^{(t)}_j} (\vz_j^{(t)})\right\|^2  + \frac{\sigma^2}{k} \\
    &\leq \left\| \frac{1}{k} \sum_{j=1}^k \nabla f (\vz_j^{(t)}) \right\|^2 + \mathbb{E}_{t+1} \left\| \frac{1}{k} \sum_{j=1}^k \nabla f_{\texttt{node\_id}^{(t)}_j} (\vz_j^{(t)}) - \nabla f (\vz_j^{(t)}) \right\|^2  + \frac{\sigma^2}{k} \\
    &\leq \left\| \frac{1}{k} \sum_{j=1}^k \nabla f (\vz_j^{(t)}) \right\|^2 + \frac{\zeta^2}{k} \left( 1 - \frac{k-1}{n-1} \right)  + \frac{\sigma^2}{k} \\
    &\leq 2 \left\| \frac{1}{k} \sum_{j=1}^k \nabla f (\vz_j^{(t)}) - \nabla f (\bar{\vz}^{(t)}) \right\|^2 + 2 \left\| \nabla f (\bar{\vz}^{(t)}) \right\|^2 + \frac{\zeta^2}{k} \left( 1 - \frac{k-1}{n-1} \right)  + \frac{\sigma^2}{k} \\
    &\leq \frac{2 L^2}{k} \sum_{j=1}^k \left\| \vz_j^{(t)} - \bar{\vz}^{(t)} \right\|^2 + 2 \left\| \nabla f (\bar{\vz}^{(t)}) \right\|^2 + \frac{\zeta^2}{k} \left( 1 - \frac{k-1}{n-1} \right)  + \frac{\sigma^2}{k},
\end{align*}
where we use the fact that the active nodes $V^{(t)}_\text{active}$ are sampled from $V_n$ without replacement in the third inequality.
Using the above inequalities, we obtain
\begin{align*}
    &\mathbb{E}_{t+1} f (\bar{\vz}) 
    \leq f ( \bar{\vz}^{(t)} )
    - \frac{\eta}{2} \left\| \nabla f (\bar{\vz}^{(t)}) \right\|^2 + \frac{L^2 \eta}{2 k} \sum_{j=1}^k \left\| \vz_j^{(t)} - \bar{\vz}^{(t)} \right\|^2 \\
    &+ \frac{L^3 \eta^2}{k} \sum_{j=1}^k \left\| \vz_j^{(t)} - \bar{\vz}^{(t)} \right\|^2 + L \eta^2 \| \nabla f (\bar{\vz}^{(t)}) \|^2 + \frac{L \zeta^2}{2 k} \left( 1 - \frac{k-1}{n-1} \right) \eta^2 + \frac{L \sigma^2}{2 k} \eta^2.
\end{align*}
Using $\eta \leq \frac{1}{4 L}$, we obtain the desired result.
\end{proof}

\begin{lemma}
\label{lemma:consensus_distance}
Suppose that Assumptions \ref{assumption:smooth}, \ref{assumption:stochastic_noise}, \ref{assumption:heterogeneity}, and \ref{assumption:spectral_gap} hold.
If the step size satisfies $\eta \leq \tfrac{p_k}{\sqrt{24} L}$, then it holds that
\begin{align*}
    \Xi^{(t+1)} \leq (1 - \frac{p_k}{4}) \Xi^{(t)} + \frac{6 (1 - p_k) \eta^2}{p_k} \mathbb{E} \left\| \nabla f (\bar{\vz}^{(t)}) \right\|^2 + (1 - p_k) (\sigma^2 + \zeta^2) \eta^2,
\end{align*}
where $\bar{\vz}^{(t)} \coloneqq \tfrac{1}{k} \sum_{i=1}^k \vz_i^{(t)}$ and $\Xi^{(t)} \coloneqq \frac{1}{k} \sum_{i=1}^k \mathbb{E} \| \vz_i^{(t)} - \bar{\vz}^{(t)} \|^2$.
\end{lemma}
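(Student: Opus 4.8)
The plan is to derive the consensus recursion in the standard way used for Decentralized SGD (as in \citet{koloskova2020unified}), but carefully tracking that the matrix $\mW$ here acts on the $k$ active-node variables $\{\vz_m^{(t)}\}_{m=1}^k$, and that the stochastic-gradient term $\mG^{(t)}$ involves functions $\nabla F_{\texttt{node\_id}^{(t)}_l}$ whose indices are a \emph{uniformly random size-$k$ subset without replacement} of $\{1,\dots,n\}$. First I would write $\mZ^{(t+1)} - \bar{\mZ}^{(t+1)} = (\mZ^{(t)} - \eta \mG^{(t)})\mW^\top - \overline{(\mZ^{(t)} - \eta \mG^{(t)})\mW^\top}$ and use that $\bar{\mZ}\mW^\top$ has the same average as $\bar{\mZ}$ (since $\mW$ is doubly stochastic) to get $\mZ^{(t+1)} - \bar{\mZ}^{(t+1)} = \big((\mZ^{(t)} - \bar\mZ^{(t)}) - \eta(\mG^{(t)} - \bar\mG^{(t)})\big)\mW^\top$. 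Then $\Xi^{(t+1)} = \tfrac1k \mathbb{E}\|\mZ^{(t+1)} - \bar\mZ^{(t+1)}\|_F^2$, and I would apply Assumption~\ref{assumption:spectral_gap} (the $k\times k$ spectral-gap inequality) to the matrix $(\mZ^{(t)} - \bar\mZ^{(t)}) - \eta(\mG^{(t)} - \bar\mG^{(t)})$ after taking expectation over the stochastic gradients. This gives $\Xi^{(t+1)} \le \tfrac{1-p_k}{k}\,\mathbb{E}\|(\mZ^{(t)} - \bar\mZ^{(t)}) - \eta(\mG^{(t)} - \bar\mG^{(t)})\|_F^2$.

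Next I would split the squared norm: first peel off the stochastic noise using Assumption~\ref{assumption:stochastic_noise} (conditionally independent mean-zero noise across the $k$ active nodes, contributing at most $\sigma^2$ per node, hence $k\sigma^2$ total, and it survives the centering since centering only reduces variance), leaving $\Xi^{(t+1)} \le (1-p_k)\big[\tfrac1k\mathbb{E}\|(\mZ^{(t)} - \bar\mZ^{(t)}) - \eta(\nabla \hat f(\mZ^{(t)}) - \overline{\nabla\hat f(\mZ^{(t)})})\|_F^2 + \sigma^2\eta^2\big]$, where $\nabla\hat f$ denotes the stack of the (deterministic) local-loss gradients $\nabla f_{\texttt{node\_id}^{(t)}_l}$. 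Then apply the Young-type inequality (Lemma~1 in the excerpt, i.e. $\|\va+\vb\|^2 \le (1+\gamma)\|\va\|^2 + (1+\tfrac1\gamma)\|\vb\|^2$) with $\gamma = \tfrac{p_k}{2}$ applied to $1-p_k$, so that $(1-p_k)(1+\gamma) \le 1 - \tfrac{p_k}{2}$; this yields a $(1-\tfrac{p_k}{2})\Xi^{(t)}$ term plus $(1+\tfrac{2}{p_k})(1-p_k)\eta^2 \cdot \tfrac1k\mathbb{E}\|\nabla\hat f(\mZ^{(t)}) - \overline{\nabla\hat f(\mZ^{(t)})}\|_F^2$, and bounding $1+\tfrac{2}{p_k}\le \tfrac{3}{p_k}$ gives the $\tfrac{3(1-p_k)}{p_k}\eta^2$ prefactor on that gradient-variance term.

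The remaining work is to bound $\tfrac1k\sum_{l}\|\nabla f_{\texttt{node\_id}^{(t)}_l}(\vz_l^{(t)}) - \tfrac1k\sum_m \nabla f_{\texttt{node\_id}^{(t)}_m}(\vz_m^{(t)})\|^2$ in expectation. I would add and subtract $\nabla f(\vz_l^{(t)})$ and $\nabla f(\bar\vz^{(t)})$: the term $\tfrac1k\sum_l \mathbb{E}\|\nabla f_{\texttt{node\_id}^{(t)}_l}(\vz_l^{(t)}) - \nabla f(\vz_l^{(t)})\|^2$ is handled by Assumption~\ref{assumption:heterogeneity} combined with the sampling-without-replacement fact (as already used in Lemma~\ref{lemma:descent_lemma}, giving the $(1-\tfrac{k-1}{n-1})\zeta^2$ factor, which is $\le \zeta^2$), and the term involving $\nabla f(\vz_l^{(t)}) - \nabla f(\bar\vz^{(t)})$ is bounded by $L^2\Xi^{(t)}$ via smoothness (Assumption~\ref{assumption:smooth}), while the centered quantity contributes $\|\nabla f(\bar\vz^{(t)})\|^2$. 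Folding the $L^2\Xi^{(t)}$ contribution into the $(1-\tfrac{p_k}{2})\Xi^{(t)}$ term costs an extra $\tfrac{3(1-p_k)\eta^2 L^2}{p_k}\Xi^{(t)}$, which is absorbed into a worse leading coefficient $(1-\tfrac{p_k}{4})$ precisely when $\tfrac{3(1-p_k)\eta^2 L^2}{p_k}\le \tfrac{p_k}{4}$, i.e. $\eta \le \tfrac{p_k}{\sqrt{12}\,L}\le \tfrac{p_k}{\sqrt{24}\,L}$ (the stated step-size condition, with room to spare so that lower-order $\eta$-powers in the noise term are also dominated). I expect the main obstacle to be the bookkeeping around the random index set: one must be careful that after conditioning, the identity of which local function sits at which active slot is itself random, so the decomposition must be organized so that the smoothness bound $L^2\Xi^{(t)}$ is taken \emph{pointwise} in the random indices (it holds for every realization) while the heterogeneity bound needs the sampling-without-replacement expectation; getting these two couplings to compose cleanly, rather than incurring cross terms, is the delicate step.
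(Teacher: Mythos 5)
Your overall skeleton (apply Assumption~\ref{assumption:spectral_gap} to $(\mZ^{(t)}-\eta\mG^{(t)})\mW^\top$, use that centering is a contraction, Young's inequality with $\gamma\sim p_k$, then smoothness plus the step-size condition to absorb the $L^2\Xi^{(t)}$ term) is the same as the paper's, and your handling of the random index set is sound. The genuine gap is where you place the heterogeneity. You peel off only the stochastic noise before the Young step and keep the sampled local gradients $\nabla f_{\texttt{node\_id}^{(t)}_l}(\vz_l^{(t)})$ inside the term that receives the $(1+\tfrac{2}{p_k})$ factor; when you later extract the heterogeneity via Assumption~\ref{assumption:heterogeneity}, it therefore appears with prefactor $\tfrac{(1-p_k)\eta^2}{p_k}$ (times the constant from your three-way split), not $(1-p_k)\eta^2$. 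Since this is an additive term that cannot be absorbed by the step-size condition, your route proves only
\begin{align*}
    \Xi^{(t+1)} \leq \Bigl(1 - \tfrac{p_k}{4}\Bigr) \Xi^{(t)} + \tfrac{C (1 - p_k) \eta^2}{p_k}\, \mathbb{E} \bigl\| \nabla f (\bar{\vz}^{(t)}) \bigr\|^2 + (1 - p_k)\, \sigma^2 \eta^2 + \tfrac{C' (1 - p_k)\, \zeta^2 \eta^2}{p_k},
\end{align*}
which is strictly weaker than the stated lemma (off by a $1/p_k$ factor on $\zeta^2$) and would propagate into Lemma~\ref{lemma:sum_of_consensus} and Theorem~\ref{theorem:convergence_rate}, degrading the $\zeta^2$ part of the second term of the rate (e.g.\ for the ring, where $p_k=\Theta(k^{-2})$, this changes the optimal $k$ and the final bound).

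The missing idea is the one the paper exploits: because the active nodes are sampled uniformly (and independently of $\vz^{(t)}$), the sampled local gradient is an \emph{unbiased} estimator of the global gradient, so $\mG^{(t)}$ should be compared to $\nabla f(\mZ^{(t)})$ rather than to the stack of local gradients. The deviation $\mG^{(t)}-\nabla f(\mZ^{(t)})$ is conditionally mean-zero columnwise and has second moment at most $\sigma^2+\zeta^2$ per column, so \emph{both} the stochastic noise and the heterogeneity can be peeled off together before Young's inequality, entering with only the $(1-p_k)\eta^2 k(\sigma^2+\zeta^2)$ prefactor, while the $1/p_k$ Young factor hits only $\|\nabla f(\mZ^{(t)})\|_F^2$, which is then split by smoothness into $2L^2\|\mZ^{(t)}-\bar{\mZ}^{(t)}\|_F^2 + 2k\|\nabla f(\bar{\vz}^{(t)})\|^2$ and absorbed using $\eta\le \tfrac{p_k}{\sqrt{24}L}$. (Two smaller points: the $(1-\tfrac{k-1}{n-1})$ correction you invoke pertains to the variance of the \emph{average} over a without-replacement sample, as in Lemma~\ref{lemma:descent_lemma}, not to the per-slot second moment you need here, though relaxing to $\zeta^2$ makes this harmless; and your chain "$\eta\le\tfrac{p_k}{\sqrt{12}L}\le\tfrac{p_k}{\sqrt{24}L}$" is written with the thresholds in the wrong order, and with your extra splitting constants the required condition may in fact be slightly stronger than $\eta\le\tfrac{p_k}{\sqrt{24}L}$.)
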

\begin{proof}
\begin{align*}
    &\mathbb{E}_{t+1} \left\| \mZ^{(t+1)} - \bar{\mZ}^{(t+1)} \right\|^2_F \\
    &= \mathbb{E}_{t+1} \left\| (\mZ^{(t)} - \eta \mG^{(t)}) \mW^\top - (\bar{\mZ}^{(t)} - \eta \bar{\mG}^{(t)}) \right\|^2_F \\
    &\leq (1 - p_k) \mathbb{E}_{t+1} \left\| (\mZ^{(t)} - \eta \mG^{(t)}) - (\bar{\mZ}^{(t)} - \eta \bar{\mG}^{(t)}) \right\|^2_F \\
    &\leq (1 - p_k) \mathbb{E}_{t+1} \left\| (\mZ^{(t)} - \eta \mG^{(t)}) - \bar{\mZ}^{(t)} \right\|^2_F \\
    &\leq (1 - p_k) \left\| (\mZ^{(t)} - \eta \nabla f (\mZ^{(t)})) - \bar{\mZ}^{(t)} \left\|^2_F + (1 - p_k) \eta^2 \mathbb{E}_{t+1} \right\| \mG^{(t)} - f (\mZ^{(t)}) \right\|^2 \\
    &\leq (1 - p_k) \left\| (\mZ^{(t)} - \eta \nabla f (\mZ^{(t)})) - \bar{\mZ}^{(t)} \right\|^2_F
    + (1 - p_k) k (\sigma^2 + \zeta^2) \eta^2 \\
    &\leq (1 - \frac{p_k}{2}) \left\| \mZ^{(t)} - \bar{\mZ}^{(t)} \right\|^2_F  + \frac{3 (1 - p_k) \eta^2}{p_k} \underbrace{\left\| \nabla f (\mZ^{(t)}) \right\|^2_F}_{T} + (1 - p_k) k (\sigma^2 + \zeta^2) \eta^2.
\end{align*}
$T$ is bounded as follows:
\begin{align*}
    T 
    &\leq 2 \left\| \nabla f (\mZ^{(t)}) - \nabla f (\bar{\mZ}^{(t)}) \right\|^2_F + 2 \left\| \nabla f (\bar{\mZ}^{(t)}) \right\|^2_F \\
    &\leq 2 L^2 \left\| \mZ^{(t)} - \bar{\mZ}^{(t)} \right\|^2_F + 2 k \left\| \nabla f (\bar{\vz}^{(t)}) \right\|^2,
\end{align*}
where $\bar{\mZ}^{(t)} \coloneqq \tfrac{1}{k} \mZ^{(t)} \mathbf{1} \mathbf{1}^\top$.
Then, we get
\begin{align*}
    &\mathbb{E}_{t+1} \left\| \mZ^{(t+1)} - \bar{\mZ}^{(t+1)} \right\|^2_F \\
    &\leq (1 - \frac{p_k}{2}) \left\| \mZ^{(t)} - \bar{\mZ}^{(t)} \right\|^2_F  + \frac{6 (1 - p_k) L^2 \eta^2}{p_k} \left\| \mZ^{(t)} - \bar{\mZ}^{(t)} \right\|^2_F \\
    &\qquad + \frac{6 (1 - p_k) k \eta^2}{p_k} \left\| \nabla f (\bar{\vz}^{(t)}) \right\|^2 + (1 - p_k) k (\sigma^2 + \zeta^2) \eta^2.
\end{align*}
Using $\eta \leq \tfrac{p_k}{\sqrt{24} L}$, we obtain the desired result.
\end{proof}

\begin{lemma}
\label{lemma:sum_of_consensus}
Suppose that Assumptions \ref{assumption:smooth}, \ref{assumption:stochastic_noise}, \ref{assumption:heterogeneity}, and \ref{assumption:spectral_gap} hold, and $\{ \vx_i^{(0)} \}_{i=1}^n$ is initialized with the same parameter $\bar{\vx}^{(0)}$.
If the step size satisfies $\eta \leq \tfrac{p_k}{\sqrt{24} L}$, then it holds that
\begin{align*}
    \frac{1}{T+1} \sum_{t=0}^T \Xi^{(t)} 
    &\leq \frac{24 (1 - p_k)}{p_k^2 (T+1)} \eta^2 \sum_{t=0}^{T} \mathbb{E} \| \nabla f (\bar{\vz}^{(l)}) \|^2 + \frac{4 (1 - p_k) (\sigma^2 + \zeta^2)}{p_k} \eta^2,
\end{align*}
where $\bar{\vz}^{(t)} \coloneqq \tfrac{1}{k} \sum_{i=1}^k \vz_i^{(t)}$ and $\Xi^{(t)} \coloneqq \frac{1}{k} \sum_{i=1}^k \mathbb{E} \| \vz_i^{(t)} - \bar{\vz}^{(t)} \|^2$.
\end{lemma}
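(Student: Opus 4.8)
The plan is to unroll the one-step recursion from Lemma~\ref{lemma:consensus_distance} and sum it over $t$. Because all nodes are initialized with the common parameter $\bar{\vx}^{(0)}$, we have $\vz_m^{(0)} = \bar{\vz}^{(0)}$ for every $m$, hence $\Xi^{(0)} = 0$, which kills the homogeneous part of the recursion at $t=0$. Setting $a \coloneqq 1 - \tfrac{p_k}{4}$ and $b^{(t)} \coloneqq \tfrac{6(1-p_k)\eta^2}{p_k}\,\mathbb{E}\|\nabla f(\bar{\vz}^{(t)})\|^2 + (1-p_k)(\sigma^2+\zeta^2)\eta^2$, Lemma~\ref{lemma:consensus_distance} reads $\Xi^{(t+1)} \le a\,\Xi^{(t)} + b^{(t)}$, and since $\Xi^{(0)}=0$ a trivial induction gives $\Xi^{(t)} \le \sum_{s=0}^{t-1} a^{\,t-1-s}\, b^{(s)}$. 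Note the step-size hypothesis $\eta \le \tfrac{p_k}{\sqrt{24}L}$ of this lemma is exactly the hypothesis of the present lemma, so it may be invoked directly.

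Next I would sum over $t = 0, \dots, T$ and exchange the order of summation: $\sum_{t=0}^{T}\Xi^{(t)} \le \sum_{s=0}^{T} b^{(s)} \sum_{t=s+1}^{T} a^{\,t-1-s} \le \sum_{s=0}^{T} b^{(s)} \sum_{j=0}^{\infty} a^{\,j} = \tfrac{1}{1-a}\sum_{s=0}^{T} b^{(s)} = \tfrac{4}{p_k}\sum_{s=0}^{T} b^{(s)}$, where $a \in [0,1)$ holds because $p_k \in (0,1]$, and we bounded the finite inner geometric sum by the infinite one, $\sum_{j\ge 0} a^j = \tfrac{1}{1-a} = \tfrac{4}{p_k}$. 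Substituting the definition of $b^{(s)}$ produces the two contributions $\tfrac{4}{p_k}\cdot\tfrac{6(1-p_k)\eta^2}{p_k}\sum_{s=0}^{T}\mathbb{E}\|\nabla f(\bar{\vz}^{(s)})\|^2 = \tfrac{24(1-p_k)\eta^2}{p_k^2}\sum_{s=0}^{T}\mathbb{E}\|\nabla f(\bar{\vz}^{(s)})\|^2$ and $\tfrac{4(1-p_k)(\sigma^2+\zeta^2)\eta^2}{p_k}(T+1)$; dividing both sides by $T+1$ yields precisely the claimed bound.

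The argument requires no new ideas. The only things needing care are: (i) verifying $\Xi^{(0)}=0$ from the shared initialization so the unrolled sum starts cleanly; (ii) the index bookkeeping when swapping the double sum, in particular that $\sum_{t=s+1}^{T} a^{t-1-s}$ runs over exponents $0,\dots,T-1-s$ and is therefore dominated by $\tfrac{1}{1-a}$; and (iii) tracking the constants through to match $\tfrac{24}{p_k^2}$ and $\tfrac{4}{p_k}$ exactly. I do not anticipate a genuine obstacle here — the ``hard'' part is merely keeping the summation indices straight while telescoping the linear recursion.
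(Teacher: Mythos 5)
Your proposal is correct and follows essentially the same route as the paper's proof: unroll the recursion of Lemma~\ref{lemma:consensus_distance} using $\Xi^{(0)}=0$ from the common initialization, sum over $t$, swap the order of summation, and bound the geometric series by $\tfrac{4}{p_k}$ to recover the constants $\tfrac{24}{p_k^2}$ and $\tfrac{4}{p_k}$. The only cosmetic difference is that the paper pre-bounds the noise part's geometric sum before summing over $t$, whereas you carry it through the double-sum swap; the result is identical.
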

\begin{proof}
From Lemma \ref{lemma:consensus_distance}, we get
\begin{align*}
    \Xi^{(t+1)} 
    &\leq \frac{6 (1 - p_k) \eta^2}{p_k} \sum_{l=0}^{t} (1 - \frac{p_k}{4})^{t-l} \mathbb{E} \| \nabla f (\bar{\vz}^{(l)}) \|^2 + (1 - p_k) (\sigma^2 + \zeta^2) \eta^2 \sum_{l=0}^{t} (1 - \frac{p_k}{4})^{t-l} \\
    &\leq \frac{6 (1 - p_k) \eta^2}{p_k} \sum_{l=0}^{t} (1 - \frac{p_k}{4})^{t-l} \mathbb{E} \| \nabla f (\bar{\vz}^{(l)}) \|^2 + \frac{4 (1 - p_k) (\sigma^2 + \zeta^2)}{p_k} \eta^2.
\end{align*}
By summing up the above inequality from $t=0$ to $T-1$, we obtain
\begin{align*}
    \frac{1}{T+1} \sum_{t=1}^T \Xi^{(t)} 
    &\leq \frac{6 (1 - p_k) \eta^2}{p_k (T+1)} \sum_{t=1}^T \sum_{l=0}^{t-1} (1 - \frac{p_k}{4})^{t-l} \mathbb{E} \| \nabla f (\bar{\vz}^{(l)}) \|^2 + \frac{4 (1 - p_k) (\sigma^2 + \zeta^2)}{p_k} \eta^2 \\
    &= \frac{6 (1 - p_k) \eta^2}{p_k (T+1)} \sum_{l=0}^{T-1} \mathbb{E} \| \nabla f (\bar{\vz}^{(l)}) \|^2 \sum_{t=l+1}^T (1 - \frac{p_k}{4})^{t-l} + \frac{4 (1 - p_k) (\sigma^2 + \zeta^2)}{p_k} \eta^2 \\
    &= \frac{6 (1 - p_k) \eta^2}{p_k (T+1)} \sum_{l=0}^{T-1} \mathbb{E} \| \nabla f (\bar{\vz}^{(l)}) \|^2 \sum_{t^\prime=1}^{T-l} (1 - \frac{p_k}{4})^{t^\prime} + \frac{4 (1 - p_k) (\sigma^2 + \zeta^2)}{p_k} \eta^2 \\
    &\leq \frac{24 (1 - p_k) \eta^2}{p_k^2 (T+1)} \sum_{l=0}^{T-1} \mathbb{E} \| \nabla f (\bar{\vz}^{(l)}) \|^2 + \frac{4 (1 - p_k) (\sigma^2 + \zeta^2)}{p_k} \eta^2.
\end{align*}
Using $\Xi^{(0)} = 0$ and $\mathbb{E} \| \nabla f (\bar{\vz}^{(T)}) \|^2 \geq 0$, we obtain the desired result.
\end{proof}

\begin{lemma}
Suppose that Assumptions \ref{assumption:smooth}, \ref{assumption:stochastic_noise}, \ref{assumption:heterogeneity}, and \ref{assumption:spectral_gap} hold, and $\{ \vx_i^{(0)} \}_{i=1}^n$ is initialized with the same parameter $\bar{\vx}^{(0)}$.
If the step size satisfies $\eta \leq \tfrac{p_k}{\sqrt{192} L}$, then it holds that
\begin{align}
    &\frac{1}{T+1} \sum_{t=0}^T \mathbb{E} \| \nabla f (\bar{\vx}^{(t)}_\text{active}) \|^2 \nonumber \\
    &\leq \frac{8 (f(\bar{\vx}^{(0)}) - f^\star)}{(T+1) \eta}
        + \frac{32 L^2 (\sigma^2 + \zeta^2) (1 - p_k)}{p_k} \eta^2 
        + \frac{4 L}{k} \left( \sigma^2 + \left( 1 - \frac{k-1}{n-1} \right) \zeta^2 \right) \eta,
\end{align}
where $\bar{\vx}^{(t)}_\text{active} \coloneqq \tfrac{1}{k} \sum_{v_i \in V^{(t)}_\text{active}} \vx_i^{(t)}$.
\end{lemma}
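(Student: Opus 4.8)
The plan is to chain the three preceding lemmas together. First I would rearrange the descent inequality of Lemma~\ref{lemma:descent_lemma} to isolate the gradient term,
\begin{align*}
\frac{\eta}{4}\,\mathbb{E}\|\nabla f(\bar{\vz}^{(t)})\|^2
\leq \mathbb{E} f(\bar{\vz}^{(t)}) - \mathbb{E} f(\bar{\vz}^{(t+1)}) + L^2\eta\,\Xi^{(t)} + \frac{L\sigma^2}{2k}\eta^2 + \frac{L\zeta^2}{2k}\left(1 - \frac{k-1}{n-1}\right)\eta^2,
\end{align*}
and sum it over $t = 0, 1, \dots, T$. The function-value terms telescope; because all nodes start from $\bar{\vx}^{(0)}$ we have $\bar{\vz}^{(0)} = \bar{\vx}^{(0)}$ by Eq.~\eqref{eq:relationship_between_average_x_and_z}, and Assumption~\ref{assumption:lower_bound} gives $\mathbb{E} f(\bar{\vz}^{(T+1)}) \geq f^\star$. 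Dividing by $\tfrac{\eta(T+1)}{4}$ leaves
\begin{align*}
\frac{1}{T+1}\sum_{t=0}^T \mathbb{E}\|\nabla f(\bar{\vz}^{(t)})\|^2
\leq \frac{4\left(f(\bar{\vx}^{(0)}) - f^\star\right)}{\eta(T+1)} + \frac{4L^2}{T+1}\sum_{t=0}^T \Xi^{(t)} + \frac{2L}{k}\left(\sigma^2 + \left(1 - \frac{k-1}{n-1}\right)\zeta^2\right)\eta.
\end{align*}

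Next I would bound the middle term using Lemma~\ref{lemma:sum_of_consensus}. Substituting that bound reproduces a copy of the left-hand side, scaled by $\tfrac{96 L^2(1-p_k)\eta^2}{p_k^2}$, on the right, together with an extra noise contribution $\tfrac{16 L^2(1-p_k)(\sigma^2+\zeta^2)}{p_k}\eta^2$. The step-size assumption $\eta \leq \tfrac{p_k}{\sqrt{192}\,L}$ of this lemma is chosen exactly so that $\tfrac{96 L^2(1-p_k)\eta^2}{p_k^2} \leq \tfrac{1-p_k}{2} \leq \tfrac12$, which lets me move that copy to the left-hand side; multiplying the remaining inequality by $2$ yields the stated coefficients $8$, $32$, and $4$. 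Finally, $\bar{\vz}^{(t)} = \bar{\vx}^{(t)}_\text{active}$ by Eq.~\eqref{eq:relationship_between_average_x_and_z}, so the left-hand side is exactly $\tfrac{1}{T+1}\sum_{t=0}^T \mathbb{E}\|\nabla f(\bar{\vx}^{(t)}_\text{active})\|^2$.

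I do not expect a conceptual obstacle: the argument is essentially bookkeeping. The only points needing care are (i) checking that the step size $\eta \leq \tfrac{p_k}{\sqrt{192}\,L}$ assumed here also satisfies the weaker requirements $\eta \leq \tfrac{1}{4L}$ and $\eta \leq \tfrac{p_k}{\sqrt{24}\,L}$ under which Lemmas~\ref{lemma:descent_lemma} and~\ref{lemma:sum_of_consensus} were established, which holds since $p_k \in (0,1]$; and (ii) propagating the numerical constants carefully through the telescoping sum and the substitution so the final bound matches the claimed constants exactly.
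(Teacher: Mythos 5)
Your proposal is correct and follows essentially the same route as the paper's proof: rearrange Lemma~\ref{lemma:descent_lemma}, telescope over $t$, substitute the consensus bound of Lemma~\ref{lemma:sum_of_consensus}, and use $\eta \leq \tfrac{p_k}{\sqrt{192}L}$ to absorb the recursive gradient term (coefficient $\tfrac{96L^2(1-p_k)\eta^2}{p_k^2}\leq\tfrac12$) before identifying $\bar{\vz}^{(t)}$ with $\bar{\vx}^{(t)}_{\text{active}}$ via Eq.~\eqref{eq:relationship_between_average_x_and_z}. The constants $8$, $32$, $4$ come out exactly as you describe, so no changes are needed.
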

\begin{proof}
Rearranging Lemma \ref{lemma:descent_lemma}, we obtain
\begin{align*}
    \mathbb{E} \| \nabla f (\bar{\vz}^{(t)}) \|^2
    \leq \frac{4 (\mathbb{E} f (\bar{\vz}^{(t)}) - \mathbb{E} f (\bar{\vz}^{(t+1)}))}{\eta} + 4 L^2 \Xi^{(t)} + \frac{2 L \sigma^2}{k} \eta + \frac{2 L \zeta^2}{k} \left( 1 - \frac{k-1}{n-1} \right) \eta.
\end{align*}
Using Lemma \ref{lemma:sum_of_consensus}, we get
\begin{align*}
    &\frac{1}{T+1} \sum_{t=0}^{T} \mathbb{E} \| \nabla f (\bar{\vz}^{(t)}) \|^2 \\
    &\leq \frac{4 (f (\bar{\vz}^{(0)}) - f^\star)}{(T+1) \eta} + \frac{4 L^2}{T+1} \Xi^{(t)} + \frac{2 L \sigma^2}{k} \eta + \frac{2 L \zeta^2}{k} \left( 1 - \frac{k-1}{n-1} \right) \eta \\
    &\leq \frac{4 (f (\bar{\vz}^{(0)}) - f^\star)}{(T+1) \eta} 
    + \frac{96 L^2 (1 - p_k)}{p_k^2 (T+1)} \eta^2 \sum_{t=0}^{T} \mathbb{E} \| \nabla f (\bar{\vz}^{(l)}) \|^2 
    + \frac{16 L^2 (1 - p_k) (\sigma^2 + \zeta^2)}{p_k} \eta^2 \\
    &\qquad + \frac{2 L \sigma^2}{k} \eta + \frac{2 L \zeta^2}{k} \left( 1 - \frac{k-1}{n-1} \right) \eta.
\end{align*}
Using $\eta \leq \tfrac{p_k}{\sqrt{192} L}$ and $\bar{\vx}_\text{active}^{(t)} = \bar{\vz}^{(t)}$ (see Eq.~(\ref{eq:relationship_between_average_x_and_z})), we obtain the desired result.
\end{proof}

\begin{lemma}
\label{lemma:general_graph}
Suppose that Assumptions \ref{assumption:smooth}, \ref{assumption:stochastic_noise}, \ref{assumption:heterogeneity}, and \ref{assumption:spectral_gap} hold, and $\{ \vx_i^{(0)} \}_{i=1}^n$ is initialized with the same parameter $\bar{\vx}^{(0)}$.
There exists a step size $\eta$ that satisfies
\begin{align}
    &\frac{1}{T+1} \sum_{t=0}^T \mathbb{E} \| \nabla f (\bar{\vx}^{(t)}_\text{active}) \|^2 \nonumber \\
    &\leq \mathcal{O} \left( \sqrt{\frac{L r_0 (\sigma^2 + (1 - \frac{k-1}{n-1})\zeta^2)}{k T}} + \left( \frac{L^2 r_0^2 (\sigma^2 + \zeta^2) (1 - p_k)}{T^2 p_k}\right)^\frac{1}{3} + \frac{L r_0}{T p_k} \right),
\end{align}
where $\bar{\vx}^{(t)}_\text{active} \coloneqq \tfrac{1}{k} \sum_{v_i \in V^{(t)}_\text{active}} \vx_i^{(t)}$ and $r_0 \coloneqq f(\bar{\vx}^{(0)}) - f^\star$.
\end{lemma}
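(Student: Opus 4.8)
The plan is to treat the bound from the immediately preceding lemma purely as a one-dimensional optimization problem in the step size $\eta$. Writing $r_0 \coloneqq f(\bar{\vx}^{(0)}) - f^\star$, that lemma states that for every $\eta \le \eta_{\max} \coloneqq \tfrac{p_k}{\sqrt{192}\,L}$ we have
\begin{align*}
\frac{1}{T+1}\sum_{t=0}^T \mathbb{E}\|\nabla f(\bar{\vx}^{(t)}_\text{active})\|^2 \;\le\; \frac{a}{\eta} + b\,\eta^2 + c\,\eta,
\end{align*}
with $a \coloneqq \tfrac{8 r_0}{T+1}$, $b \coloneqq 32 L^2(\sigma^2+\zeta^2)\tfrac{1-p_k}{p_k}$, and $c \coloneqq \tfrac{4L}{k}\bigl(\sigma^2 + (1-\tfrac{k-1}{n-1})\zeta^2\bigr)$. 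So it suffices to choose $\eta$ near-optimally and read off the resulting rate; this is exactly the step-size tuning argument used in \citet{koloskova2020unified}, which one could alternatively just cite.

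Concretely, I would set $\eta = \min\bigl\{\eta_{\max},\ (a/b)^{1/3},\ \sqrt{a/c}\,\bigr\}$ and split into cases on which quantity attains the minimum. If $\eta = (a/b)^{1/3}$, then $a/\eta = b\eta^2 = (a^2 b)^{1/3}$ and $c\eta \le c\sqrt{a/c} = \sqrt{ac}$; if $\eta = \sqrt{a/c}$, then $a/\eta = c\eta = \sqrt{ac}$ and $b\eta^2 \le b(a/b)^{2/3} = (a^2 b)^{1/3}$; and if the constraint binds, $\eta = \eta_{\max} \le (a/b)^{1/3}$ forces $b\eta_{\max}^2 \le a/\eta_{\max}$ while $\eta_{\max} \le \sqrt{a/c}$ forces $c\eta_{\max}\le a/\eta_{\max}$. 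In every case the right-hand side is
\begin{align*}
O\!\left((a^2 b)^{1/3} + \sqrt{ac} + \tfrac{a}{\eta_{\max}}\right).
\end{align*}

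Finally I would substitute the values of $a,b,c,\eta_{\max}$ and use $T+1 = \Theta(T)$: this gives $\sqrt{ac} = O\bigl(\sqrt{L r_0(\sigma^2+(1-\tfrac{k-1}{n-1})\zeta^2)/(kT)}\bigr)$, $(a^2 b)^{1/3} = O\bigl((L^2 r_0^2(\sigma^2+\zeta^2)(1-p_k)/(T^2 p_k))^{1/3}\bigr)$, and $a/\eta_{\max} = O(L r_0/(T p_k))$, whose sum is precisely the claimed bound. There is no genuinely hard step here — it is bookkeeping layered on the descent and consensus lemmas already established; the only point needing care is handling the feasibility constraint $\eta \le \eta_{\max}$ cleanly in the case analysis (verifying that when $\eta_{\max}$ binds the $b\eta^2$ and $c\eta$ terms are absorbed into $a/\eta_{\max}$) and letting the explicit constant $\sqrt{192}$ vanish into the $O(\cdot)$.
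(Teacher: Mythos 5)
Your proposal is correct and follows essentially the same route as the paper: the paper likewise takes the bound $\tfrac{a}{\eta}+b\eta^{2}+c\eta$ from the preceding lemma and chooses $\eta$ as the minimum of three candidates (of the form $\sqrt{a/c}$ up to the variance term, $(a/b)^{1/3}$, and $\tfrac{p_k}{\sqrt{192}L}$), then does the same three-way case analysis to obtain $\mathcal{O}\bigl((a^{2}b)^{1/3}+\sqrt{ac}+a/\eta_{\max}\bigr)$. The only cosmetic difference is that your bookkeeping is stated in generic $a,b,c$ form (and one should interpret the candidates as $+\infty$ in the degenerate case $\sigma^{2}+\zeta^{2}=0$), which matches the paper's argument up to constants.
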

\begin{proof}
Choosing the step size as follows:
\begin{align*}
\eta = \min \left\{ 
\sqrt{ \frac{2 k r_0}{L (T+1) ( \sigma^2 + \zeta^2 )} }, 
\left( \frac{r_0 p_k}{4 L^2 (T+1) (1 - p_k) (\sigma^2 + \zeta^2)} \right)^\frac{1}{3},
\frac{p_k}{\sqrt{192} L}
\right\},    
\end{align*}
we have three cases:

\begin{itemize}
    \item When $\eta = \sqrt{\frac{2 k r_0}{L (T+1) ( \sigma^2 + (1 - \frac{k-1}{n-1}) \zeta^2 )}}$, $\frac{1}{T+1} \sum_{t=0}^T \mathbb{E} \| \nabla f (\bar{\vx}^{(t)}_\text{active}) \|^2$ is bounded from above by
    \begin{align*}
        \sqrt{\frac{128 L r_0 \left( \sigma^2 + (1 - \frac{k-1}{n-1}) \zeta^2 \right) }{k (T+1)}}
        + \left( \frac{2048 L^2 r_0^2 (1 - p_k) (\sigma^2 + \zeta^2)}{(T+1)^2 p_k} \right)^\frac{1}{3}.
    \end{align*}
    \item When $\eta = \left( \frac{r_0 p_k}{4 L^2 (T+1) (1 - p_k) (\sigma^2 + \zeta^2)} \right)^\frac{1}{3}$, $\frac{1}{T+1} \sum_{t=0}^T \mathbb{E} \| \nabla f (\bar{\vx}^{(t)}_\text{active}) \|^2$ is bounded from above by
    \begin{align*}
        \sqrt{\frac{32 L r_0 \left( \sigma^2 + (1 - \frac{k-1}{n-1}) \zeta^2 \right) }{k (T+1)}}
        + \left( \frac{16384 L^2 r_0^2 (1 - p_k) (\sigma^2 + \zeta^2)}{(T+1)^2 p_k} \right)^\frac{1}{3}.
    \end{align*}
    \item When $\eta = \frac{p_k}{\sqrt{192} L}$, $\frac{1}{T+1} \sum_{t=0}^T \mathbb{E} \| \nabla f (\bar{\vx}^{(t)}_\text{active}) \|^2$ is bounded from above by
    \begin{align*}
        \frac{\sqrt{12288} L r_0}{(T+1) p_k}
        + \sqrt{\frac{32 L r_0 \left( \sigma^2 + (1 - \frac{k-1}{n-1}) \zeta^2 \right) }{k (T+1)}}
        + \left( \frac{2048 L^2 r_0^2 (1 - p_k) (\sigma^2 + \zeta^2)}{(T+1)^2 p_k} \right)^\frac{1}{3}.
    \end{align*}
\end{itemize}
\end{proof}

\begin{lemma}
\label{lemma:convergence_rate_with_ring}
Suppose that Assumptions \ref{assumption:smooth}, \ref{assumption:stochastic_noise}, \ref{assumption:heterogeneity}, and \ref{assumption:spectral_gap} hold, and $\{ \vx_i^{(0)} \}_{i=1}^n$ is initialized with the same parameter $\bar{\vx}^{(0)}$.
Then, if the active nodes are connected by a ring, i.e., $p_k = \Omega (k^{-2})$, there exists a step size $\eta > 0$ and the number of active nodes $k \in \{1, 2, \cdots, n\}$ that satisfies
\begin{align*}
    &\frac{1}{T+1} \sum_{t=0}^T \mathbb{E} \| \nabla f (\bar{\vx}^{(t)}_\text{active}) \|^2 \\
    &\leq \mathcal{O} \left( \sqrt{\frac{L r_0 \sigma^2}{n T}}
    + \left( \frac{L r_0 (\sigma^2 + \zeta^2)^{\frac{3}{4}}}{T }\right)^\frac{4}{7} 
    + \left( \frac{L r_0 (\sigma^2 + \zeta^2)^{\frac{2}{5}}}{T }\right)^\frac{5}{7} 
    + \frac{L r_0}{T} \right),
\end{align*}
where $\bar{\vx}^{(t)}_\text{active} \coloneqq \tfrac{1}{k} \sum_{v_i \in V^{(t)}_\text{active}} \vx_i^{(t)}$ and $r_0 \coloneqq f(\bar{\vx}^{(0)}) - f^\star$.
\end{lemma}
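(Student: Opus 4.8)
The plan is to specialize the general‑$k$ guarantee of Lemma~\ref{lemma:general_graph} to the ring and then tune the number of active nodes. First I would substitute $p_k = \Omega(k^{-2})$ into Lemma~\ref{lemma:general_graph}, using $1 - p_k \le 1$ and $1/p_k = \mathcal{O}(k^2)$, which turns the right‑hand side into
\begin{align*}
    \mathcal{O}\!\left( \sqrt{\frac{L r_0 \left(\sigma^2 + \left(1 - \tfrac{k-1}{n-1}\right)\zeta^2\right)}{k T}} + \left(\frac{L^2 r_0^2 (\sigma^2+\zeta^2)\, k^2}{T^2}\right)^{1/3} + \frac{L r_0\, k^2}{T} \right).
\end{align*}
The first term is decreasing in $k$ while the last two are increasing, so the natural choice balances the first term against the second; solving $\sqrt{L r_0 (\sigma^2+\zeta^2)/(kT)} \asymp \big(L^2 r_0^2 (\sigma^2+\zeta^2) k^2/T^2\big)^{1/3}$ gives the unconstrained optimum $k^\star = \big(T(\sigma^2+\zeta^2)/(L r_0)\big)^{1/7}$, which is exactly the value of $k$ prescribed (after clipping to $\{1,\dots,n\}$) in Theorem~\ref{theorem:convergence_rate_with_tuned_k}.

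Next I would set $k = \max\{1, \min\{\lceil k^\star\rceil, n\}\}$ and split into three cases. \emph{(i)} If $k^\star \le 1$, i.e.\ $T(\sigma^2+\zeta^2) \le L r_0$, then $k = 1$ and $p_1 = 1$, so the middle term vanishes, the first term equals $\sqrt{L r_0(\sigma^2+\zeta^2)/T} \le L r_0/T$ by the case hypothesis, and the whole bound collapses to $\mathcal{O}(L r_0/T)$, the last claimed term. \emph{(ii)} If $1 < k^\star \le n$, then $k^\star \le \lceil k^\star\rceil \le 2k^\star$; by monotonicity the first term at $k$ is at most its value at $k^\star$, which a routine exponent computation identifies with $\big(L r_0(\sigma^2+\zeta^2)^{3/4}/T\big)^{4/7}$, the middle term at $k\le 2k^\star$ is likewise $\mathcal{O}\big((L r_0(\sigma^2+\zeta^2)^{3/4}/T)^{4/7}\big)$, and the last term at $k\le 2k^\star$ is $\mathcal{O}\big((L r_0(\sigma^2+\zeta^2)^{2/5}/T)^{5/7}\big)$ — precisely the second and third claimed terms. \emph{(iii)} If $k^\star > n$, then $k = n$, so $1 - \tfrac{k-1}{n-1} = 0$ makes the first term exactly $\sqrt{L r_0\sigma^2/(nT)}$, while $n < k^\star$ together with the monotonicity of the last two terms bounds them by their values at $k^\star$, i.e.\ by the second and third claimed terms. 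Taking the worst of the three cases yields the stated rate.

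The genuinely routine part is the bookkeeping of exponents when substituting $k^\star$ (or $\lceil k^\star\rceil$) into $k^{2/3}$ and $k^2$. The points that need care — and where I expect the only real friction — are: (a) replacing $k^\star$ by $\lceil k^\star\rceil$ must cost only a constant factor, which holds because $k^\star \ge 1$ in the relevant case, so $\lceil k^\star\rceil \le 2k^\star$; (b) the case boundaries $k^\star \le 1$, $1 < k^\star \le n$, and $k^\star > n$ must interact cleanly with the $\max\{1,\min\{\cdot,n\}\}$ clipping, in particular using $1 - \tfrac{k-1}{n-1} \le 1$ for $k < n$ and $= 0$ for $k = n$; and (c) the constant hidden in $p_k = \Omega(k^{-2})$ must be uniform in $k$, so that $1/p_k = \mathcal{O}(k^2)$ holds with a single constant and the step‑size restriction $\eta \le p_k/(\sqrt{192}\,L)$ required by Lemma~\ref{lemma:general_graph} is already absorbed into its ``there exists $\eta$'' clause.
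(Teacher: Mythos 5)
Your proposal is correct and follows essentially the same route as the paper: plug $p_k=\Omega(k^{-2})$ into Lemma~\ref{lemma:general_graph}, pick $k$ as the clipped value $\max\{1,\min\{\lceil (T(\sigma^2+\zeta^2)/(Lr_0))^{1/7}\rceil,n\}\}$ (the paper writes this as $\lceil (A^3/B^6)^{1/7}\rceil$), and run the same case analysis with the same exponent bookkeeping, including the $k=n$ case where $1-\tfrac{k-1}{n-1}=0$ yields the $\sqrt{Lr_0\sigma^2/(nT)}$ term. The only cosmetic difference is how the degenerate regime is split (you use $T(\sigma^2+\zeta^2)\le Lr_0$, the paper uses $r_0(\sigma^2+\zeta^2)=0$ plus the ceiling absorbing small $k^\star$), which changes nothing substantive.
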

\begin{proof}
From Lemma \ref{lemma:general_graph}, we obtain
\begin{align}
    &\frac{1}{T+1} \sum_{t=0}^T \mathbb{E} \| \nabla f (\bar{\vx}^{(t)}_\text{active}) \|^2 \nonumber \\
    &\leq \mathcal{O} \left( \sqrt{\frac{L r_0 (\sigma^2 + (1 - \frac{k-1}{n-1})\zeta^2)}{k T}} + \left( \frac{L^2 r_0^2 (\sigma^2 + \zeta^2) k^2}{T^2}\right)^\frac{1}{3} + \frac{L r_0 k^2}{T} \right).
    \label{eq:rate_1}
\end{align}
For simplicity, let $A$, $B$, and $C$ denote as follows:
\begin{align*}
    A = \frac{L r_0 (\sigma^2 + \zeta^2)}{T}, \qquad
    B = \left( \frac{L^2 r_0^2 (\sigma^2 + \zeta^2)}{T^2}\right)^\frac{1}{3}, \qquad
    C = \frac{L r_0}{T}.
\end{align*}
Using these notations, we obtain
\begin{align}
    \label{eq:rate_2}
    \frac{1}{T+1} \sum_{t=0}^T \mathbb{E} \| \nabla f (\bar{\vx}^{(t)}_\text{active}) \|^2 
    \leq \mathcal{O} \left( \sqrt{\frac{A}{k}} + B k^\frac{2}{3} + C k^2 \right).
\end{align}
We choose $k$ as follows:
\begin{align*}
    k = \max \left\{ 1, \min \left\{ \Bigl\lceil\left( \frac{A^3}{B^6} \right)^\frac{1}{7}\Bigr\rceil, n \right\} \right\}.
\end{align*}
Note that it holds that $k \in \{1, 2, 3, \cdots, n\}$.

When $r_0 (\sigma^2 + \zeta^2) = 0$, it holds that $k = 1$, $A=0$, and $B=0$.
In this case, $\tfrac{1}{T+1} \sum_{t=0}^T \mathbb{E} \| \nabla f (\bar{\vx}^{(t)}_\text{active}) \|^2$ is bounded from above by
\begin{align*}
    \mathcal{O} \left( \frac{L r_0}{T} \right).
\end{align*}

When $r_0 (\sigma^2 + \zeta^2) > 0$, it holds that
\begin{align*}
    k = \min \left\{ \Bigl\lceil\left( \frac{A^3}{B^6} \right)^\frac{1}{7}\Bigr\rceil, n \right\}.
\end{align*}
In this case, we have three cases:
\begin{itemize}
    \item When $k = \Bigl\lceil\left( \frac{A^3}{B^6} \right)^\frac{1}{7}\Bigr\rceil$, we have
    \begin{align*}
        \sqrt{\frac{A}{k}} = \mathcal{O} \left(  A^\frac{2}{7} B^\frac{3}{7} \right), 
        \quad B k^{\frac{2}{3}} = \mathcal{O} \left(  A^\frac{2}{7} B^\frac{3}{7} \right),
        \quad C k^2 &= \mathcal{O} \left( C A^\frac{6}{7} B^\frac{-12}{7} \right).
    \end{align*}
    By using the above inequalities, $\tfrac{1}{T+1} \sum_{t=0}^T \mathbb{E} \| \nabla f (\bar{\vx}^{(t)}_\text{active}) \|^2$ is bounded from above by
    \begin{align*}
        \mathcal{O} \left( A^\frac{2}{7} B^\frac{3}{7} +  A^\frac{6}{7} B^\frac{-12}{7} C \right).
    \end{align*}
    \item When $k=n$, \yuki{we have
    \begin{align*}
        \sqrt{\frac{L r_0 (\sigma^2 + (1 - \frac{k-1}{n-1})\zeta^2)}{k T}} = \sqrt{\frac{L r_0 \sigma^2}{n T}}, 
        \quad B k^{\frac{2}{3}} \leq \mathcal{O} \left(  A^\frac{2}{7} B^\frac{3}{7} \right),
        \quad C k^2 \leq\mathcal{O} \left( C A^\frac{6}{7} B^\frac{-12}{7} \right),
    \end{align*}
    where we use $k \leq \Bigl\lceil\left( \frac{A^3}{B^6} \right)^\frac{1}{7}\Bigr\rceil$.}
    By using the above inequalities, $\tfrac{1}{T+1} \sum_{t=0}^T \mathbb{E} \| \nabla f (\bar{\vx}^{(t)}_\text{active}) \|^2$ is bounded from above by
    \begin{align*}
        \mathcal{O} \left( \sqrt{\frac{L r_0 \sigma^2}{n T}} + A^\frac{2}{7} B^\frac{3}{7} + A^\frac{6}{7} B^\frac{-12}{7} C \right).
    \end{align*}
\end{itemize}
By summarizing the above inequalities, we obtain the desired results.
\end{proof}

\begin{lemma}
\label{lemma:convergence_rate_with_exp}
Suppose that Assumptions \ref{assumption:smooth}, \ref{assumption:stochastic_noise}, \ref{assumption:heterogeneity}, and \ref{assumption:spectral_gap} hold, and $\{ \vx_i^{(0)} \}_{i=1}^n$ is initialized with the same parameter $\bar{\vx}^{(0)}$.
Then, if the active nodes are connected by an exponential graph, i.e., $p_k = \Omega (\log_2^{-1} k)$, there exists a step size $\eta > 0$ and the number of active nodes $k \in \{1, 2, \cdots, n\}$ that satisfies
\begin{align*}
    &\frac{1}{T+1} \sum_{t=0}^T \mathbb{E} \| \nabla f (\bar{\vx}^{(t)}_\text{active}) \|^2 \nonumber \\
    &\mathcal{O} \left( \sqrt{\frac{L r_0 \sigma^2}{n T}} + \left( \frac{L^2 r_0^2 (\sigma^2 + \zeta^2)}{T^2} \log_2 \left( \frac{T (\sigma^2 + \zeta^2)}{L r_0}\right)^\frac{1}{3} \right)^\frac{1}{3} + \frac{L r_0}{T} \log_2 \left( \frac{T (\sigma^2 + \zeta^2)}{L r_0}\right) +\frac{L r_0}{T} \right),
\end{align*}
where $\bar{\vx}^{(t)}_\text{active} \coloneqq \tfrac{1}{k} \sum_{v_i \in V^{(t)}_\text{active}} \vx_i^{(t)}$ and $r_0 \coloneqq f(\bar{\vx}^{(0)}) - f^\star$.
\end{lemma}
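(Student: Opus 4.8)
The plan is to mirror the proof of Lemma~\ref{lemma:convergence_rate_with_ring}, substituting the exponential-graph spectral gap into the general bound of Lemma~\ref{lemma:general_graph}. First I would invoke Lemma~\ref{lemma:general_graph} --- whose statement already absorbs the step-size constraints, since it asserts the existence of a suitable $\eta$ --- and specialize it to $p_k = \Omega(\log_2^{-1} k)$. This turns $1/p_k$ into $\mathcal{O}(\log_2 k)$ and uses $1-p_k \le 1$, so that $\tfrac{1}{T+1}\sum_{t=0}^{T}\mathbb{E}\|\nabla f(\bar{\vx}^{(t)}_\text{active})\|^2$ is bounded above by
\begin{align*}
\mathcal{O}\left(\sqrt{\frac{L r_0\big(\sigma^2 + (1-\tfrac{k-1}{n-1})\zeta^2\big)}{kT}} + \left(\frac{L^2 r_0^2(\sigma^2+\zeta^2)\log_2 k}{T^2}\right)^{1/3} + \frac{L r_0\log_2 k}{T}\right).
\end{align*}
As in the ring case I would set $A = \tfrac{L r_0(\sigma^2+\zeta^2)}{T}$, $B = \big(\tfrac{L^2 r_0^2(\sigma^2+\zeta^2)}{T^2}\big)^{1/3}$, and $C = \tfrac{L r_0}{T}$ (so $B^3 = AC$), bound the first term by $\sqrt{A/k}$ via $1-\tfrac{k-1}{n-1}\le 1$, and keep the sharper $\sqrt{L r_0\sigma^2/(nT)}$ for the $k=n$ case; the bound then reads $\mathcal{O}\big(\sqrt{A/k} + B(\log_2 k)^{1/3} + C\log_2 k\big)$.

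Next I would take $k$ as in the exponential-graph part of Theorem~\ref{theorem:convergence_rate_with_tuned_k}, namely $k=\max\{1,\min\{\lceil(A/C^2)^{1/3}\rceil,\lceil A/C^2\rceil,n\}\}$, where $A/C^2 = T(\sigma^2+\zeta^2)/(L r_0)$. The motivation is that balancing $\sqrt{A/k}$ against $B(\log_2 k)^{1/3}$ suggests $k\approx A/B^2=(A/C^2)^{1/3}$ and balancing it against $C\log_2 k$ suggests $k\approx A/C^2$; since $\log_2 k$ is slowly varying, one drops it when choosing $k$ and verifies afterwards. Then I would split into the same cases as in Lemma~\ref{lemma:convergence_rate_with_ring}: (i) if $r_0(\sigma^2+\zeta^2)=0$ then $k=1$ and the bound collapses to $\mathcal{O}(L r_0/T)$; (ii) otherwise, when the interior value $k=\lceil(A/C^2)^{1/3}\rceil$ is active, substituting it and using $\log_2 k=\Theta\big(\log_2(T(\sigma^2+\zeta^2)/(L r_0))\big)$ makes $\sqrt{A/k}=\mathcal{O}(B)$ dominated by $B(\log_2 k)^{1/3}$, and the three terms reproduce the last three terms of the bound in Theorem~\ref{theorem:convergence_rate_with_tuned_k}, with the $L r_0/T$ term absorbing both the ceiling slack and the low-noise regime $A\le C^2$ where $k$ rounds down to $1$; (iii) when the cap $k=n$ is active the heterogeneity factor $1-\tfrac{n-1}{n-1}$ vanishes, giving the first term $\sqrt{L r_0\sigma^2/(nT)}$, while $n\le\lceil(A/C^2)^{1/3}\rceil$ makes the increasing-in-$k$ terms no larger than at the interior choice and hence subsumed by the case-(ii) estimate. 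Taking the maximum over the cases yields the claimed rate.

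The step I expect to be the main obstacle is the bookkeeping around the $\log_2 k$ factors: unlike the polynomial dependence on $p_k^{-1}$ in the ring argument, the $k$ that exactly balances the three terms solves a transcendental equation, so one must argue that choosing $k$ while ignoring the slowly varying logarithm still reproduces the bound of Theorem~\ref{theorem:convergence_rate_with_tuned_k} up to constants --- in particular, checking in case (iii) that $\log_2 n$ is controlled by $\log_2(T(\sigma^2+\zeta^2)/(L r_0))$, and that the two ceiling operations do not inflate any term beyond the additive $L r_0/T$. All other steps are routine substitutions already carried out for the ring.
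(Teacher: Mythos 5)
Your proposal is correct and follows essentially the same route as the paper: invoke Lemma~\ref{lemma:general_graph} with $p_k=\Omega(\log_2^{-1}k)$, introduce the same $A,B,C$, choose $k=\max\{1,\min\{\lceil A/B^2\rceil,\lceil A/C^2\rceil,n\}\}$ (your $\lceil(A/C^2)^{1/3}\rceil$ equals $\lceil A/B^2\rceil$ since $B^3=AC$), and split into the degenerate, interior, and $k=n$ cases exactly as the paper does. The bookkeeping you flag (ceiling slack, the low-noise regime $A\le C^2$, and $\log_2 n$ controlled via $n\le\lceil A/B^2\rceil$) is handled in the paper at the same or lesser level of detail, so no gap beyond what the paper itself tolerates.
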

\begin{proof}
From Lemma \ref{lemma:general_graph}, we obtain
\begin{align*}
    &\frac{1}{T+1} \sum_{t=0}^T \mathbb{E} \| \nabla f (\bar{\vx}^{(t)}_\text{active}) \|^2 \nonumber \\
    &\leq \mathcal{O} \left( \sqrt{\frac{L r_0 (\sigma^2 + (1 - \frac{k-1}{n-1})\zeta^2)}{k T}} + \left( \frac{L^2 r_0^2 (\sigma^2 + \zeta^2) \log_2(k)}{T^2}\right)^\frac{1}{3} + \frac{L r_0 \log_2(k)}{T} \right).
\end{align*}
For simplicity, let $A$, $B$, and $C$ denote as follows:
\begin{align*}
    A = \frac{L r_0 (\sigma^2 + \zeta^2)}{T}, \qquad
    B = \left( \frac{L^2 r_0^2 (\sigma^2 + \zeta^2)}{T^2}\right)^\frac{1}{3}, \qquad
    C = \frac{L r_0}{T}.
\end{align*}
Using these notations, we obtain
\begin{align*}
    \frac{1}{T+1} \sum_{t=0}^T \mathbb{E} \| \nabla f (\bar{\vx}^{(t)}_\text{active}) \|^2 
    \leq \mathcal{O} \left( \sqrt{\frac{A}{k}} + B \log_2^{\frac{1}{3}}(k) + C \log_2(k) \right).
\end{align*}
We choose $k$ as follows:
\begin{align*}
    k = \max \left\{ 1, \min \left\{ \Bigl\lceil \frac{A}{B^2} \Bigr\rceil, \Bigl\lceil \frac{A}{C^2} \Bigr\rceil, n \right\} \right\}.
\end{align*}
When $r_0 (\sigma^2 + \zeta^2) = 0$, it holds that $k = 1$, $A=0$, and $B=0$.
In this case, $\tfrac{1}{T+1} \sum_{t=0}^T \mathbb{E} \| \nabla f (\bar{\vx}^{(t)}_\text{active}) \|^2$ is bounded from above by
\begin{align*}
    \mathcal{O} \left( \frac{L r_0}{T} \right).
\end{align*}

When $r_0 (\sigma^2 + \zeta^2) > 0$, it holds that
\begin{align*}
    k = \min \left\{ \Bigl\lceil \frac{A}{B^2} \Bigr\rceil, \Bigl\lceil \frac{A}{C^2} \Bigr\rceil, n \right\}.
\end{align*}
In this case, we have three cases:
\begin{itemize}
    \item When $k = \Bigl\lceil \frac{A}{B^2} \Bigr\rceil$, $\tfrac{1}{T+1} \sum_{t=0}^T \mathbb{E} \| \nabla f (\bar{\vx}^{(t)}_\text{active}) \|^2$ is bounded from above by
    \begin{align*}
        \mathcal{O} \left( B \log_2^\frac{1}{3} \left( \frac{A}{B^2} \right) + C \log_2 \left( \frac{A}{C^2} \right) \right).
    \end{align*}
    \item When $k = \Bigl\lceil \frac{A}{C^2} \Bigr\rceil$, $\tfrac{1}{T+1} \sum_{t=0}^T \mathbb{E} \| \nabla f (\bar{\vx}^{(t)}_\text{active}) \|^2$ is bounded from above by
    \begin{align*}
        \mathcal{O} \left( B \log_2^\frac{1}{3} \left( \frac{A}{B^2} \right) + C \log_2 \left( \frac{A}{C^2} \right) \right).
    \end{align*}
    \item When $k=n$, $\tfrac{1}{T+1} \sum_{t=0}^T \mathbb{E} \| \nabla f (\bar{\vx}^{(t)}_\text{active}) \|^2$ is bounded from above by
    \begin{align*}
        \mathcal{O} \left( \sqrt{\frac{L r_0 \sigma^2}{n T}} + B \log_2^\frac{1}{3} \left( \frac{A}{B^2} \right) + C \log_2 \left( \frac{A}{C^2} \right) \right).
    \end{align*}
\end{itemize}
By summarizing the above inequalities, we obtain the desired result.
\end{proof}

\section{Proof of Theorem \ref{theorem:parameter_free}}
\label{sec:proof_of_parameter_free}

\setcounter{lemma}{0}
\begin{lemma}
\label{lemma:sparse_grid_search}
For any $k^\star < n$, there exists $k \in \{ 1, 2, 4, 8, \cdots, 2^{\floor{\log_2(n+1)} - 1}\}$ that satisfies $\tfrac{k^\star}{4} < k \leq k^\star$.
Furthermore, it holds that $\sum_{i=0}^{{\floor{\log_2(n+1)} - 1}} 2^i \leq n$.
\end{lemma}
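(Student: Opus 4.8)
The plan is to abbreviate $m := \floor{\log_2(n+1)}$, so that the candidate set is exactly $\{2^0, 2^1, \ldots, 2^{m-1}\}$, and to reduce both assertions to elementary inequalities between powers of two. The only property of $m$ I will use is its defining double inequality $2^m \le n+1 < 2^{m+1}$.

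For the first assertion, fix a positive integer $k^\star$ with $k^\star \le n-1$ and set $j := \floor{\log_2 k^\star}$, so that $2^j \le k^\star < 2^{j+1}$. From $2^j \le k^\star \le n-1 < 2^{m+1}$ I get $j \le m$, so $\min\{j, m-1\}$ is a legitimate index and I would take $k := 2^{\min\{j,\, m-1\}}$, which belongs to the candidate set by construction. The bound $k \le k^\star$ is immediate since $k \le 2^j \le k^\star$. For the lower bound $\tfrac{k^\star}{4} < k$ I split on whether $j \le m-1$: if so, then $k = 2^j$ and $k^\star < 2^{j+1} = 2k$, hence $k > \tfrac{k^\star}{2} > \tfrac{k^\star}{4}$; otherwise $j = m$, so $k = 2^{m-1}$ and $k^\star \le n-1 < n+1 < 2^{m+1} = 4k$, hence $\tfrac{k^\star}{4} < k$. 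These two cases are exhaustive, which finishes the first assertion.

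For the second assertion I would simply evaluate the geometric sum $\sum_{i=0}^{m-1} 2^i = 2^m - 1$ and use $2^m \le n+1$ to conclude that it is at most $n$.

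There is no real obstacle here; the one place needing care is the boundary case $j = m$, where one must keep track of the floor in $m = \floor{\log_2(n+1)}$ closely enough to extract the strict inequality $k^\star < 2^{m+1}$. This is precisely where the hypothesis $k^\star < n$ (with strict inequality) is consumed, and it also explains why the constant must be $4$ rather than $2$: in that case the ratio $k^\star/k$ can be made nearly $4$ once $n$ is large.
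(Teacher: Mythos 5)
Your proof is correct and takes essentially the same route as the paper's: your case split $j \le m-1$ versus $j = m$ (with $j = \lfloor \log_2 k^\star \rfloor$) coincides with the paper's split $k^\star < 2^{\lfloor \log_2(n+1)\rfloor}$ versus $k^\star \ge 2^{\lfloor \log_2(n+1)\rfloor}$, and your choice $k = 2^{\min\{j,\,m-1\}}$ just makes the paper's implicit witness explicit, while the geometric-sum bound $2^m - 1 \le n$ handles the second assertion identically. (Only your aside is slightly off: the strict hypothesis $k^\star < n$ is not really needed in the boundary case, since $n < n+1 < 2^{m+1}$ already gives $k^\star/4 < 2^{m-1}$ even for $k^\star = n$.)
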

\begin{proof}
We define $\mathcal{K} \coloneqq \{ 1, 2, 4, 8, \cdots, 2^{\floor{\log_2(n+1)} - 1}\}$.
We have two cases:
\begin{itemize}
    \item When $1 \leq k^\star < 2^{\floor{\log_2(n+1)}}$, there exists $k \in \mathcal{K}$ that satisfies $k \leq k^\star < 2 k$.
    \item When $2^{\floor{\log_2(n+1)}} \leq k^\star < n$, it holds that $k < k^\star$ for all $k \in \mathcal{K}$. Then, we have
    \begin{align*}
        k^\star < n < 4 \times 2^{\floor{\log_2 (n+1)} - 1}.
    \end{align*}
    Thus, there exists $k \in \mathcal{K}$ that satisfies $k < k^\star < 4 k$.
\end{itemize}
By combining the above two cases, we obtain the desired result.
\end{proof}

\setcounter{lemma}{10}
\begin{lemma}
Suppose that Assumptions \ref{assumption:lower_bound}, \ref{assumption:smooth}, \ref{assumption:stochastic_noise}, \ref{assumption:heterogeneity}, and \ref{assumption:spectral_gap} hold.
Let $\{ \{ \vx_{k,i}^{(t)} \}_{v_i \in V^{(t)}_\text{active}} \}_{t=0}^T$ denote the parameters of active nodes generated by Alg.~\ref{algorithm:proposed_method} when the number of active nodes is set to $k$ and define $\mathcal{K} \coloneqq \{1, 2, 2^2, 2^3, \cdots, 2^{\floor{\log_2 (n+1)}-1}, n \}$. 
Then, suppose that the parameters are initialized with the same parameter $\bar{\vx}^{(0)}$.

\textbf{Ring:} If the active nodes are connected by a ring, i.e., $p_k = \Omega(k^{-2})$, there exists $\eta$ such that $\min_{k \in \mathcal{K}} \left( \tfrac{1}{T+1} \sum_{t=0}^{T} \mathbb{E} \| \nabla f (\bar{\vx}_{\text{active},k}^{(t)}) \|^2 \right)$ is bounded from above by
\begin{align*}
    \mathcal{O} \left( \sqrt{\frac{L r_0 \sigma^2}{n T}} + \left( \frac{L r_0 (\sigma^2 + \zeta^2)^{\frac{3}{4}}}{T }\right)^\frac{4}{7} + \left( \frac{L r_0 (\sigma^2 + \zeta^2)^\frac{2}{3}}{T} \right)^\frac{3}{5} + \frac{L r_0}{T} \right),
\end{align*}
where $r_0 \coloneqq f (\bar{\vx}_{\text{active}, k}^{(0)}) - f^\star$ and $\bar{\vx}_{\text{active}, k}^{(t)} \coloneqq \tfrac{1}{k} \sum_{v_i \in V^{(t)}_\text{active}} \vx_{k, i}$.

\textbf{Exp. Graph:} If the active nodes are connected by an exponential graph, i.e., $p_k = \Omega(\log_2^{-1} k)$, there exists $\eta$ such that $\min_{k \in \mathcal{K}} \left( \tfrac{1}{T+1} \sum_{t=0}^{T} \mathbb{E} \| \nabla f (\bar{\vx}_{\text{active}, k}^{(t)}) \|^2 \right)$ is bounded from above by
\begin{align*}
    \mathcal{O} \left( \sqrt{\frac{L r_0 \sigma^2}{n T}} + \left( \frac{L^2 r_0^2 (\sigma^2 + \zeta^2)}{T^2} \log_2 \left( \frac{T (\sigma^2 + \zeta^2)}{L r_0}\right)^\frac{1}{3} \right)^\frac{1}{3} + \frac{L r_0}{T} \log_2 \left( \frac{T (\sigma^2 + \zeta^2)}{L r_0}\right) +\frac{L r_0}{T} \right).
\end{align*}
\end{lemma}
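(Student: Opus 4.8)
The plan is to leverage that Algorithm~\ref{algorithm:proposed_method2} outputs the best of finitely many independent runs, so that the quantity $\min_{k\in\mathcal{K}}\big(\tfrac{1}{T+1}\sum_{t=0}^{T}\mathbb{E}\|\nabla f(\bar{\vx}_{\text{active},k}^{(t)})\|^2\big)$ to be bounded is, for \emph{any} single $k_0\in\mathcal{K}$ of my choosing, at most the corresponding time-averaged expected gradient norm of the length-$T$ run of Algorithm~\ref{algorithm:simple_version_proposed_method} that uses $k_0$ active nodes. Each such run is started from the common point $\bar{\vx}^{(0)}$, so Theorem~\ref{theorem:convergence_rate} (equivalently Lemma~\ref{lemma:general_graph}) applies to it verbatim with the step size selected there, yielding the bound~\eqref{eq:rate_of_teleportation_for_arbitrary_k} evaluated at $k=k_0$. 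Hence the whole proof reduces to picking a good $k_0\in\mathcal{K}$ and showing that~\eqref{eq:rate_of_teleportation_for_arbitrary_k} at $k_0$ is, up to constants, the rate that Theorem~\ref{theorem:convergence_rate_with_tuned_k} obtains at the true optimizer $k^\star$.

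To choose $k_0$ I would split on whether $k^\star$ --- the number of active nodes picked in Theorem~\ref{theorem:convergence_rate_with_tuned_k}, i.e.\ in Lemmas~\ref{lemma:convergence_rate_with_ring} and~\ref{lemma:convergence_rate_with_exp} --- equals $n$. If $k^\star=n$, then $n\in\mathcal{K}$, so I take $k_0=n$ and~\eqref{eq:rate_of_teleportation_for_arbitrary_k} at $k_0$ is exactly the bound produced in the ``$k=n$'' branch of the proofs of Lemmas~\ref{lemma:convergence_rate_with_ring}/\ref{lemma:convergence_rate_with_exp}; nothing further is needed. If $k^\star<n$, I invoke Lemma~\ref{lemma:grid_search} (restated as Lemma~\ref{lemma:sparse_grid_search}) to get $k_0\in\{1,2,4,\dots,2^{\lfloor\log_2(n+1)\rfloor-1}\}\subseteq\mathcal{K}$ with $k^\star/4<k_0\le k^\star$. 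The degenerate subcase $r_0(\sigma^2+\zeta^2)=0$ forces $k^\star=1\in\mathcal{K}$ and is handled as in Lemma~\ref{lemma:convergence_rate_with_ring}, giving $\mathcal{O}(Lr_0/T)$.

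The remaining step is to verify that replacing $k^\star$ by $k_0$ in~\eqref{eq:rate_of_teleportation_for_arbitrary_k} costs only a constant. After substituting $p_k=\Omega(k^{-2})$ (ring) or $p_k=\Omega(\log_2^{-1}k)$ (exponential graph), the bound is a sum of one term that is nonincreasing in $k$, namely $\sqrt{Lr_0(\sigma^2+(1-\tfrac{k-1}{n-1})\zeta^2)/(kT)}$, and two terms that are nondecreasing in $k$ --- for the ring these are $(L^2r_0^2(\sigma^2+\zeta^2)k^2/T^2)^{1/3}$ and $Lr_0k^2/T$, and for the exponential graph the analogous expressions with $\log_2(k)$ in place of $k^2$. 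Since $k_0\le k^\star$, the two nondecreasing terms at $k_0$ are at most their values at $k^\star$; since $k_0>k^\star/4$ and the numerator of the first term is at most $\sigma^2+\zeta^2$, that term at $k_0$ is at most $2\sqrt{Lr_0(\sigma^2+\zeta^2)/(k^\star T)}$, which for $k^\star<n$ is itself dominated by the second term (this is precisely why Theorem~\ref{theorem:convergence_rate_with_tuned_k} keeps only $\sqrt{Lr_0\sigma^2/(nT)}$ in the first slot). Thus~\eqref{eq:rate_of_teleportation_for_arbitrary_k} at $k_0$ is within a constant factor of its value at $k^\star$, and by the computations already carried out in the proofs of Lemmas~\ref{lemma:convergence_rate_with_ring} and~\ref{lemma:convergence_rate_with_exp} this equals the stated bound; taking the minimum over $\mathcal{K}$ only improves it.

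I expect the only real (and still minor) obstacle to be the bookkeeping in this last paragraph: one must check that a factor-$4$ change in $k$ inflates no term, which is immediate for the power-law terms and for the monotone $\log_2$ terms but deserves to be written out, and one must make sure every edge case --- $k^\star=1$, $k^\star=n$, and the $\lceil\cdot\rceil=1$ situations appearing inside Lemmas~\ref{lemma:convergence_rate_with_ring} and~\ref{lemma:convergence_rate_with_exp} --- is absorbed by the presence of both $1$ and $n$ in $\mathcal{K}$. No analytic estimate beyond Theorems~\ref{theorem:convergence_rate},~\ref{theorem:convergence_rate_with_tuned_k} and Lemma~\ref{lemma:grid_search} is needed.
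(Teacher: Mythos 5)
Your proposal is correct and matches the paper's own argument: the paper proves this lemma in one line by combining Lemma \ref{lemma:sparse_grid_search} with Lemmas \ref{lemma:convergence_rate_with_ring} and \ref{lemma:convergence_rate_with_exp}, which is exactly your route (bound the minimum over $\mathcal{K}$ by the run at a $k_0\in\mathcal{K}$ that either equals $n$ or satisfies $k^\star/4<k_0\le k^\star$, then absorb the factor-$4$ change into the constants via monotonicity of the terms in $k$). Your write-up merely makes explicit the constant-factor bookkeeping that the paper leaves implicit, so no gap.
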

\begin{proof}
By combining Lemmas \ref{lemma:sparse_grid_search}, \ref{lemma:convergence_rate_with_ring}, and \ref{lemma:convergence_rate_with_exp}, we obtain the statement.
\end{proof}

\newpage
\section{Convergence Rate of Decentralized SGD with Various Topologies}
\label{sec:convergence_rate_of_decentralizedSGD}

\begin{table}[h!]
\centering
\caption{Convergence rates of DSGD over various topologies.}
\label{table:convergence_rate_of_various_topologies}
\vskip - 0.1 in
\resizebox{\linewidth}{!}{
\begin{tabular}{lc}
\toprule
\textbf{Topology}    &  \textbf{Convergence Rate}  \\
\midrule
Ring \citep{nedic2018network}
                    & $\mathcal{O} \left( \sqrt{\frac{L r_0 \sigma^2}{n T}}
                        + \left( \frac{L^2 r_0^2 n^2 (\sigma^2 + n^2 \zeta^2)}{T^2} \left( 1 - \frac{1}{n^2} \right) \right)^\frac{1}{3} 
                        + \frac{L r_0 n^2}{T} \right)$ \\
Torus \citep{nedic2018network}              
                    & $\mathcal{O} \left( \sqrt{\frac{L r_0 \sigma^2}{n T}}
                        + \left( \frac{L^2 r_0^2 n (\sigma^2 + n \zeta^2)}{T^2} \left( 1 - \frac{1}{n} \right)\right)^\frac{1}{3} 
                        + \frac{L r_0 n}{T} \right)$ \\
Exponential Graph \citep{ying2021exponential}                
                    & $\mathcal{O} \left( \sqrt{\frac{L r_0 \sigma^2}{n T}}
                        + \left( \frac{L^2 r_0^2 \log_3 (n) (\sigma^2 + \log_2 (n) \zeta^2)}{T^2} \left( 1 - \frac{1}{\log_2 (n)} \right) \right)^\frac{1}{3} 
                        + \frac{L r_0 \log_2 (n)}{T} \right)$ \\
Base-2 Graph \citep{takezawa2023beyond}
                    & $\mathcal{O} \left( \sqrt{\frac{L r_0 \sigma^2}{n T}}
                        + \left( \frac{L^2 r_0^2 \log_2 (n) (\sigma^2 + \log_2 (n) \zeta^2)}{T^2} \right)^\frac{1}{3} 
                        + \frac{L r_0 \log_2 (n)}{T} \right)$ \\
\bottomrule
\end{tabular}}
\end{table}

\section{Convergence Rate of Decentralized SGD with Client Sampling}
\label{section:client_sampling}
\begin{proposition}[\citet{liu2022decentralized}]
\label{theorem:client_sampling}
Suppose that Assumptions \ref{assumption:lower_bound}, \ref{assumption:smooth}, \ref{assumption:stochastic_noise}, \ref{assumption:heterogeneity}, and \ref{assumption:spectral_gap_with_n_nodes} hold.
Let $k$ be the number of active nodes and $\{ \{ \vx_i^{(t)} \}_{i=1}^n \}_{t=0}^T$ denote the parameters generated by Decentralized SGD with client sampling. 
Then, there exists the step size $\eta$ that satisfies:
\begin{align*}
    \frac{1}{T+1} \sum_{t=0}^T \mathbb{E} \| \nabla f (\bar{\vx}^{(t)})\|^2
    \leq \mathcal{O} \left( \sqrt{\frac{L r_0 (\sigma^2 + (1 - \frac{k-1}{n-1})\zeta^2)}{k T}} 
    + \left( \frac{n}{k} \frac{L r_0 (\sigma \sqrt{p_n} + \zeta^2)}{T p_n}\right)^\frac{2}{3}
    + \frac{L r_0}{T p_n} \right),
\end{align*}
where $r_0 \coloneqq f(\bar{\vx}^{(0)}) - f^\star$ and $\bar{\vx}^{(t)} \coloneqq \tfrac{1}{n} \sum_{i=1}^n \vx_i^{(t)}$.
\end{proposition}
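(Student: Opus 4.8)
The plan is to reconstruct the two-recursion template standard for Decentralized SGD — a descent inequality for the global average $\bar{\vx}^{(t)} \coloneqq \tfrac{1}{n}\sum_{i=1}^n \vx_i^{(t)}$ together with a contraction inequality for the consensus error $\Xi^{(t)} \coloneqq \tfrac{1}{n}\sum_{i=1}^n \mathbb{E}\|\vx_i^{(t)} - \bar{\vx}^{(t)}\|^2$ — and to adapt each recursion to the client-sampling dynamics. Writing $S^{(t)}$ for the uniformly sampled active set of size $k$ and using the importance-weighted stochastic gradient $\tilde{g}_j^{(t)} \coloneqq \tfrac{n}{k}\mathbf{1}_{j\in S^{(t)}}\nabla F_j(\vx_j^{(t)};\xi_j^{(t)})$ (so that $\mathbb{E}[\tilde{g}_j^{(t)}] = \nabla f_j(\vx_j^{(t)})$), the gossip step $\mX^{(t+1)} = (\mX^{(t)} - \eta\,\tilde{\mG}^{(t)})\mW^\top$ preserves the mean because $\mW$ is doubly stochastic, yielding $\bar{\vx}^{(t+1)} = \bar{\vx}^{(t)} - \tfrac{\eta}{k}\sum_{j\in S^{(t)}}\nabla F_j(\vx_j^{(t)};\xi_j^{(t)})$. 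I would build the whole argument around this identity.

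For the descent inequality I would apply $L$-smoothness to $\bar{\vx}^{(t+1)}$ and take expectation over both the sampling and the stochastic noise. The expected step is the full gradient $\tfrac{1}{n}\sum_j \nabla f_j(\vx_j^{(t)})$, which I split into the desired $-\nabla f(\bar{\vx}^{(t)})$ plus a consensus-drift remainder controlled by $L^2\Xi^{(t)}$. The second-moment term is where the client-sampling correction appears: conditioning on the stochastic noise, the variance of $\tfrac1k\sum_{j\in S^{(t)}}\nabla f_j(\vx_j^{(t)})$ over sampling $k$ of $n$ nodes without replacement produces the finite-population factor $\tfrac1k\bigl(1-\tfrac{k-1}{n-1}\bigr)\zeta^2$, and the stochastic part contributes $\tfrac{\sigma^2}{k}$. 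This reproduces the first (statistical) term of the bound.

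For the consensus recursion I would isolate the non-averaged part after gossip, which contracts by $(1-p_n)$, and then bound the injected gradient variance $\mathbb{E}\|\tilde{\mG}^{(t)} - \mathbb{E}\tilde{\mG}^{(t)}\|_F^2$. The crucial difference from ordinary DSGD is the importance weight $\tfrac{n}{k}$: since $\mathbb{E}_S\|\tilde{g}_j^{(t)}\|^2 = \tfrac{n}{k}\,\mathbb{E}_\xi\|\nabla F_j(\vx_j^{(t)};\xi)\|^2$, the per-node second moment — hence the noise pumped into the consensus distance each round — is inflated by a factor $\tfrac{n}{k}$. Carrying this through a geometric-sum unrolling gives a time-averaged consensus bound of the shape $\tfrac{1}{T+1}\sum_t \Xi^{(t)} \lesssim \tfrac{(1-p_n)\eta^2}{p_n^2}\cdot\tfrac{1}{T+1}\sum_t \mathbb{E}\|\nabla f(\bar{\vx}^{(t)})\|^2 + \tfrac{(1-p_n)}{p_n}\cdot\tfrac{n}{k}(\text{noise})\,\eta^2$, where the $\tfrac{n}{k}$ factor is exactly what surfaces in the middle term of the proposition.

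The final step substitutes the consensus bound into the descent inequality, telescopes over $t=0,\dots,T$ (using $\Xi^{(0)}=0$ from identical initialization and $f(\bar{\vx}^{(0)})-f^\star = r_0$), and optimizes $\eta$ over the constraints produced by the $\tfrac{r_0}{T\eta}$, consensus, and statistical terms, as in the case analysis of Lemma \ref{lemma:general_graph}. The main obstacle I anticipate is forcing the middle term into the stated shape $\bigl(\tfrac{n}{k}\tfrac{L r_0(\sigma\sqrt{p_n}+\zeta^2)}{T p_n}\bigr)^{2/3}$: the $2/3$ exponent (rather than the $1/3$ of vanilla DSGD) and the asymmetric appearance of $\sigma\sqrt{p_n}$ versus $\zeta^2$ indicate that the stochastic and heterogeneity contributions to the consensus noise must be tracked \emph{separately} and balanced against the step-size constraints rather than lumped into a single $\sigma^2+\zeta^2$. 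The delicate bookkeeping is thus in distinguishing how the zero-mean sampled noise (which picks up the extra $\sqrt{p_n}$ factor through the contraction) and the bias-like heterogeneity term each propagate through the consensus recursion before the step size is tuned.
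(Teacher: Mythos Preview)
The paper does not contain a proof of this proposition. It is stated with the attribution \texttt{[\textbackslash citet\{liu2022decentralized\}]} and is quoted verbatim from that reference as background for the comparison in Section~\ref{section:client_sampling}; the only content the paper adds is the one-line Remark that the rate deteriorates monotonically as $k$ decreases. There is therefore nothing in the paper to compare your proposal against.

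That said, your outline follows the standard two-recursion template (descent on $\bar\vx^{(t)}$ plus a contracting consensus error) and correctly identifies the two places where client sampling changes the analysis: the finite-population variance factor $\tfrac{1}{k}\bigl(1-\tfrac{k-1}{n-1}\bigr)$ in the descent step, and the $\tfrac{n}{k}$ inflation of the per-node second moment that feeds the consensus recursion. Your closing caveat is also well placed: reproducing the exact middle term $\bigl(\tfrac{n}{k}\,\tfrac{L r_0(\sigma\sqrt{p_n}+\zeta^2)}{T p_n}\bigr)^{2/3}$ with its $2/3$ exponent and the asymmetric $\sigma\sqrt{p_n}$ versus $\zeta^2$ requires the finer bookkeeping done in the original reference, not merely the lumped $\sigma^2+\zeta^2$ treatment used in this paper's own Lemmas~\ref{lemma:consensus_distance}--\ref{lemma:general_graph}. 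If you want to see that step carried out, you will have to consult \citet{liu2022decentralized} directly.
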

\begin{remark}
The convergence rate in Proposition \ref{theorem:client_sampling} consistently deteriorates as $k$ decreases.
\end{remark}
Unlike \proposed{}, the convergence rate shown in Proposition \ref{theorem:client_sampling} depends on $p_n$, and the second and third terms degrade as $n$ increases.

\newpage
\section{Additional Experiment}
\label{sec:additional_experiment}
\begin{figure}[H]
\vskip - 0.1 in
\centering
\includegraphics[width=\linewidth]{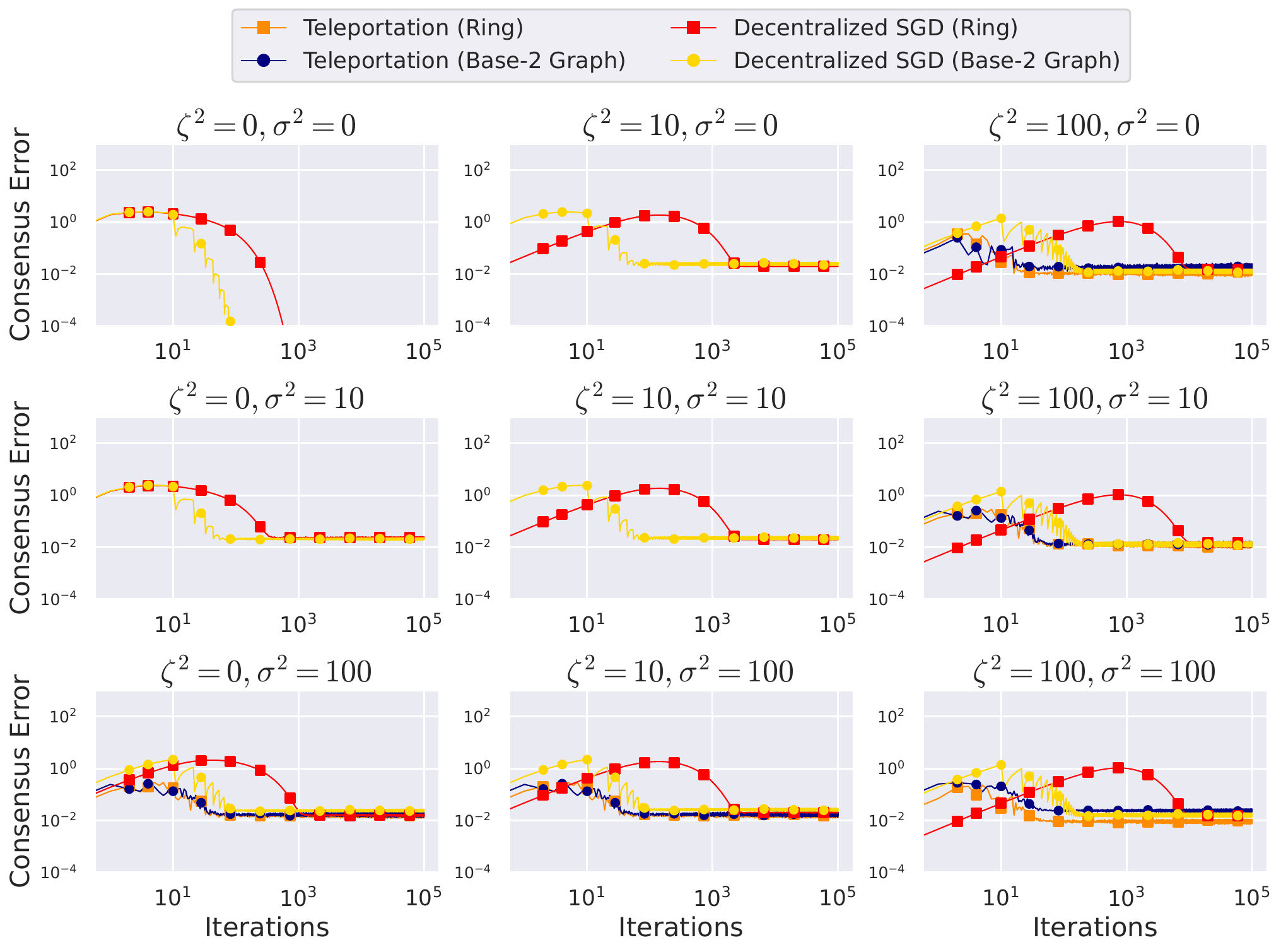}
\caption{The behavior of consensus error $\frac{1}{k} \sum_{v_i \in V^{(t)}_\text{active}} \| \vx_i^{(t)} - \bar{\vx}^{(t)}_{\text{active}} \|^2$ for different stochastic noise $\sigma^2$ and heterogeneity $\zeta^2$. The experimental setup was the same as in Fig.~\ref{fig:synthetic_experiments}. Note that when the optimal number of active nodes $k$ is one, we did not plot the consensus error. See Table~\ref{table:number_of_active_nodes}.}
\label{fig:synthetic_consensus}
\end{figure}

\begin{figure}[H]
\vskip - 0.1 in
\centering
\includegraphics[height=4.1cm]{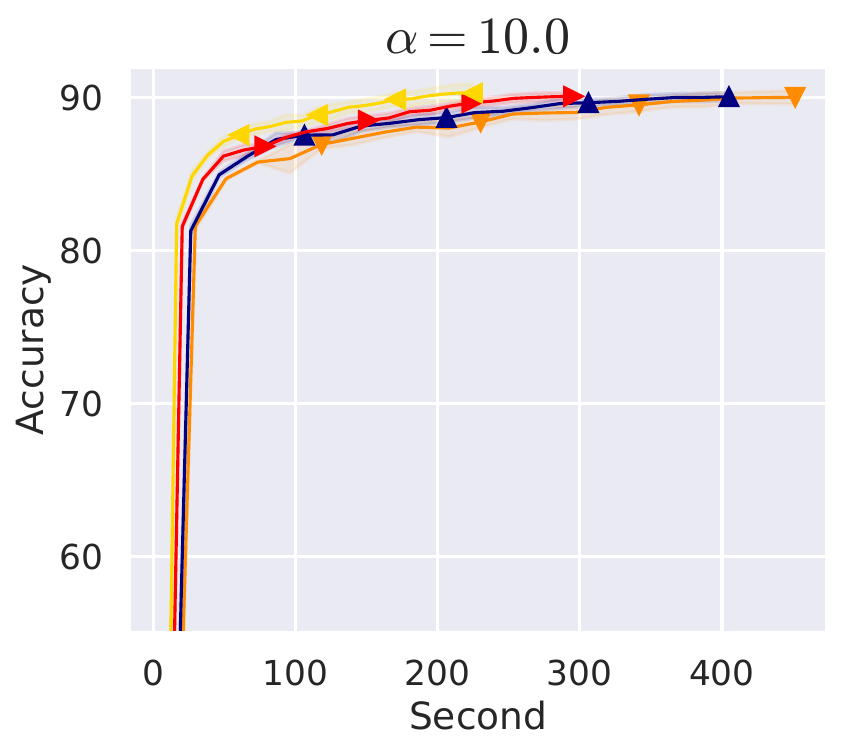}
\includegraphics[height=4.1cm]{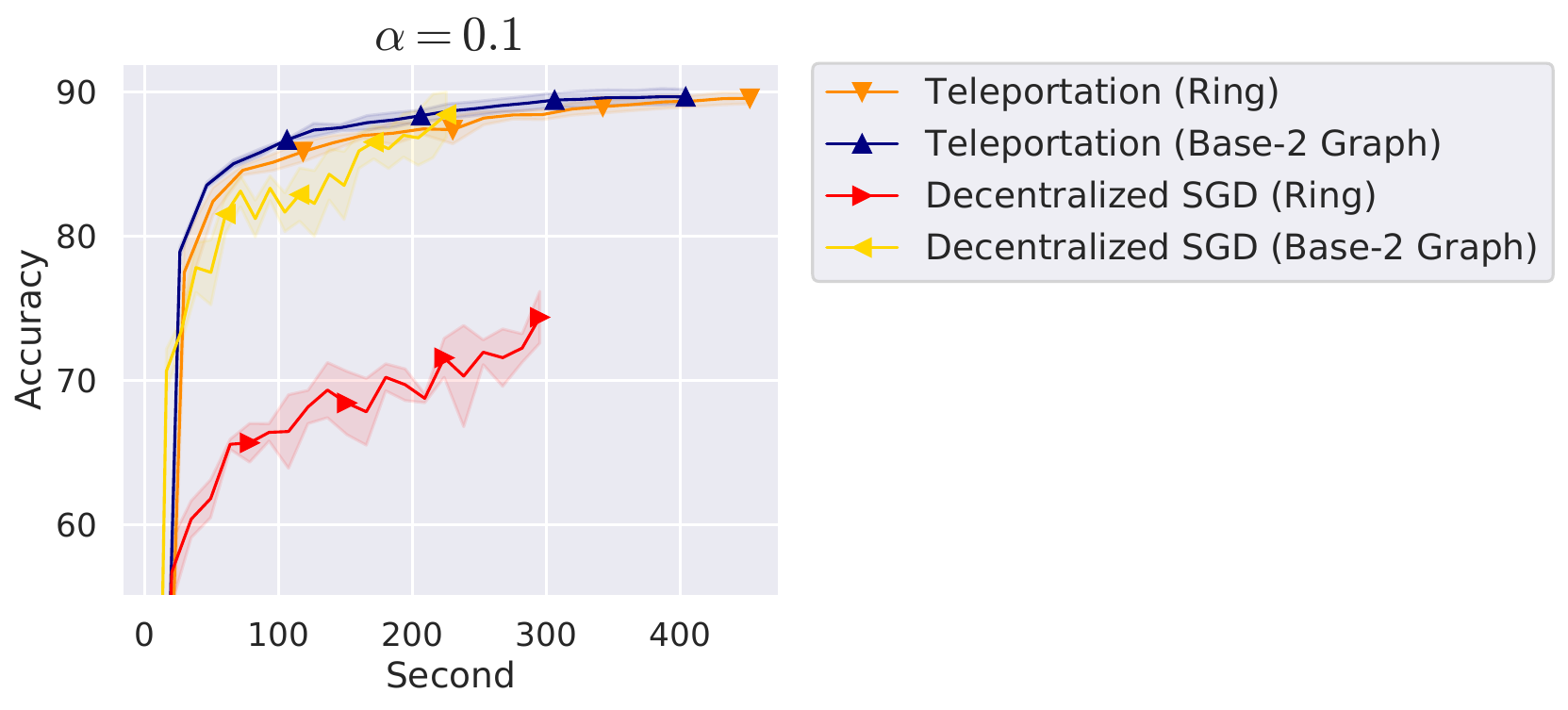}
\caption{Test accuracy of TELEPORTATION and Decentralized SGD under heterogeneous networks with $\tau = 0$. The experimental setup was the same as in Fig.~\ref{fig:heterogeneous_networks}.}
\label{fig:heterogeneous_networks_no_delay}
\end{figure}

\newpage
\section{Experimental Setup}
\label{sec:hyperparameter_setting}

\begin{table}[h]
    \centering
    \caption{Experimental setups for Fig.~\ref{fig:synthetic_experiments}.}
    \vskip - 0.1 in
    \begin{tabular}{lc}
    \toprule
         Step size & Grid search over $\{0.1, 0.075, 0.05, 0.025, 0.01, \cdots, 0.0001 \}$ \\
         Computational resources & AMD Epyc 7702 CPU or Intel Xeon Gold 6230 CPU \\
    \bottomrule
    \end{tabular}
    \vskip - 0.1 in
\end{table}

\begin{table}[h]
    \centering
    \caption{Experimental setups for Fashion MNIST in Fig.~\ref{fig:neural_networks}.}
    \vskip - 0.1 in
    \begin{tabular}{lc}
    \toprule
         Model      & LeNet \\
         Step size  & Grid search over $\{0.1, 0.01, 0.001 \}$ \\
         Batch size & $32$ \\
         Momentum   & $0.9$ \\
         Epoch      & $200$ \\
         Computational resources & Titan $\times$ 8 \\
    \bottomrule
    \end{tabular}
    \vskip - 0.1 in
\end{table}

\begin{table}[H]
    \centering
    \caption{Experimental setups for CIFAR-10 in Fig.~\ref{fig:neural_networks}.}
    \vskip - 0.1 in
    \begin{tabular}{lc}
    \toprule
         Model      & VGG \\
         Step size  & Grid search over $\{0.1, 0.01, 0.001 \}$ \\
         Scheduler  & Cosine decay \\
         Batch size & $32$ \\
         Momentum   & $0.9$ \\
         Epoch      & $500$ \\ 
         Computational resources & A6000 $\times$ 8 or RTX 3090 $\times$ 8 \\
    \bottomrule
    \end{tabular}
    \vskip - 0.1 in
\end{table}

\begin{table}[H]
    \centering
    \caption{Experimental setups for Figs.~\ref{fig:heterogeneous_networks} and \ref{fig:heterogeneous_networks_no_delay}.}
    \vskip - 0.1 in
    \begin{tabular}{lc}
    \toprule
         Model      & LeNet \\
         Step size  & Grid search over $\{0.1, 0.01, 0.001 \}$ \\
         Batch size & $32$ \\
         Momentum   & $0.9$ \\
         Epoch      & $200$ \\
         Computational resources & A6000 $\times$ 8 \\
    \bottomrule
    \end{tabular}
    \vskip - 0.1 in
\end{table}

\end{document}